\documentclass[twoside,11pt]{article}

%

%
%
%

\usepackage{amsmath}
\usepackage{amssymb}
\usepackage{amsthm}  
\usepackage[colorlinks=false,allbordercolors={1 1 1}]{hyperref}  
\usepackage[capitalise,nameinlink]{cleveref}   

\usepackage[nohyperref, preprint]{jmlr2e}

\usepackage{subcaption}  
\usepackage{caption}
\captionsetup{format=hang}


\usepackage[utf8]{inputenc}
\usepackage[T1]{fontenc}
\usepackage[USenglish]{babel}


\usepackage{mathtools}
\usepackage{comment}
\usepackage[shortlabels]{enumitem}
\usepackage{csquotes}
\usepackage{tikz}
\usepackage{tikz-qtree}
\usetikzlibrary{trees,positioning,shapes}
\usetikzlibrary{arrows.meta}
\usepackage{pgfplots}
\usepackage{xcolor,colortbl}
\usepackage{algpseudocode}
\usepackage{algorithm}
\usepackage{thmtools}
\usepackage{moresize}
\usepackage{longtable}
\setcounter{LTchunksize}{200}  
\usepackage[normalem]{ulem}  

\usepackage[section]{placeins}

\usepackage{chngcntr}
\usepackage{stmaryrd}
\usepackage{dsfont}  

\usepackage{marginnote}

\usepackage[retainorgcmds]{IEEEtrantools}

\pgfplotsset{compat=1.11}

\newlength\Origarrayrulewidth









\newenvironment{appendixenv}{%
\clearpage
\appendix
\numberwithin{theorem}{section}
\numberwithin{example}{section}
\numberwithin{assumption}{section}
\counterwithin{figure}{section}
\counterwithin{table}{section}
\counterwithin{algorithm}{section}}{}

\newcommand{\ifnotinthesis}[1]{#1}
\newcommand{\ifinthesis}[1]{}




\renewenvironment{proof}[1][\proofname]{\par
  \pushQED{\qed}%
  \normalfont\noindent{\bf #1\ }
}{%
  \popQED
}
\providecommand{\proofname}{Proof}

\newcommand{\bbE}{\mathbb{E}}

\newcommand{\bbH}{\mathbb{H}}

\newcommand{\bbN}{\mathbb{N}}

\newcommand{\bbR}{\mathbb{R}}

\newcommand{\calD}{\mathcal{D}}

\newcommand{\calL}{\mathcal{L}}

\newcommand{\calN}{\mathcal{N}}
\newcommand{\calO}{\mathcal{O}}

\newcommand{\calU}{\mathcal{U}}

\newcommand{\calX}{\mathcal{X}}
\newcommand{\calY}{\mathcal{Y}}

\providecommand{\bfa}{\boldsymbol{a}}
\providecommand{\bfb}{\boldsymbol{b}}
\providecommand{\bfc}{\boldsymbol{c}}
\providecommand{\bfd}{\boldsymbol{d}}

\providecommand{\bfs}{\boldsymbol{s}}

\providecommand{\bfu}{\boldsymbol{u}}
\providecommand{\bfv}{\boldsymbol{v}}
\providecommand{\bfw}{\boldsymbol{w}}
\providecommand{\bfx}{\boldsymbol{x}}

\providecommand{\bfz}{\boldsymbol{z}}

\providecommand{\bfA}{\boldsymbol{A}}
\providecommand{\bfB}{\boldsymbol{B}}
\providecommand{\bfC}{\boldsymbol{C}}
\providecommand{\bfD}{\boldsymbol{D}}

\providecommand{\bfH}{\boldsymbol{H}}
\providecommand{\bfI}{\boldsymbol{I}}

\providecommand{\bfK}{\boldsymbol{K}}
\providecommand{\bfL}{\boldsymbol{L}}
\providecommand{\bfM}{\mathbf{M}}

\providecommand{\bfU}{\boldsymbol{U}}

\providecommand{\bfW}{\boldsymbol{W}}

\providecommand{\bftheta}{\boldsymbol{\theta}}

\providecommand{\bfmu}{\boldsymbol{\mu}}

\providecommand{\bfSigma}{\boldsymbol{\Sigma}}

\providecommand{\bfPhi}{\boldsymbol{\Phi}}

\providecommand{\bfzero}{\boldsymbol{0}}

\newcommand{\diff}{\,\mathrm{d}}
\newcommand{\quot}[1]{\enquote{#1}}

\newcommand{\equalDef}{\coloneqq}
\newcommand{\defEqual}{\eqqcolon}

\DeclareMathOperator*{\argmax}{argmax}

\DeclareMathOperator{\tr}{tr}

\DeclareMathOperator{\Cov}{Cov}

\allowdisplaybreaks

\newcommand{\Random}{\textsc{Random}}
\newcommand{\MaxDiag}{\textsc{MaxDiag}}
\newcommand{\FrankWolfe}{\textsc{FrankWolfe}}
\newcommand{\MaxDist}{\textsc{MaxDist}}
\newcommand{\MaxDet}{\textsc{MaxDet}}
\newcommand{\Bait}{\textsc{Bait}}
\newcommand{\KMeansPP}{\textsc{KMeansPP}}
\newcommand{\LCMD}{\textsc{LCMD}}

\newcommand{\acs}{\operatorname{acs}}
\newcommand{\acsgrad}{\operatorname{acs-grad}}
\newcommand{\rp}[1]{\operatorname{sketch}(#1)}
\newcommand{\acsrf}[1]{\operatorname{acs-rf}(#1)}
\newcommand{\acsrfhyper}[1]{\operatorname{acs-rf-hyper}(#1)}
\newcommand{\ens}[1]{\operatorname{ens}(#1)}
\newcommand{\scale}[1]{\operatorname{scale}(#1)}
\newcommand{\post}[1]{\operatorname{post}(#1)}

\newcolumntype{C}[1]{>{\centering\arraybackslash}p{#1}}

\setlist[enumerate]{nosep}
\setlist[itemize]{nosep}

\renewcommand{\eqref}[1]{Eq.~(\ref{#1})}





\newcommand{\ntrain}{N_{\mathrm{train}}}
\newcommand{\Ntrain}{N_{\mathrm{train}}}
\newcommand{\nvalid}{N_{\mathrm{valid}}}

\DeclareMathOperator*{\argmin}{argmin}

\newcommand{\Dtrain}{\calD_{\mathrm{train}}}

\newcommand{\Xtrain}{\calX_{\mathrm{train}}}
\newcommand{\Ytrain}{\calY_{\mathrm{train}}}
\newcommand{\Xpool}{\calX_{\mathrm{pool}}}
\newcommand{\Ypool}{\calY_{\mathrm{pool}}}

\newcommand{\tXbatch}{\tilde{\calX}_{\mathrm{batch}}}
\newcommand{\Xbatch}{\calX_{\mathrm{batch}}}
\newcommand{\Ybatch}{\calY_{\mathrm{batch}}}
\newcommand{\Dbatch}{\calD_{\mathrm{batch}}}
\newcommand{\Xsel}{\calX_{\mathrm{sel}}}
\newcommand{\Xrem}{\calX_{\mathrm{rem}}}
\newcommand{\Xmode}{\calX_{\mathrm{mode}}}
\newcommand{\Xcand}{\calX_{\mathrm{cand}}}
\newcommand{\NextSample}{\textsc{NextSample}}

\newcommand{\dfeatin}{d_{\mathrm{pre}}}
\newcommand{\dfeatout}{d_{\mathrm{post}}}
\newcommand{\dfeat}{d_{\mathrm{feat}}}
\newcommand{\Nbatch}{N_{\mathrm{batch}}}
\newcommand{\Nextra}{N_{\mathrm{extra}}}
\newcommand{\Lbatch}{L_{\mathrm{batch}}}
\newcommand{\Npool}{N_{\mathrm{pool}}}
\newcommand{\Xtp}{\calX_{\mathrm{tp}}}

\newcommand{\Nsel}{N_{\mathrm{sel}}}
\newcommand{\Ncand}{N_{\mathrm{cand}}}

\newcommand{\Nens}{N_{\mathrm{ens}}}
\newcommand{\facs}{f_{\mathrm{acs}}}

\newcommand{\tbfx}{\tilde{\bfx}}

\newcommand{\BigO}{\calO}

\newcommand{\assign}{\leftarrow}

\newcommand{\corrrem}[1]{}

\makeatletter
\newcommand\ackname{Acknowledgements}
\if@titlepage
   \newenvironment{acknowledgements}{%
       \titlepage
       \null\vfil
       \@beginparpenalty\@lowpenalty
       \begin{center}%
         \bfseries \ackname
         \@endparpenalty\@M
       \end{center}}%
      {\par\vfil\null\endtitlepage}
\else
   
\fi
\makeatother

\hypersetup{ hidelinks }

\usepackage{lastpage}



\usepackage{lastpage}
\jmlrheading{24}{2023}{1-\pageref{LastPage}}{8/22; Revised
1/23}{5/23}{22-0937}{David Holzmüller, Viktor Zaverkin, Johannes Kästner, and Ingo Steinwart}


\ShortHeadings{Deep Batch Active Learning for Regression}{Holzmüller, Zaverkin, Kästner, and Steinwart}
\firstpageno{1}

\begin{document}

\title{A Framework and Benchmark for Deep Batch Active Learning for Regression}

\author{\name David Holzmüller \email david.holzmueller@mathematik.uni-stuttgart.de \\
       \addr University of Stuttgart\\
       Faculty of Mathematics and Physics\\
       Institute for Stochastics and Applications
       \AND
       \name Viktor Zaverkin\thanks{Present address: NEC Laboratories Europe GmbH, Kurfürsten-Anlage 36, 69115 Heidelberg, Germany} \email zaverkin@theochem.uni-stuttgart.de \\
       \addr University of Stuttgart\\
       Faculty of Chemistry \\
  	   Institute for Theoretical Chemistry
       \AND
       \name Johannes Kästner \email kaestner@theochem.uni-stuttgart.de \\
       \addr University of Stuttgart\\
       Faculty of Chemistry \\
  	   Institute for Theoretical Chemistry
       \AND
       \name Ingo Steinwart \email ingo.steinwart@mathematik.uni-stuttgart.de \\
       \addr University of Stuttgart\\
       Faculty of Mathematics and Physics\\
       Institute for Stochastics and Applications}

\editor{Aarti Singh}

\maketitle

\begin{abstract}%
The acquisition of labels for supervised learning can be expensive. To improve the sample efficiency of neural network regression, we study active learning methods that adaptively select batches of unlabeled data for labeling. We present a framework for constructing such methods out of (network-dependent) base kernels, kernel transformations, and selection methods. Our framework encompasses many existing Bayesian methods based on Gaussian process approximations of neural networks as well as non-Bayesian methods. 
Additionally, we propose to replace the commonly used last-layer features with sketched finite-width neural tangent kernels and to combine them with a novel clustering method. 
To evaluate different methods, we introduce an open-source benchmark consisting of 15 large tabular regression data sets. 
Our proposed method outperforms the state-of-the-art on our benchmark, scales to large data sets, and works out-of-the-box without adjusting the network architecture or training code. We provide open-source code that includes efficient implementations of all kernels, kernel transformations, and selection methods, and can be used for reproducing our results.
\end{abstract}

\ifnotinthesis{%
\begin{keywords}
  batch mode deep active learning, regression, tabular data, neural network, benchmark
\end{keywords}}

\section{Introduction}

While supervised machine learning (ML) has been successfully applied to many different problems, these successes often rely on the availability of large data sets for the problem at hand. In cases where labeling data is expensive, it is important to reduce the required number of labels. Such a reduction could be achieved through various means: First, finding more sample-efficient supervised ML methods; second, applying data augmentation; third, leveraging information in unlabeled data via semi-supervised learning; fourth, leveraging information from related problems through transfer learning, meta-learning, or multi-task learning; and finally, appropriately selecting which data to label. Active learning (AL) takes the latter approach by using a trained model to choose the next data point to label \citep{settles_active_2009}. The need to retrain after every new label prohibits parallelized labeling methods and can be far too expensive, especially for neural networks (NNs), which are often slow to train. This problem can be resolved by batch mode active learning (BMAL) methods, which select multiple data points for labeling at once. When the supervised ML method is a deep NN, this is known as batch mode deep active learning (BMDAL) \citep{ren_survey_2021}. 
Pool-based BMDAL refers to the setting where data points for labeling need to be chosen from a given finite set of points.

Supervised and unsupervised ML algorithms choose a model for given data. Multiple models can be compared on the same data using model selection techniques such as cross-validation. Such a comparison increases the training cost, but not the (potentially much larger) cost of labeling data.
In contrast to supervised learning, AL is about choosing the data itself, with the goal to reduce labeling cost. 
However, different AL algorithms may choose different samples, and hence a comparison of $N$ AL algorithms might increase labeling cost by a factor of up to $N$. 
Consequently, such a comparison is not sensible for applications where labeling is expensive.
Instead, it is even more important to properly benchmark AL methods on tasks where labels are cheap to generate or a large number of labels is already available.

In the classification setting, NNs typically output uncertainties in the form of a vector of probabilities obtained through a softmax layer, while regression NNs typically output a scalar target without uncertainties. Therefore, many BMDAL algorithms only apply to one of the two settings.
For classification, many BMDAL approaches have been proposed \citep{ren_survey_2021}, and there exist at least some standard benchmark data sets like CIFAR-10 \citep{krizhevsky_learning_2009} on which methods are usually evaluated. On the other hand, the regression setting has been studied less frequently, and no common benchmark has been established to the best of our knowledge, except for a specialized benchmark in drug discovery \citep{mehrjou_genedisco_2021}. We expect that the regression setting will gain popularity, not least due to the increasing interest in NNs for surrogate modeling \citep{behler_perspective_2016, kutz_deep_2017, raissi_physics-informed_2019, mehrjou_genedisco_2021, lavin_simulation_2021}.

\subsection{Contributions} In this paper, we investigate pool-based BMDAL methods for regression.
Our experiments use fully connected NNs on tabular data sets, but the considered methods can be generalized to different types of data and NN architectures. We limit our study to methods that do not require to modify the network architecture and training, as these are particularly easy to use and a fair comparison to other methods is difficult. We also focus on methods that scale to large amounts of (unlabeled) data and large acquisition batch sizes. Our contributions can be summarized as follows:
\begin{enumerate}[(1)]
\item We propose a framework for decomposing typical BM(D)AL algorithms into the choice of a kernel and a selection method. Here, the kernel can be constructed from a base kernel through a series of kernel transformations. %
The use of kernels as basic building blocks allows for an efficient yet flexible and composable implementation of our framework, which we include in our open-source code.
We also discuss how (regression variants of) many popular BM(D)AL algorithms can be represented in this framework and how they can efficiently be implemented. This gives us a variety of options for base kernels, kernel transformations, and selection methods to combine. Our framework encompasses both Bayesian methods based on Gaussian Processes and Laplace approximations as well as geometric methods.
\item We discuss some alternative options to the ones arising from popular BM(D)AL algorithms: We introduce a novel selection method called \LCMD{}; and we propose to combine the finite-width neural tangent kernel \citep[NTK,][]{jacot_neural_2018} as a base kernel with sketching for efficient computation.
\item We introduce an open-source benchmark for BMDAL involving 15 large tabular regression data sets. Using this benchmark, we compare different selection methods and evaluate the influence of the kernel, the acquisition batch size, and the target metric.
\end{enumerate}

Our newly proposed selection method, \LCMD{}, improves the state-of-the-art in our benchmark in terms of RMSE and MAE, while still exhibiting good performance for the maximum error. The NTK base kernel improves the benchmark accuracy for all selection methods, and the proposed sketching method can preserve this accuracy while leading to significant time gains. \Cref{fig:existing_algs} shows a comparison of our novel BMDAL algorithm against popular BMDAL algorithms from the literature, which are all implemented in our framework. The code for our framework and benchmark is based on PyTorch \citep{paszke_pytorch_2019} and is publicly available at
\begin{IEEEeqnarray*}{+rCl+x*}
\text{\url{https://github.com/dholzmueller/bmdal_reg}}
\end{IEEEeqnarray*}
and will be archived together with the generated data at \url{https://doi.org/10.18419/darus-3394}.

\begin{figure}[tb]
\centering
\includegraphics[scale=1.0]{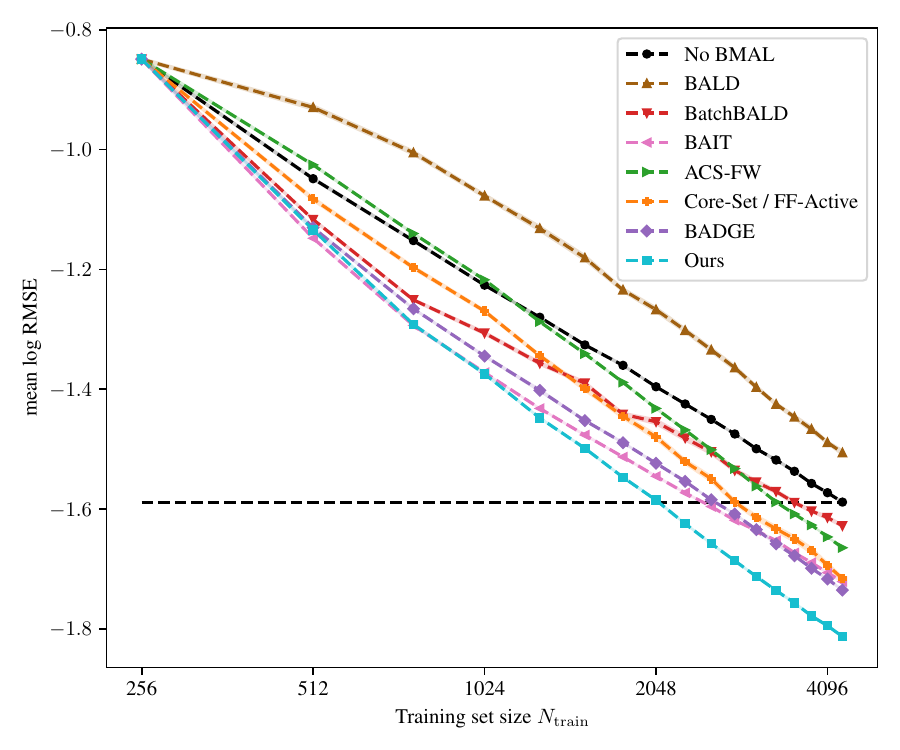}
\caption{%
This figure shows how fast the averaged errors on our benchmark data sets decrease during BMAL for
random selection (no BMAL), BALD \citep{houlsby_bayesian_2011}, BatchBALD \citep{kirsch_batchbald_2019}, BAIT \citep{ash_gone_2021}, ACS-FW \citep{pinsler_bayesian_2019}, Core-Set \citep{sener_active_2018}, FF-Active \citep{geifman_deep_2017}, BADGE \citep{ash_deep_2019}, and our method. 
In \Cref{table:existing_algs_results} and \Cref{sec:bmdal:experiments}, we specify how the compared methods are built from components explained in \Cref{sec:kernels} and \Cref{sec:sel_methods}, and discuss further details such as modifications to apply them to regression.
For the plot, we start with 256 random training samples and select 256 samples in each of 16 BMAL steps. The lines show the average of the logarithmic RMSE over all 15 benchmark data sets and 20 random splits between the BMAL steps.
The shaded area, which is barely visible, corresponds to one estimated standard deviation of the mean estimator, cf.\ \Cref{sec:appendix:results}.
} \label{fig:existing_algs}
\end{figure}

The rest of this paper is structured as follows: In \Cref{sec:problem_setting}, we introduce the basic problem setting of BMDAL for tabular regression with fully-connected NNs and introduce our framework for the construction of BMDAL algorithms. We discuss related work in \Cref{sec:related_work}. We then introduce options to build kernels from base kernels and kernel transformations in \Cref{sec:kernels}. \Cref{sec:sel_methods} discusses various iterative kernel-based selection methods. Our experiments in \Cref{sec:bmdal:experiments} provide insights into the performance of different combinations of kernels and selection methods. Finally, we discuss limitations and open questions in \Cref{sec:conclusion}. More details on the presented methods and experimental results are provided in the Appendix, whose structure is outlined in \Cref{sec:appendix:overview}.

\section{Problem Setting} \label{sec:problem_setting} %

In this section, we outline the problem of BMDAL for regression with fully-connected NNs. We first introduce the regression objective and fully-connected NNs. Subsequently, we introduce the basic setup of pool-based BMDAL as well as our proposed framework. %

\subsection{Regression with Fully-Connected Neural Networks} \label{sec:reg_fcnn}

We consider multivariate regression, where the goal is to learn a function $f: \bbR^d \to \bbR$ from data $(\bfx, y) \in \Dtrain \subseteq \bbR^d \times \bbR$. In the case of NNs, we consider a parameterized function family $(f_{\bftheta})_{\bftheta \in \bbR^m}$ and try to minimize the mean squared loss on training data $\Dtrain$ with $\Ntrain$ samples:
\begin{IEEEeqnarray*}{+rCl+x*}
\calL(\bftheta) & = & \frac{1}{\Ntrain} \sum_{(\bfx, y) \in \Dtrain} (y - f_{\bftheta}(\bfx))^2~.
\end{IEEEeqnarray*}
We refer to the inputs and labels in $\Dtrain$ as $\Xtrain$ and $\Ytrain$, respectively. Corresponding data sets are often referred to as \emph{tabular data} or \emph{structured data}. This is in contrast to data with a known spatiotemporal relationship between the input features, such as image or time-series data, where specialized NN architectures such as CNNs are more successful.

For our derivations and experiments, we consider an $L$-layer fully-connected NN $f_{\bftheta}: \bbR^d \to \bbR$ with parameter vector $\bftheta = (\bfW^{(1)}, \bfb^{(1)}, \hdots, \bfW^{(L)}, \bfb^{(L)})$ and input size $d_0 = d$, hidden layer sizes $d_1, \hdots, d_{L-1}$, and output size $d_L = 1$. The value $\bfz_i^{(L)} = f_{\bftheta}(\bfx_i^{(0)})$ of the NN on the $i$-th input $\bfx_i^{(0)} \in \bbR^{d_0}$ is defined recursively by
\begin{equation}
        \bfx_i^{(l+1)} = \varphi(\bfz_i^{(l+1)}) \in \bbR^{d_{l+1}}, \qquad \bfz_i^{(l+1)} = \frac{\sigma_w}{\sqrt{d_l}} \bfW^{(l+1)} \bfx_i^{(l)} + \sigma_b \bfb^{(l+1)} \in \bbR^{d_{l+1}}~. \label{eq:fcnn}
\end{equation}
Here, the activation function $\varphi: \bbR \to \bbR$ is applied element-wise and $\sigma_w, \sigma_b > 0$ are constant factors. In our experiments, the weight matrices are initialized with independent standard normal entries and the biases are initialized to zero. The factors $\sigma_w/\sqrt{d_l}$ and $\sigma_b$ stem from the neural tangent parametrization (NTP) \citep{jacot_neural_2018, lee_wide_2019}, which is theoretically motivated to define infinite-width limits of NNs and is also used in our applications. However, our derivations apply analogously to NNs without these factors. When considering different NN types such as CNNs, it is possible to apply our derivations only to the fully-connected part of the NN or to extend them to other layers as well.

\subsection{Batch Mode Active Learning} \label{sec:batch_active_learning}

In a single BMAL step, a BMAL algorithm selects a batch $\Xbatch \subseteq \bbR^d$ with a given size $\Nbatch \in \bbN$. Subsequently, this batch is labeled and added to the training set. Here, we consider \emph{pool-based} BMAL, where $\Xbatch$ is to be selected from a given finite \emph{pool set} $\Xpool$ of candidates. Other AL paradigms include membership query AL, where data points for labeling can be chosen freely, or stream-based AL, where data points arrive sequentially and must be immediately labeled or discarded. The pool set can potentially contain information about which regions of the input space are more important than others, especially if it is drawn from the same distribution as the test set. Moreover, pool-based BMAL allows for efficient benchmarking of BMAL methods on labeled data sets by reserving a large portion of the data set for the pool set, rendering the labeling part trivial. 

When comparing and evaluating BMDAL methods, we are mainly interested in the following desirable properties:
\begin{enumerate}[(P1)]
\item The method should improve the sample efficiency of the underlying NN, even for large acquisition batch sizes $\Nbatch$ and large pool set sizes $\Npool$, with respect to the downstream application, which may or may not involve the same input distribution as training and pool data. 
\item The method should scale to large pool sets, training sets, and batch sizes, in terms of both computation time and memory consumption.
\item The method should be applicable to a wide variety of NN architectures and training methods, such that it can be applied to different use cases.
\item The method should not require modifying the NN architecture and training method, for example by requiring to introduce Dropout, such that practitioners do not have to worry whether employing the method diminishes the accuracy of their trained NN. %
\item The method should not require training multiple NNs for a single batch selection since this would deteriorate its runtime efficiency.\footnote{Technically, requiring multiple trained NNs would not be detrimental if it facilitated reaching the same accuracy with correspondingly larger $\Nbatch$.}
\item The method should not require tuning hyperparameters on the downstream application since this would require labeling samples selected with suboptimal hyperparameters.
\end{enumerate}

Property (P1) is central to motivate the use of BMDAL over random sampling of the data and is evaluated for our framework in detail in \Cref{sec:bmdal:experiments} and \Cref{sec:appendix:experiments}. We only evaluate methods with property (P2) since our benchmark involves large data sets. All methods considered here satisfy (P3) to a large extent. Indeed, although efficient computations are only studied for fully-connected layers here, the considered methods can be simply applied to the fully-connected part of a larger NN. All considered methods satisfy (P4), which also facilitates fair comparison in a benchmark. 
All our methods satisfy (P5), although our methods can incorporate ensembles of NNs. Although some of the considered methods have hyperparameters, we fix them to reasonable values independent of the data set in our experiments, such that (P6) is satisfied. 

\Cref{alg:bmdal_loop} shows how BMDAL algorithms satisfying (P4) and (P5) can be used in a loop with training and labeling.

\begin{algorithm}[tb]
\caption{Basic pool-based BMDAL loop with initial labeled training set $\Dtrain$, unlabeled pool set $\Xpool$, BMDAL algorithm \textsc{NextBatch} (see \Cref{alg:kernel_next_batch}) and a list $\Lbatch$ of batch sizes.} \label{alg:bmdal_loop}
\begin{algorithmic}
\For{AL batch size $\Nbatch$ in $\Lbatch$}
	\State Train model $f_{\bftheta}$ on $\Dtrain$
	\State Select batch $\Xbatch \assign \Call{NextBatch}{f_{\bftheta}, \Dtrain, \Xpool, \Nbatch}$ with $|\Xbatch| = \Nbatch$ and $\Xbatch \subseteq \Xpool$
	\State Move $\Xbatch$ from $\Xpool$ to $\Dtrain$ and acquire labels $\Ybatch$ for $\Xbatch$
\EndFor
\State Train final model $f_{\bftheta}$ on $\Dtrain$
\end{algorithmic}
\end{algorithm}

\cite{wu_pool-based_2018} formulates three criteria by which BMAL algorithms may select batch samples in order to improve the sample efficiency of a learning method:
\begin{enumerate}[(A)]
\item[(INF)] The algorithm should favor inputs that are \emph{informative} to the model. These could, for example, be those inputs where the model is most uncertain about the label.
\item[(DIV)] The algorithm should ensure that the batch contains \emph{diverse} samples, i.e., samples in the batch should be sufficiently different from each other.
\item[(REP)] The algorithm should ensure \emph{representativity} of the resulting training set, i.e., it should focus more strongly on regions where the pool data distribution has high density.
\end{enumerate}
Note that (REP) might not be desirable if one expects a significant distribution shift between pool and test data. 
A challenge in trying to adapt non-batch AL methods to the batch setting is that some non-batch AL methods expect to immediately receive a label for every selected sample.
It is usually possible to circumvent this by selecting the $\Nbatch$ samples with the largest acquisition function scores at once, but this does not enforce (DIV) or (REP). 

\begin{algorithm}[tb]
\caption{Kernel-based batch construction framework} \label{alg:kernel_next_batch}
\begin{algorithmic}
\Function{KernelNextBatch}{$f_{\bftheta}, \Dtrain, \Xpool, \Nbatch$}
	\State $k \assign \Call{BaseKernel}{f_{\bftheta}}$
	\State $k \assign \Call{TransformKernel}{k, \Dtrain}$
	\State \Return $\Call{Select}{k, \Xtrain, \Xpool, \Nbatch}$
\EndFunction
\end{algorithmic}
\end{algorithm}

We propose a framework for assembling BMDAL algorithms that is shown in \Cref{alg:kernel_next_batch} and consists of three components: First, a base kernel $k$ needs to be chosen that should serve as a proxy for the trained network $f_{\bftheta}$. Second, the kernel can be transformed using various transformations. These transformations can, for example, make the kernel represent posteriors or improve its evaluation efficiency. Third, a selection method is invoked that uses the transformed kernel as a measure of similarity between inputs. When using Gaussian Process regression with a given kernel $k$ as a supervised learning method instead of an NN, the base kernel could simply be chosen as $k$. 
Note that \textsc{Select} does not observe the training labels directly, however, in the NN setting, these can be implicitly incorporated through kernels that depend on the trained NN.

\begin{example} \label{ex:framework}
In \Cref{alg:kernel_next_batch}, the base kernel $k$ could be of the form $k(\bfx, \tbfx) = \langle \phi(\bfx), \phi(\tbfx)\rangle$, where $\phi$ represents the trained NN without the last layer. When interpreting $k$ as the kernel of a Gaussian process, $\textsc{TransformKernel}$ could then compute a transformed kernel $\tilde k$ that represents the posterior predictive uncertainty after observing the training data. Finally, \textsc{Select} could then choose the $\Nbatch$ points $\bfx \in \Xpool$ with the largest uncertainty $\tilde k(\bfx, \bfx)$.
\end{example}

From a Bayesian perspective, our choice of kernel and kernel transformations can correspond to inference in a Bayesian approximation, as we discuss in \Cref{sec:appendix:posterior}, while the selection method can correspond to the optimization of an acquisition function. However, in our framework, the same \quot{Bayesian} kernels can be used together with non-Bayesian selection methods and vice versa.

\section{Related Work} \label{sec:related_work}

The field of active learning, also known as query learning or sequential (optimal) experimental design \citep{fedorov_theory_1972, chaloner_bayesian_1995}, has a long history dating back at least to the beginning of the 20th century \citep{smith_standard_1918}. For an overview of the AL and BMDAL literature, we refer to \cite{settles_active_2009, kumar_active_2020, ren_survey_2021, weng_learning_2022}.

We first review work relevant to the kernels in our framework, before discussing work more relevant to selection methods, and finally, data sets. More literature related to specific methods is also discussed in \Cref{sec:kernels} and \Cref{sec:sel_methods}.

\subsection{Uncertainty Measures and Kernel Approximations}

A popular class of BMDAL methods is given by Bayesian methods since the Bayesian framework naturally provides uncertainties that can be used to assess informativeness. These methods require to use Bayesian NNs, or in other words, the calculation of an approximate posterior distribution over NN parameters. 
A simple option is to perform Bayesian inference only over the last layer of the NN \citep{lazaro-gredilla_marginalized_2010, snoek_scalable_2015, ober_benchmarking_2019, kristiadi_being_2020}. The Laplace approximation \citep{laplace_memoire_1774, mackay_bayesian_1992} can provide a local posterior distribution around a local optimum of the loss landscape via a second-order Taylor approximation. An alternative local approach based on SGD iterates is called SWAG \citep{maddox_simple_2019}. Ensembles of NNs \citep{hansen_neural_1990, lakshminarayanan_simple_2017} can be interpreted as a simple multi-modal posterior approximation and can be combined with local approximations to yield mixtures of Laplace approximations \citep{eschenhagen_mixtures_2021} or MultiSWAG \citep{wilson_bayesian_2020}. Monte Carlo (MC) Dropout \citep{gal_dropout_2016} is an option to obtain ensemble predictions from a single NN, although it requires training with Dropout \citep{srivastava_dropout_2014}. Regarding uncertainty approximations, our considered algorithms are mainly related to exact last-layer methods and the Laplace approximation, as these do not require to modify the training process. \cite{daxberger_laplace_2021} give an overview of various methods to compute (approximate) Laplace approximations. 

Some recent approaches also build on the neural tangent kernel (NTK) introduced by \cite{jacot_neural_2018}. \cite{khan_approximate_2019} show that certain Laplace approximations are related to the finite-width NTK. \cite{wang_deep_2022} and \cite{mohamadi_making_2022} propose the use of finite-width NTKs for DAL for classification. \cite{wang_neural_2021} use the finite-width NTK at initialization for the streaming setting of DAL for classification and theoretically analyze the resulting method. \cite{aljundi_identifying_2022} use a kernel related to the finite-width NTK for DAL. \cite{shoham_experimental_2023}, \cite{borsos_coresets_2020} and \cite{borsos_semi-supervised_2021} use infinite-width NTKs for BMDAL and related tasks. \cite{han_random_2021} propose sketching for infinite-width NTKs and also evaluate it for DAL. In contrast to these papers, we propose sketching for finite-width NTKs and allow combining the resulting kernel with different selection methods.

\subsection{Selection Methods}

Besides a Bayesian NN model, a Bayesian BMDAL method needs to specify an acquisition function that decides how to prioritize the pool samples. Many simple acquisition functions for quantifying uncertainty have been proposed \citep{kumar_active_2020}. Selecting the next sample where an ensemble disagrees most is known as Query-by-Committee (QbC) \citep{seung_query_1992}. \cite{krogh_neural_1994} employed QbC for DAL for regression. A more recent investigation of QbC to DAL for classification is performed by \cite{beluch_power_2018}. \cite{pop_deep_2018} combine ensembles with MC Dropout. \cite{tsymbalov_dropout-based_2018} use the predictive variance obtained by MC Dropout for DAL for regression. \cite{zaverkin_exploration_2021} use last-layer-based uncertainty in DAL for regression on atomistic data. Unlike the other approaches mentioned before, the approach by \cite{zaverkin_exploration_2021} can be applied to a single NN trained without Dropout.

Many uncertainty-based acquisition functions do not distinguish between epistemic uncertainty, i.e., lack of knowledge about the true functional relationship, and aleatoric uncertainty, i.e., inherent uncertainty due to label noise. \cite{houlsby_bayesian_2011} propose the BALD acquisition function, which aims to quantify epistemic uncertainty only. 
\cite{gal_deep_2017} apply BALD and other acquisition functions to BMDAL for classification with MC Dropout. To enforce diversity of the selected batch, \cite{kirsch_batchbald_2019} propose BatchBALD and evaluate it on classification problems with MC Dropout. \cite{ash_gone_2021} propose \Bait{}, which also incorporates representativity through Fisher information based on last-layer features, and is evaluated on classification and regression data sets.

Another approach towards BMDAL is to find core-sets that represent $\Xpool$ in a geometric sense. \cite{sener_active_2018} and \cite{geifman_deep_2017} propose algorithms to cover the pool set with $\Xbatch \cup \Xtrain$ in a last-layer feature space. \cite{ash_deep_2019} propose BADGE, which applies clustering in a similar feature space, but includes uncertainty via gradients through the softmax layer for classification. ACS-FW \citep{pinsler_bayesian_2019} can be seen as a hybrid between core-set and Bayesian approaches, trying to approximate the expected log-posterior on the pool set with a core-set, also using last-layer-based Bayesian approximations. Besides \Bait{}, ACS-FW is one of the few approaches that is designed and evaluated for both classification and regression. Our newly proposed selection method \textsc{LCMD} is clustering-based like the k-means++ method used in BADGE, but deterministic.

Many more approaches towards BMDAL exist, and they can be combined with additional steps such as pre-reduction of $\Xpool$ \citep{ghorbani_data_2022} or re-weighting of selected instances \citep{farquhar_statistical_2021}. Most of these BMDAL methods are geared towards classification, and for a broader overview, we refer to \cite{ren_survey_2021}. For (image) regression, \cite{ranganathan_deep_2020} introduce an auxiliary loss term on the pool set, which they use to perform DAL. It is unclear, though, to which extent their success is explained by implicitly performing semi-supervised learning.

Since we frequently consider Gaussian Processes (GPs) as approximations to Bayesian NNs in this paper, our work is also related to BMAL for GPs, although in our case the GPs are only used for selecting $\Xbatch$ and not for the regression itself. Popular BMAL methods for GPs have been suggested for example by \cite{seo_gaussian_2000} and \cite{krause_near-optimal_2008}.

\subsection{Data Sets}

In terms of benchmark data sets for BM(D)AL for regression, \cite{tsymbalov_dropout-based_2018} use seven large tabular data sets, some of which we have included in our benchmark, cf.\ \Cref{sec:appendix:data_sets}. \cite{pinsler_bayesian_2019} use only one large tabular regression data set. \cite{ash_gone_2021} use a small tabular regression data set and three image regression data sets, two of which are converted classification data sets. \cite{wu_pool-based_2018} benchmarks exclusively on small tabular data sets. \cite{zaverkin_exploration_2021} work with atomistic data sets, which require specialized NN architectures and longer training times, and are therefore less well-suited for a large-scale benchmark. \cite{ranganathan_deep_2020} use CNNs on five image regression data sets. Recently, a benchmark for BMDAL for drug discovery has been proposed, which uses four counterfactual regression data sets \citep{mehrjou_genedisco_2021}.
In this paper, we provide an open-source benchmark on 15 large tabular data sets, which includes more baselines and evaluation criteria than evaluations in previous papers.

\section{Kernels} \label{sec:kernels}

In this section, we discuss a variety of base kernels yielding various approximations to a trained NN $f_{\bftheta_T}$, as well as different kernel transformations that yield new kernels with different meanings or simply improved efficiency. In the following, we consider positive semi-definite kernels $k: \bbR^d \times \bbR^d \to \bbR$. For an introduction to kernels, we refer to the literature \citep[e.g.\ ][]{steinwart_support_2008}. The kernels considered here can usually be represented by a feature map $\phi$ with finite-dimensional feature space, that is, $\phi: \bbR^d \to \bbR^{\dfeat}$ with $k(\bfx_i, \bfx_j) = \langle \phi(\bfx_i), \phi(\bfx_j)\rangle$. For a sequence $\calX = (\bfx_1, \hdots, \bfx_n)$ of inputs, which we sometimes treat like a set $\calX \subseteq \bbR^d$ by a slight abuse of notation, we define the corresponding feature matrix
\begin{equation}
    \phi(\calX) = \begin{pmatrix}
    \phi(\bfx_1)^\top \\ \vdots \\ \phi(\bfx_n)^\top
    \end{pmatrix} \in \bbR^{n \times \dfeat}
\end{equation}
and kernel matrices $k(\bfx, \calX) = (k(\bfx, \bfx_i))_i \in \bbR^{1 \times n}$, $k(\calX, \calX) = (k(\bfx_i, \bfx_j))_{ij} \in \bbR^{n \times n}$, $k(\calX, \bfx) = (k(\bfx_i, \bfx))_i \in \bbR^{n \times 1}$.

\subsection{Base Kernels} \label{sec:base_kernels}

We first discuss various options for creating base kernels that induce some notion of similarity on the training and pool inputs. An overview of these base kernels can be found in \Cref{table:base_kernels}.

\begin{table}[bt]
\centering
\begin{tabular}{cccc}
Base kernel & Symbol & Feature map & Feature space dimension $\dfeat$ \\
\hline 
Linear & $k_{\mathrm{lin}}$ & $\phi_{\mathrm{lin}}(\bfx) = \bfx$ & $d$ \\
NNGP & $k_{\mathrm{nngp}}$ & not explicitly defined & $\infty$ \\
full gradient & $k_{\mathrm{grad}}$ & $\phi_{\mathrm{grad}}(\bfx) = \nabla_{\bftheta} f_{\bftheta_T}(\bfx)$ & $\sum_{l=1}^L d_l (d_{l-1}+1)$ \\
last-layer & $k_{\mathrm{ll}}$ & $\phi_{\mathrm{ll}}(\bfx) = \nabla_{\tilde \bfW^{(L)}} f_{\bftheta_T}(\bfx)$ & $d_L (d_{L-1}+1)$ \\
\end{tabular}
\caption{Overview of the introduced base kernels.} \label{table:base_kernels}
\end{table}

\subsubsection{Linear Kernel} A very simple baseline for other base kernels is the linear kernel $k_{\mathrm{lin}}(\bfx, \tbfx) = \langle \bfx, \tbfx \rangle$, corresponding to the identity feature map
\begin{IEEEeqnarray*}{+rCl+x*}
\phi_{\mathrm{lin}}(\bfx) \equalDef \bfx~.
\end{IEEEeqnarray*}
It is usually very fast to evaluate but does not represent the behavior of an NN well. Moreover, its feature space dimension depends on the input dimension, and hence may not be suited for selection methods that depend on having high-dimensional representations of the data. A more accurate representation of the behavior of an NN is given by the next kernel:

\subsubsection{Full gradient Kernel} If $\bftheta_T$ is the parameter vector of the trained NN, we define
\begin{IEEEeqnarray*}{+rCl+x*}
\phi_{\mathrm{grad}}(\bfx) \equalDef \nabla_{\bftheta} f_{\bftheta_T}(\bfx)~.
\end{IEEEeqnarray*}
This is motivated as follows: A linearization of the NN with respect to its parameters around $\bftheta_T$ is given by the first-order Taylor expansion
\begin{equation}
    f_{\bftheta}(\bfx) \approx \tilde f_{\bftheta}(\bfx) \equalDef f_{\bftheta_T}(\bfx) + \langle \phi_{\mathrm{grad}}(\bfx), \bftheta - \bftheta_T\rangle~. \label{eq:linearization}
\end{equation}
If we were to resume training from the parameters $\bftheta_T$ after labeling the next batch $\Xbatch$, the result of training on the extended data could hence be approximated by the function $f_{\bftheta_T} + f_\Delta$, where $f_\Delta$ is the result of linear regression with feature map $\phi_{\mathrm{grad}}$ on the data residuals $(\bfx_i, y_i - f_{\bftheta_T}(\bfx_i))$ for $(\bfx_i, y_i) \in \Dtrain \cup \Dbatch$. 

The kernel $k_{\mathrm{grad}}$ is also known as the (empirical / finite-width) \emph{neural tangent kernel} (NTK). It depends on the linearization point $\bftheta_T$, but can for certain training settings converge to a fixed kernel as the hidden layer widths go to infinity \citep{jacot_neural_2018, lee_wide_2019, arora_exact_2019}. %
In practical settings, however, it has been observed that $k_{\mathrm{grad}}$ often \quot{improves} during training \citep{fort_deep_2020, long_properties_2021, shan_theory_2021, atanasov_neural_2021}, especially in the beginning of training. This agrees with our observations in \Cref{sec:bmdal:experiments} and suggests that shorter training might already yield a gradient kernel that allows selecting a good $\Xbatch$. Indeed, \cite{coleman_selection_2019} found that shorter training and even smaller models can already be sufficient to select good batches for BMDAL for classification.

For fully-connected layers, we will now show that the feature map $\phi_{\mathrm{grad}}$ has an additional product structure that can be exploited to reduce the runtime and memory consumption of a kernel evaluation. For notational simplicity, we rewrite \eqref{eq:fcnn} as
\begin{IEEEeqnarray*}{+rCl+x*}
\bfz_i^{(l+1)} & = & \tilde \bfW^{(l+1)} \tilde \bfx_i^{(l)}, \\
\tilde \bfW^{(l+1)} & \equalDef & \begin{pmatrix}
        \bfW^{(l+1)} & \bfb^{(l+1)}
        \end{pmatrix} \in \bbR^{d_{l+1} \times (d_l + 1)}, \quad \tilde \bfx_i^{(l)} \equalDef \begin{pmatrix}
        \frac{\sigma_w}{\sqrt{d_l}} \bfx_i^{(l)} \\
        \sigma_b
        \end{pmatrix} \in \bbR^{d_l + 1}~, \IEEEyesnumber \label{eq:fcnn_combined}
\end{IEEEeqnarray*}
with parameters $\bftheta = (\tilde \bfW^{(1)}, \hdots, \tilde \bfW^{(L)})$. Using the notation from \eqref{eq:fcnn_combined}, we can write
\begin{equation}
    \phi_{\mathrm{grad}}(\bfx_i^{(0)}) = \left(\frac{\diff \bfz_i^{(L)}}{\diff \tilde \bfW^{(1)}}, \hdots, \frac{\diff \bfz_i^{(L)}}{\diff \tilde \bfW^{(L)}}\right) = \left(\frac{\diff \bfz_i^{(L)}}{\diff \bfz_i^{(1)}} (\tilde \bfx_i^{(0)})^\top, \hdots, \frac{\diff \bfz_i^{(L)}}{\diff \bfz_i^{(L)}} (\tilde \bfx_i^{(L-1)})^\top\right)~. \label{eq:grad_fm_factorization}
\end{equation}
For a kernel evaluation, the factorization of the weight matrix derivatives can be exploited via
\begin{IEEEeqnarray*}{+rCl+x*}
    k_{\mathrm{grad}}(\bfx_i^{(0)}, \bfx_j^{(0)}) & = & \sum_{l=1}^L \left\langle \frac{\diff \bfz_i^{(L)}}{\diff \bfz_i^{(l)}} (\tilde \bfx_i^{(l-1)})^\top, \frac{\diff \bfz_j^{(L)}}{\diff \bfz_j^{(l)}} (\tilde \bfx_j^{(l-1)})^\top \right\rangle_F \\
    & = & \sum_{l=1}^L \underbrace{\left\langle \tilde \bfx_i^{(l-1)}, \tilde \bfx_j^{(l-1)}\right\rangle}_{\defEqual k^{(l)}_{\mathrm{in}}(\bfx_i^{(0)}, \bfx_j^{(0)})} \cdot \underbrace{\left\langle \frac{\diff \bfz_i^{(L)}}{\diff \bfz_i^{(l)}}, \frac{\diff \bfz_j^{(L)}}{\diff \bfz_j^{(l)}}\right\rangle}_{\defEqual k^{(l)}_{\mathrm{out}}(\bfx_i^{(0)}, \bfx_j^{(0)})}~, \IEEEyesnumber \label{eq:grad_comp}
\end{IEEEeqnarray*}
since $\langle \bfa \bfb^\top, \bfc \bfd^\top \rangle_F = \tr(\bfb \bfa^\top \bfc \bfd^\top) = \tr(\bfa^\top \bfc \bfd^\top \bfb) = \bfa^\top \bfc \bfd^\top \bfb = \langle \bfa, \bfc\rangle \cdot \langle \bfb, \bfd \rangle$. This means that $k_{\mathrm{grad}}$ can be decomposed into sums of products of kernels with smaller feature space dimension:\footnote{For the sketching method defined later, we may exploit that $k^{(L)}_{\mathrm{out}}(\bfx, \tilde\bfx) = 1$, hence $k^{(L)}_{\mathrm{out}}$ can be omitted.}
\begin{equation}
k_{\mathrm{grad}}(\bfx, \tilde\bfx) = \sum_{l=1}^L k^{(l)}_{\mathrm{in}}(\bfx, \tilde\bfx) \cdot k^{(l)}_{\mathrm{out}}(\bfx, \tilde\bfx) \label{eq:grad_kernel_factorization}
\end{equation}
When using \eqref{eq:grad_comp}, the full gradients $\frac{\diff \bfz_i^{(L)}}{\diff \tilde \bfW^{(l)}}$ never have to be computed or stored explicitly. If $\frac{\diff \bfz_i^{(L)}}{\diff \bfz_i^{(l)}}$ and $\tilde \bfx_i^{(l-1)}$ are already computed and the hidden layers contain $m = d_1 = \hdots = d_{L-1}$ neurons each, \eqref{eq:grad_comp} reduces the runtime complexity of a kernel evaluation from $\Theta(m^2L)$ to $\Theta(mL)$, and similarly for the memory complexity of pre-computed features. In \Cref{sec:kernel_transformations:rp}, we will see how to further accelerate this kernel computation using sketching. Efficient computations of $k_{\mathrm{grad}}$ for more general types of layers and multiple output neurons are discussed by \cite{novak_fast_2022}.

Since $k_{\mathrm{grad}}$ consists of gradient contributions from multiple layers, it is potentially important that the magnitudes of the gradients in different layers are balanced. We achieve this, at least at initialization, through the use of the neural tangent parameterization \citep{jacot_neural_2018}. 
For other NN architectures, however, it might be desirable to re-weight gradient magnitudes from different layers to improve the results obtained with $k_{\mathrm{grad}}$.

\subsubsection{Last-layer Kernel} A simple and rough approximation to the full-gradient kernel is given by only considering the gradient with respect to the parameters in the last layer:
\begin{IEEEeqnarray*}{+rCl+x*}
\phi_{\mathrm{ll}}(\bfx) \equalDef \nabla_{\tilde \bfW^{(L)}} f_{\bftheta_T}(\bfx)~.
\end{IEEEeqnarray*}
From \eqref{eq:grad_fm_factorization}, it is evident that in the single-output regression case that we are considering, $\phi_{\mathrm{ll}}(\bfx_i^{(0)})$ is simply the input $\tilde \bfx_i^{(L-1)}$ to the last layer of the NN. The latter formulation can also be used in the multi-output setting, and versions of it (with $\bfx_i^{(L-1)}$ instead of $\tilde \bfx_i^{(L-1)}$) have been frequently used for BMDAL \citep{sener_active_2018, geifman_deep_2017, pinsler_bayesian_2019, ash_deep_2019, zaverkin_exploration_2021, ash_gone_2021}. %

\subsubsection{Infinite-width NNGP} It has been shown that as the widths $d_1, \hdots, d_{L-1}$ of the hidden NN layers converge to infinity, the distribution of the initial function $f_{\bftheta_0}$ converges to a Gaussian Process with mean zero and a covariance kernel $k_{\mathrm{nngp}}$ called the neural network Gaussian process (NNGP) kernel \citep{neal_priors_1994, lee_deep_2018, matthews_gaussian_2018}. This kernel depends on the network depth, the used activation function, and details such as the initialization variance and scaling factors like $\sigma_w$. In our experiments, we use the NNGP kernel corresponding to the employed NN setup, for which the formulas are given in \Cref{sec:appendix:nngp}.

As mentioned above, there exists an infinite-width limit of $k_{\mathrm{grad}}$, the so-called neural tangent kernel \citep{jacot_neural_2018}. We decided to omit it from our experiments in \Cref{sec:appendix:experiments} after preliminary experiments showed similarly bad performance as for the NNGP.

\subsection{Kernel Transformations} \label{sec:kernel_transformations}

The base kernels introduced in \Cref{sec:base_kernels} are constructed such that kernel regression with these kernels serves as a proxy for regression with the corresponding NN. By using kernels, we can model interactions $k(\bfx, \tbfx)$ between two inputs, which is crucial to incorporate diversity (DIV) into the selection methods. However, this is not always sufficient to apply a selection method.
For example, sometimes we want the kernel to represent uncertainties of the NN after observing the data, or we want to reduce the feature space dimension to render selection more efficient. Therefore, we introduce various ways to transform kernels in this section. 
When applying transformations $T_1, \hdots, T_n$ in this order to a base kernel $k_{\mathrm{base}}$, we denote the transformed kernel by $k_{\mathrm{base} \to T_1 \to T_2 \to \hdots \to T_n}$. Of course, we can only cover selected transformations relevant to our applications, and other transformations such as sums or products of kernels are possible as well.

\begin{table}[tb]
\centering
\begin{tabular}{cC{0.34\textwidth}ccc}
Notation & Description & $\dfeatin$ & $\dfeatout$ & Configurable $\sigma^2$? \\
\hline
$k_{\to \scale{\calX}}$ & Rescale kernel to normalize mean $k(\bfx, \bfx)$ on $\calX$ & any & $\dfeatin$ & no \\
$k_{\to \post{\calX,\sigma^2}}$ & GP posterior covariance after observing $\calX$ & any & $\dfeatin$ & yes  \\
$k_{\to \calX}$ & Short for $k_{\to\scale{\calX}\to\post{\calX,\sigma^2}}$ & any & $\dfeatin$ & yes \\
$k_{\to\rp{p}}$ & Sketching with $p$ features & $< \infty$ & $p$ & no \\
$k_{\to\ens{\Nens}}$ & Sum of kernels for $\Nens$ ensembled networks & any & $\Nens\dfeatin$ & no \\
$k_{\to\acsgrad}$ & Gradient-based kernel from \cite{pinsler_bayesian_2019} & any & $\dfeatin^2$ & yes \\
$k_{\to\acsrf{p}}$ & Kernel from \cite{pinsler_bayesian_2019} with $p$ random features & $< \infty$ & $p$ & yes \\
$k_{\to\acsrfhyper{p}}$ & Kernel from \cite{pinsler_bayesian_2019} with $p$ random features and hyperprior on $\sigma^2$ & $< \infty$ & $p$ & no
\end{tabular}
\caption{Overview of our considered kernel transformations that can be applied to a kernel $k$. Here, $\dfeatin$ refers to the feature space dimension of $k$ and $\dfeatout$ refers to the feature space dimension after the transformation. Moreover, $\sigma^2$ refers to the assumed noise variance in the GP model.} \label{table:kernel_transformations}
\end{table}

\subsubsection{Scaling} \label{sec:kernel_transformations:scaling} For a given kernel $k$ with feature map $\phi$ and scaling factor $\lambda \in \bbR$, we can construct the kernel $\lambda^2 k$ with feature map $\lambda \phi$. This scaling can make a difference if we subsequently consider a Gaussian Process (GP) with covariance function $\lambda^2 k$. In this case, $\lambda^2 k(\bfx, \tbfx)$ describes the covariance between $f(\bfx)$ and $f(\tbfx)$ under the prior distribution over functions $f$. Since we train with normalized labels, $\Ntrain^{-1} \sum_{y \in \Ytrain} y_i^2 \approx 1$, we would like to choose the scaling factor $\lambda$ such that $\Ntrain^{-1} \sum_{\bfx \in \Xtrain} \lambda^2 k(\bfx, \bfx) = 1$. Therefore, we propose the automatic scale normalization
\begin{IEEEeqnarray*}{+rCl+x*}
k_{\to\scale{\Xtrain}}(\bfx, \tilde\bfx) \equalDef \lambda^2 k(\bfx, \tilde\bfx), \qquad \lambda \equalDef \left(\frac{1}{\ntrain} \sum_{\bfx \in \Xtrain} k(\bfx, \bfx)\right)^{-1/2}~.
\end{IEEEeqnarray*}

\subsubsection{Gaussian Process Posterior Transformation}  \label{sec:kernel_transformations:post}
For a given kernel $k$ with corresponding feature map $\phi$, we can consider a Gaussian Process (GP) with kernel $k$, which is equivalent to a Bayesian linear regression model with feature map $\phi$: In feature space, we model our observations as $y_i = \bfw^\top \phi(\bfx_i) + \varepsilon_i$ with weight prior $\bfw \sim \calN(\bfzero, \bfI)$ and i.i.d.\ observation noise $\varepsilon_i \sim \calN(0, \sigma^2)$. The random function $f(\bfx_i) \equalDef \bfw^\top \phi(\bfx_i)$ now has the covariance function $\Cov(f(\bfx_i), f(\bfx_j)) = \phi(\bfx_i)^\top \phi(\bfx_j) = k(\bfx_i, \bfx_j)$.

It is well-known, see e.g.\ Section 2.1 and 2.2 in \cite{bishop_pattern_2006}, that the posterior distribution of a Gaussian process after observing the training data $\Dtrain$ with inputs $\Xtrain$ is also a Gaussian process with kernel
\begin{IEEEeqnarray*}{+rCl+x*}
&& k_{\to\post{\Xtrain, \sigma^2}}(\bfx, \tbfx) \\
 & \equalDef & \Cov(f(\bfx), f(\tbfx) \mid \Xtrain, \Ytrain) \\
    &=& k(\bfx, \tbfx) - k(\bfx, \Xtrain) (k(\Xtrain, \Xtrain) + \sigma^2 \bfI)^{-1} k(\Xtrain, \tbfx)\quad  \IEEEyesnumber \label{eq:posterior_kernel} \\
    &\stackrel{\text{see below}}{=}& \phi(\bfx)^\top (\sigma^{-2} \phi(\Xtrain)^\top \phi(\Xtrain) + \bfI)^{-1} \phi(\tbfx) \IEEEyesnumber \label{eq:posterior_fm} \\
    &=& \sigma^2 \phi(\bfx)^\top (\phi(\Xtrain)^\top \phi(\Xtrain) + \sigma^2 \bfI)^{-1} \phi(\tbfx)~. \IEEEyesnumber \label{eq:posterior_fm_other}
\end{IEEEeqnarray*}
Here, the equivalence between \eqref{eq:posterior_kernel} and \eqref{eq:posterior_fm} for $\sigma^2 > 0$ can be obtained using the Woodbury matrix identity. In our implementation, we use the feature map version, \eqref{eq:posterior_fm}, whenever $\dfeat \leq \max\{1024, 3|\Xtrain|\}$. An explicit feature map can be obtained from \eqref{eq:posterior_fm_other} as
\begin{IEEEeqnarray*}{+rCl+x*}
\phi_{\to\post{\Xtrain,\sigma^2}}(\bfx) & = & \sigma (\phi(\Xtrain)^\top \phi(\Xtrain) + \sigma^2 \bfI)^{-1/2} \phi(\bfx)~.
\end{IEEEeqnarray*}
If the posterior with respect to two disjoint sets of inputs $\calX_1, \calX_2 \subseteq \bbR^d$ is sought, it is equivalent to condition first on $\calX_1$ and then on $\calX_2$:
\begin{IEEEeqnarray*}{+rCl+x*}
k_{\to\post{\calX_1\cup\calX_2, \sigma^2}}(\bfx, \tilde\bfx) = k_{\to\post{\calX_1,\sigma^2}\to\post{\calX_2,\sigma^2}}(\bfx, \tilde\bfx)~. \IEEEyesnumber \label{eq:posterior_composition}
\end{IEEEeqnarray*}
In our experiments, we rescale kernels before applying the posterior transformation, which we abbreviate by
\begin{IEEEeqnarray*}{+rCl+x*}
k_{\to\Xtrain}(\bfx, \tilde\bfx) \equalDef k_{\to\scale{\Xtrain}\to\post{\Xtrain,\sigma^2}}(\bfx, \tilde\bfx)~.
\end{IEEEeqnarray*}

The application of the posterior transformation to network-dependent kernels can be seen as an instance of approximate inference with Bayesian NNs. Specifically, we show in \Cref{sec:appendix:posterior} that $k_{\mathrm{ll}\to\post{\Xtrain, \sigma^2}}$ and $k_{\mathrm{grad}\to\post{\Xtrain, \sigma^2}}$ correspond to last-layer and generalized Gauss-Newton (GGN) approximations to the Hessian in a Laplace approximation \citep{laplace_memoire_1774, mackay_bayesian_1992} for Bayesian NNs, see also \cite{khan_approximate_2019}. Moreover, in the case of the last-layer kernel, this procedure is equivalent to interpreting the last layer of the NN as a Bayesian linear regression model. %

\subsubsection{Sketching} \label{sec:kernel_transformations:rp} 
Sketching methods, which allow approximating matrices like $\phi(\calX)$ with smaller matrices in some sense, can be used to approximate a kernel $k$ with high-dimensional feature space by a kernel with a lower-dimensional feature space \citep[see e.g.][]{woodruff_sketching_2014}. 
For example, $k_{\mathrm{grad}}$ and kernels resulting from the ACS gradient transformation introduced in \Cref{sec:kernel_transformations:acsgrad} involve product kernels with very high-dimensional feature spaces ($\dfeat > 250,000$). In our experiments, we apply sketching mainly to these kernels. This is especially useful for methods 
such as the posterior transformation discussed previously and the \FrankWolfe{} and \Bait{} selection methods explained in \Cref{sec:specific_sel_methods}, which are not very efficient in the kernel formulation.

We sketch finite-dimensional feature maps as follows: %
\begin{enumerate}[(1)]
\item \textbf{Generic finite-dimensional feature maps:} Consider a generic kernel $k$ with finite-dimensional feature map $\phi: \bbR^{d_0} \to \bbR^{\dfeatin}$. For a random vector $\bfu \sim \calN(0, \bfI_{\dfeatin})$, a single random feature is given by the feature map $\phi_{\bfu}(\bfx) \equalDef \bfu^\top \phi(\bfx)$, which yields an unbiased estimate of the kernel since $\bbE_{\bfu} \langle \phi_{\bfu}(\bfx), \phi_{\bfu}(\tilde\bfx)\rangle = k(\bfx, \tilde\bfx)$. By combining multiple such random features, the accuracy of the kernel approximation can be improved. For $p$ random features, we obtain the random feature map %
\begin{IEEEeqnarray*}{+rCl+x*}
\phi_{\to\rp{p}}(\bfx) \equalDef \frac{1}{\sqrt{p}} \bfU \phi(\bfx) \in \bbR^{p}~, \IEEEyesnumber \label{eq:gaussian_sketch}
\end{IEEEeqnarray*}
where $\bfU \in \bbR^{p \times \dfeatin}$ is a random matrix with i.i.d.\ standard normal entries. This is also known as a Gaussian sketch. %

In terms of the kernel distance
\begin{IEEEeqnarray*}{+rCl+x*}
d_k(\bfx, \tilde\bfx) \equalDef \|\phi(\bfx) - \phi(\tilde\bfx)\|_2 = \sqrt{k(\bfx, \bfx) + k(\tilde\bfx, \tilde\bfx) - 2k(\bfx, \tilde\bfx)}~, \IEEEyesnumber \label{eq:kernel_distance}
\end{IEEEeqnarray*}
the approximation quality of the sketched kernel can be analyzed using variants of the celebrated Johnson-Lindenstrauss lemma \citep{johnson_extensions_1984}. For example, the following variant is proved in \Cref{sec:appendix:random_projections} based on a result by \cite{arriaga_algorithmic_1999}. \\

\begin{restatable}[Variant of the Johnson-Lindenstrauss Lemma]{theorem}{thmjl} \label{thm:jl}
Let $\varepsilon, \delta \in (0, 1)$ and let $\calX \subseteq \bbR^d$ be finite. If
\begin{equation}
    p \geq 8\log(|\calX|^2/\delta) / \varepsilon^2~, \label{eq:min_dim}
\end{equation}
then the following bound on all pairwise distances holds with probability $\geq 1-\delta$ for the Gaussian sketch in \eqref{eq:gaussian_sketch}:
\begin{equation}
    \forall \bfx, \tilde\bfx \in \calX: (1-\varepsilon) d_k(\bfx, \tilde\bfx) \leq d_{k_{\to\rp{p}}}(\bfx, \tilde\bfx) \leq (1+\varepsilon) d_k(\bfx, \tilde\bfx)~. \label{eq:eps_isometry}
\end{equation}
\end{restatable}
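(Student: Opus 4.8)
The plan is to reduce the simultaneous statement \eqref{eq:eps_isometry} to a single-pair concentration estimate, apply it with a union bound, and (if one works with squared norms) transfer from squared distances to distances. First I would fix an arbitrary pair $\bfx, \tilde\bfx \in \calX$ and set $\bfv \equalDef \phi(\bfx) - \phi(\tilde\bfx) \in \bbR^{\dfeatin}$, so that $d_k(\bfx, \tilde\bfx) = \|\bfv\|_2$ by \eqref{eq:kernel_distance}. By linearity of the sketch in \eqref{eq:gaussian_sketch} we have $\phi_{\to\mathrm{rp}(p)}(\bfx) - \phi_{\to\mathrm{rp}(p)}(\tilde\bfx) = \tfrac{1}{\sqrt{p}} \bfU \bfv$, hence $d_{k_{\to\mathrm{rp}(p)}}(\bfx, \tilde\bfx) = \tfrac{1}{\sqrt{p}}\|\bfU\bfv\|_2$. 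Thus \eqref{eq:eps_isometry} is exactly the assertion that the two-sided bound $(1-\varepsilon)\|\bfv\| \le \tfrac{1}{\sqrt p}\|\bfU\bfv\| \le (1+\varepsilon)\|\bfv\|$ holds simultaneously over all pairs. The degenerate case $\bfv = \bfzero$ (in particular $\bfx = \tilde\bfx$) is trivial, since both distances then vanish.

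The key step is a concentration estimate for a single fixed nonzero $\bfv$. Writing $\bfu_1^\top, \dots, \bfu_p^\top$ for the rows of $\bfU$, each $\bfu_i^\top \bfv / \|\bfv\|$ is a standard normal, so that $\tfrac{1}{p}\|\bfU\bfv\|^2 / \|\bfv\|^2 = \tfrac{1}{p}\sum_{i=1}^p (\bfu_i^\top\bfv/\|\bfv\|)^2$ is a normalized $\chi^2_p$ random variable with mean $1$. Invoking the Gaussian-sketch tail bound of \cite{arriaga_algorithmic_1999} (a Chernoff estimate on the moment generating function of $\chi^2_p$) yields, for each fixed pair and each $\varepsilon \in (0,1)$, a per-pair failure probability $\bbP\big[\tfrac{1}{\sqrt p}\|\bfU\bfv\| \notin [(1-\varepsilon)\|\bfv\|, (1+\varepsilon)\|\bfv\|]\big] \le 2\,e^{-\varepsilon^2 p/8}$.

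It then remains to apply a union bound over the at most $\binom{|\calX|}{2} \le |\calX|^2/2$ distinct pairs with $\bfv \neq \bfzero$, giving a total failure probability of at most $|\calX|^2 e^{-\varepsilon^2 p/8}$. The hypothesis \eqref{eq:min_dim}, namely $p \ge 8\log(|\calX|^2/\delta)/\varepsilon^2$, is precisely what forces $|\calX|^2 e^{-\varepsilon^2 p/8} \le \delta$, so \eqref{eq:eps_isometry} holds with probability $\ge 1-\delta$.

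The main obstacle is the single-pair concentration estimate, and specifically recovering the constant $8$ that matches the stated dimension bound. If one argues through the squared norm with the elementary Dasgupta--Gupta $\chi^2$ tails, the upper and lower tails carry different exponents and the combined bound behaves like $e^{-(\varepsilon^2-\varepsilon^3)p/4}$, which yields the clean threshold only for $\varepsilon \le 1/2$; securing a symmetric $2\,e^{-\varepsilon^2 p/8}$ bound valid on all of $(0,1)$ is the technical heart and is where the cited result of \cite{arriaga_algorithmic_1999} is used. A secondary, routine point is the passage from squared distances to distances: if one prefers to control $\|\bfU\bfv\|^2/p$ directly, the elementary inequalities $\sqrt{1-\varepsilon}\ge 1-\varepsilon$ and $\sqrt{1+\varepsilon}\le 1+\varepsilon$ for $\varepsilon\in[0,1]$ show that a relative error $\varepsilon$ on squared distances transfers to a relative error at most $\varepsilon$ on the distances themselves, so no loss in $\varepsilon$ is incurred.
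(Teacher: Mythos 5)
Your proposal is correct and follows essentially the same route as the paper: a per-pair concentration bound of $2e^{-\varepsilon^2 p/8}$ taken from Theorem~1 of \cite{arriaga_algorithmic_1999}, a union bound over the at most $|\calX|^2/2$ nontrivial pairs, and the observation that $(1-\varepsilon)^2 \le 1-\varepsilon \le 1+\varepsilon \le (1+\varepsilon)^2$ transfers the guarantee from squared distances to distances without loss in $\varepsilon$. The extra discussion of the Dasgupta--Gupta tails is not needed but does not detract from the argument.
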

Note that, counterintuitively, the lower bound in \eqref{eq:min_dim} does not depend on the feature space dimension $\dfeatin$ of $k$.

\item \textbf{Sums of kernels:} Consider the sum kernel $k = k_1 + k_2$ of kernels $k_1, k_2$ with finite-dimensional feature maps $\phi_1, \phi_2$. A feature map for $k$ is given by $\phi(\bfx) \equalDef \begin{pmatrix}\phi_1(\bfx) \\ \phi_2(\bfx)\end{pmatrix}$. We can apply sketching as
\begin{equation}
\phi_{\to\rp{p}}(\bfx) \equalDef \phi_{1\to\rp{p}}(\bfx) + \phi_{2\to\rp{p}}(\bfx)~. \label{eq:sum_sketch}
\end{equation}
This again yields an unbiased estimate of the kernel $k$. If $\phi_{1\to\rp{p}}$ and $\phi_{2\to\rp{p}}$ are sketched as in \eqref{eq:gaussian_sketch}, then \eqref{eq:sum_sketch} is equivalent to sketching $\phi$ with \eqref{eq:gaussian_sketch} directly.

\item \textbf{Products of kernels:} Consider the product kernel $k = k_1 \cdot k_2$ of kernels $k_1, k_2$ with finite-dimensional feature maps $\phi_1, \phi_2$. A feature map for $k$ is given by $\phi(\bfx) \equalDef \phi_1(\bfx) \otimes \phi_2(\bfx)$, where $\otimes$ is the tensor product. Hence, if the feature spaces of $\phi_1$ and $\phi_2$ have dimensions $p_1$ and $p_2$, respectively, the feature space of $\phi$ has dimension $p_1 p_2$. While this dimension is still finite, using \eqref{eq:gaussian_sketch} for sketching would potentially require a large amount of memory for storing $\bfU$ as well as a large runtime for the matrix-vector product. Therefore, we sketch product kernels more efficiently as
\begin{equation}
\phi_{\to\rp{p}}(\bfx) \equalDef \sqrt{p} \phi_{1\to\rp{p}}(\bfx) \odot \phi_{2\to\rp{p}}(\bfx)~, \label{eq:prod_sketch}
\end{equation}
where $\odot$ denotes the element-wise product (or Hadamard product). This again yields an unbiased estimator of $k$ without the need to perform computations in the $p_1p_2$-dimensional feature space. While this simple tensor sketching method works sufficiently well for our purposes, its approximation properties are suboptimal and can be improved with a more complicated sketching method \citep{ahle_oblivious_2020}. %
\end{enumerate}

In the kernel $k_{\mathrm{grad}\to\rp{p}\to\post{\Xtrain, \sigma^2}}$, the inclusion of sketching can be considered a further approximation to the posterior predictive distribution for Bayesian NNs. In this context, a different sketching method has been proposed by \cite{sharma_sketching_2021}. It is also possible to apply sketching to kernels with infinite-dimensional feature space \citep[see e.g.][]{kar_random_2012, zandieh_scaling_2021, han_random_2021}, but such kernels are less relevant in our case.

\subsubsection{Ensembling} Ensembles of NNs have been demonstrated to yield good uncertainty estimates for DAL \citep{beluch_power_2018} and can improve the uncertainty estimates of MC Dropout \citep{pop_deep_2018}. This motivates the study of ensembled kernels. When multiple NNs are trained on the same data and a kernel $k^{(i)}$ is computed for each model $i \in \{1, \hdots, \Nens\}$ via a base kernel and a list of transformations, these kernels can be ensembled simply by adding them together:
\begin{IEEEeqnarray*}{+rCl+x*}
k_{\to\mathrm{ens}(\Nens)} = k^{(1)} + \hdots + k^{(\Nens)}~.
\end{IEEEeqnarray*}

In the context of Bayesian NNs, ensembling of posterior kernels is related to a mixture of Laplace approximations \citep{eschenhagen_mixtures_2021}, cf.\ \Cref{sec:appendix:kernel_transformations}. %

\subsubsection{ACS Random Features Transformation} \label{sec:kernel_transformations:acs_rf}
In the following two paragraphs, we will briefly introduce multiple kernel transformations corresponding to various alternative ways of applying the ACS-FW method by \cite{pinsler_bayesian_2019} to GP regression. For a more complete description, we refer to the original publication.
ACS-FW seeks to approximate the expected complete data log posterior, $f_{\mathrm{pool}}(\bftheta) = \bbE_{\Ypool \sim P(\Ypool | \Xpool, \Dtrain)} \log p(\bftheta \mid \Dtrain, \Xpool, \Ypool)$, with the expected log posterior of the train data and the next batch, $f_{\mathrm{batch}}(\bftheta) = \bbE_{\Ybatch \sim P(\Ybatch | \Xbatch, \Dtrain)} \log p(\bftheta \mid \Dtrain, \Xbatch, \Ybatch)$. Here, the labels $\Ypool$ and $\Ybatch$ are drawn from the posterior distribution after observing $\Dtrain$. For a given Bayesian model, they then define different kernels resulting from this objective. As a Bayesian model, we use the same Gaussian process model as for the posterior transformation above, with kernel $k_{\to\scale{\Xtrain}}$. 
In this case, as shown in \Cref{sec:appendix:acs_rf}, we have $f_{\mathrm{pool}}(\bftheta) - f_{\mathrm{batch}}(\bftheta) = \sum_{\bfx \in \Xpool \setminus \Xbatch} \facs(\bfx, \bftheta)$ with
\begin{IEEEeqnarray*}{+rCl+x*}
\facs(\bfx, \bftheta) \equalDef \frac{1}{2} \log\left(1 + \frac{k_{\to\Xtrain}(\bfx, \bfx)}{\sigma^2}\right) - \frac{(\bftheta^\top \phi_{\to\scale{\Xtrain}}(\bfx))^2 + k_{\to\Xtrain}(\bfx, \bfx)}{2\sigma^2}~.\quad \IEEEyesnumber \label{eq:acs_function}
\end{IEEEeqnarray*}
The weighted inner product by \citep{pinsler_bayesian_2019} can then be written as
\begin{IEEEeqnarray*}{+rCl+x*}
k_{\to\acs}(\bfx, \tilde\bfx) \equalDef \bbE_{\bftheta \sim P(\bftheta \mid \Dtrain)} [\facs(\bfx, \bftheta) \facs(\tilde\bfx, \bftheta)]~. \IEEEyesnumber \label{eq:acs_kernel}
\end{IEEEeqnarray*}
The expectation in \eqref{eq:acs_kernel} can be approximated using Monte Carlo quadrature as
\begin{IEEEeqnarray*}{+rCl+x*}
k_{\to\acs}(\bfx, \tilde\bfx) & \approx & k_{\to\acsrf{\dfeatout}} \equalDef \frac{1}{\dfeatout} \sum_{i=1}^{\dfeatout} [\facs(\bfx, \bftheta^{(i)}) \facs(\tilde\bfx, \bftheta^{(i)})]~,
\end{IEEEeqnarray*}
where $\bftheta^{(i)} \sim P(\bftheta \mid \Dtrain)$ are i.i.d.\ parameter samples from the posterior. This corresponds to the random features approximation proposed by \cite{pinsler_bayesian_2019}, given by
\begin{IEEEeqnarray*}{+rCl+x*}
\phi_{\to\acsrf{\dfeatout}}(\bfx) \equalDef \dfeatout^{-1/2} (\facs(\bfx, \bftheta^{(1)}), \hdots, \facs(\bfx, \bftheta^{(\dfeatout)}))^\top~.
\end{IEEEeqnarray*}

In their regression experiments, \cite{pinsler_bayesian_2019} use a slightly different feature map which we denote by $\phi_{\to\acsrfhyper{\dfeatout}}$ in our experiments. They state that this has been derived from a GP model with a hyperprior on $\sigma^2$, although no hints on its derivation are provided in their paper, so we directly use their source code for our implementation.

\subsubsection{ACS Gradient Transformation} \label{sec:kernel_transformations:acsgrad}
As an alternative to the random features approximation in the previous paragraph, \cite{pinsler_bayesian_2019} proposed the \emph{weighted Fisher inner product} given by
\begin{IEEEeqnarray*}{+rCl+x*}
k_{\to\acsgrad}(\bfx, \tilde\bfx) \equalDef \bbE_{\bftheta \sim P(\bftheta \mid \Dtrain)} [\langle \nabla_{\bftheta} \facs(\bfx, \bftheta),  \nabla_{\bftheta} \facs(\tilde\bfx, \bftheta)\rangle]~. \IEEEyesnumber \label{eq:acs_grad_kernel}
\end{IEEEeqnarray*}
Under the Gaussian process model, they showed that an explicit formula for $k_{\to\acsgrad}$ is given by
\begin{IEEEeqnarray*}{+rCl+x*}
k_{\to\acsgrad}(\bfx, \tilde\bfx) = \frac{1}{\sigma^4} k_{\to\scale{\Xtrain}}(\bfx, \tilde\bfx) k_{\to\Xtrain}(\bfx, \tilde\bfx)~.
\end{IEEEeqnarray*}
Since this product kernel can have a high-dimensional feature space, they used $k_{\to\acsgrad}$ only on small data sets. In our experiments, we apply sketching to this kernel to scale it to large data sets. Again, they specified that they included a hyperprior on $\sigma^2$ in their experiments, but their corresponding implementation appears to be equivalent to $k_{\to\acsgrad}$ for our purposes.

\subsection{Discussion}

Out of the base kernels and kernel transformations considered above, the training labels $\Ytrain$ only influence the base kernels $k_{\mathrm{ll}}$ and $k_{\mathrm{grad}}$ through the trained parameters $\bftheta_T$, and to some extent the transformation $k_{\to\acsrfhyper{p}}$. Using $k_{\mathrm{lin}}$ and $k_{\mathrm{nngp}}$ with the selection methods below thus leads to passive learning or experimental design, where the entire set of training inputs $\Xtrain$ is selected before any of the labels $\Ytrain$ are computed. This can be much cheaper because no NN retraining is required, but is also potentially less accurate.

Another consideration to be made when selecting a kernel is the feature space dimension $\dfeat$. While a low $\dfeat$ is usually beneficial for runtime purposes, larger $\dfeat$ might allow for a more accurate representation of over-parameterized NNs. For $k_{\mathrm{lin}}$, $\dfeat$ depends on the data set but is often rather small. For $k_{\mathrm{ll}}$, $\dfeat$ can be reduced using sketching, but an effective increase of $\dfeat$ requires increasing the width of the last hidden layer, which might not always be desirable. For $k_{\mathrm{grad}}$, $\dfeat$ is typically very large, and can be flexibly adjusted by using sketching. For $k_{\mathrm{nngp}}$, we have $\dfeat = \infty$, but sketching may also be applicable, see \cite{han_random_2021} for a similar application to infinite-width NTKs.

\section{Selection Methods} \label{sec:sel_methods}

In the following, we will discuss a variety of kernel-based selection methods. We first introduce the general iterative scheme that all evaluated methods (except \Bait{}-FB) use, with its two variants called P (for pool) and TP (for train+pool). Subsequently, we explain specific selection methods. 

\subsection{Iterative Selection Methods} \label{sec:iterative_sel_methods}

A natural approach towards selecting $\Xbatch$ is to formulate an acquisition function $a$ which scores an entire batch, such that $a(\Xbatch)$ should be maximized over all $\Xbatch \subseteq \Xpool$ of size $\Nbatch$. However, the corresponding optimization problem is often intractable \citep{gonzalez_clustering_1985, civril_exponential_2013}.
Many BMAL methods thus select points in a greedy/iterative fashion. To favor samples with high informativeness in an iterative selection scheme that tries to enforce diversity of the selected batch, two approaches can be used:
\begin{enumerate}[(a)]
\item[(P)] Informativeness can be incorporated through the kernel. For example, $k_{\to\Xtrain}(\bfx, \bfx)$ represents the posterior variance at $\bfx$ of a GP with scaled kernel $k_{\to\scale{\Xtrain}}$.
\item[(TP)] Informativeness can be incorporated implicitly by enforcing diversity of $\Xtrain \cup \Xbatch$ instead of only enforcing diversity of $\Xbatch$. In other words, a batch that is sufficiently different from the training set typically necessarily contains new information.
\end{enumerate}
An iterative selection template with the two variants P and TP is shown in \Cref{alg:select_simple}, where different choices of \textsc{NextBatch} lead to different selection methods as discussed in \Cref{sec:specific_sel_methods}. 
For simplicity of notation, \Cref{alg:select_simple} does not reuse information in subsequent calls to \textsc{NextBatch}, which however is necessary to make the selection methods more efficient. We provide efficiency-focused pseudocode, which is also used for our implementation, and an analysis of runtime and memory complexities in \Cref{sec:appendix:selection}. Additionally, our implementation usually accelerates kernel computations through suitable precomputations, often by precomputing the features $\phi(\bfx)$ for $\bfx \in \Xtrain \cup \Xpool$.
In our notation, we treat $\Xtrain$ and $\Xpool$ as sets, assuming that all values are distinct. In practice, if multiple identical $\bfx_i$ are contained in $\Xtrain$ and/or $\Xpool$, they should still be treated as distinct.

\begin{algorithm}[tb]
\caption{Iterative selection algorithm template with customizable function \textsc{NextSample}, for which different options will be discussed in \Cref{sec:specific_sel_methods}.} \label{alg:select_simple}
\begin{algorithmic}
\Function{Select}{$k$, $\Xtrain$, $\Xpool$, $\Nbatch$, mode $\in \{$P, TP$\}$}
	\State $\Xbatch \assign \emptyset$
	\State $\Xmode \assign \Xtrain$ if mode $=$ TP else $\emptyset$  \Comment{Points considered as \quot{selected}} %
	\For{$i$ from $1$ to $\Nbatch$}
		
		\State $\Xsel \assign \Xmode \cup \Xbatch$   \Comment{Currently \quot{selected} points}
		\State $\Xrem \assign \Xpool \setminus \Xbatch$   \Comment{Currently unselected points}
		\State $\Xbatch \assign \Xbatch \cup \{\Call{NextSample}{k, \Xsel, \Xrem}\}$
	\EndFor
	\State \Return $\Xbatch$
\EndFunction
\end{algorithmic}
\end{algorithm}

\subsection{Specific Methods} \label{sec:specific_sel_methods}

In the following, we will discuss a variety of choices for \textsc{NextSample} in \Cref{alg:select_simple}, leading to different selection methods. An overview of the resulting selection methods is given in \Cref{table:selection_methods}. \Cref{table:existing_algs} shows how BMAL methods from the literature relate to the presented selection methods and kernels.

\begin{table}[tb]
\centering
\renewcommand{\arraystretch}{1.3}
\begin{tabular}{cC{0.4\textwidth}C{0.32\textwidth}}
Selection method & Description & Runtime complexity \\
\hline
\Random{} & Random selection & $\BigO(\Npool \log \Npool)$ \\
\MaxDiag{} & Naive active learning, picking largest diagonal entries & $\BigO(\Npool(T_k + \log\Npool))$ \\
\MaxDet{} & Greedy determinant maximization & $\BigO(\Ncand\Nsel(T_k + \Nsel))$ or $\BigO(\Ncand\Nsel\dfeat)$ \\
\Bait{}-F & Forward-Greedy total uncertainty minimization & $\BigO(\Ncand\Nsel\dfeat + (\Ntrain+\Npool)\dfeat^2)$ \\
\Bait{}-FB & Forward-Backward-Greedy total uncertainty minimization & $\BigO(\Ncand\Nsel\dfeat + (\Ntrain+\Npool)\dfeat^2)$ \\
\FrankWolfe{} & Approximate kernel mean embedding using Frank-Wolfe & $\BigO((\Ncand + \Npool\Nbatch)\dfeat)$ or $\BigO(\Ncand^2 (T_k + 1))$ \\ %
\MaxDist{} & Greedy distance maximization & $\BigO(\Npool\Nsel (T_k + 1))$ \\
\KMeansPP{} & Next point probability proportional to squared distance & $\BigO(\Npool\Nsel (T_k + 1))$ \\
\LCMD{} (ours) & Greedy distance maximization in largest cluster & $\BigO(\Npool\Nsel (T_k + 1))$
\end{tabular}
\caption{Selection methods presented in this paper and their runtime complexities. For the runtime notation, we let $T_k$ denote the runtime of a kernel evaluation and $\dfeat$ the dimensionality of its (pre-computed) features. Moreover, we write $\Ncand \equalDef \Npool + |\Xmode|$ and $\Nsel \equalDef \Nbatch + |\Xmode|$, with $\Xmode$ as in \Cref{alg:select_simple}. The runtime complexities are derived in \Cref{sec:appendix:selection}. Further refinements of the runtime complexities for \Random{} and \MaxDiag{} are possible but not practically relevant to us, as these methods are already very efficient.
} \label{table:selection_methods}
\end{table}

\begin{table}[tb]
\centering
\renewcommand{\arraystretch}{1.3}
\small
\begin{tabular}{C{0.26\textwidth}cC{0.17\textwidth}C{0.28\textwidth}}
Known as & Selection method & Kernel & Remark \\
\hline
BALD \citep{houlsby_bayesian_2011} & \MaxDiag{} & $k_{\to\post{\Xtrain,\sigma^2}}$ & for GP with kernel $k$ \\
BatchBALD \citep{kirsch_batchbald_2019} & \MaxDet{}-P & $k_{\to\post{\Xtrain,\sigma^2}}$ & for GP with kernel $k$, proposed for classification \\
\Bait{} \citep{ash_gone_2021} & \Bait{}-FB-P & $k_{\mathrm{ll}\to\post{\Xtrain,\sigma^2}}$ \\
ACS-FW \citep{pinsler_bayesian_2019} & \FrankWolfe{}-P & $k_{\mathrm{ll}\to\acsrf{p}}$ or $k_{\mathrm{ll}\to\acsgrad}$ or $k_{\mathrm{ll}\to\acsrfhyper{p}}$ \\
Core-Set \citep{sener_active_2018} & \MaxDist{}-TP${}^*$ & similar to $k_{\mathrm{ll}}$ & proposed for classification \\
FF-Active \citep{geifman_deep_2017} & \MaxDist{}-TP & similar to $k_{\mathrm{ll}}$ & proposed for classification \\
BADGE \citep{ash_deep_2019} & \KMeansPP{}-P & similar to $k_{\mathrm{ll}}$ & proposed for classification \\
\hline
\multicolumn{4}{l}{\footnotesize ${}^*$ This refers to their simpler k-center-greedy selection method.}
\end{tabular}
\caption{Some (regression adaptations of) BM(D)AL methods from the literature and their corresponding selection methods and kernels.} \label{table:existing_algs}
\end{table}

\subsubsection{Random Selection} A simple baseline for comparison to other selection methods, denoted as \Random{}, is to select $\Xbatch$ randomly. We can formally express this as
\begin{IEEEeqnarray*}{+rCl+x*}
\textsc{NextSample}(k, \Xsel, \Xrem) \sim \calU(\Xrem)~,
\end{IEEEeqnarray*}
where $\calU(\Xrem)$ is the uniform distribution over $\Xrem$. Since \textsc{NextSample} does not use $\Xsel$, the P and TP versions of \Random{} are equivalent. %

\subsubsection{Naive Active Learning} If $k(\bfx, \bfx)$ is interpreted as a measure for the uncertainty of the model at $\bfx$, naive active learning can simply be formalized as 
\begin{IEEEeqnarray*}{+rCl+x*}
\NextSample(k, \Xsel, \Xrem) = \argmax_{\bfx \in \Xrem} k(\bfx, \bfx)~. \IEEEyesnumber \label{eq:nextsample_maxdiag}
\end{IEEEeqnarray*}
Since \NextSample{} only considers the diagonal of the kernel matrix $k(\Xrem, \Xrem)$, we call the corresponding selection method \MaxDiag{}. Similar to \Random{}, the P and TP versions of \MaxDiag{} are equivalent. If $k = \tilde k_{\to \Xtrain}$, $k(\bfx, \bfx) + \sigma^2$ represents the posterior predictive variance of a GP with kernel $\tilde k_{\to\scale{\Xtrain}}$ at $\bfx$. 
Unlike in the classification setting, the noise distribution $\varepsilon \sim \calN(0, \sigma^2)$ in the GP model is independent of $\bfx$, which renders different acquisition functions like maximum entropy \citep{shannon_mathematical_1948, mackay_information-based_1992} or BALD \citep{houlsby_bayesian_2011} equivalent to \eqref{eq:nextsample_maxdiag}.
The active learning approach proposed by \cite{zaverkin_exploration_2021} corresponds to applying \MaxDiag{} to $k_{\mathrm{ll}\to\Xtrain}$ in the limit $\sigma^2 \to 0$. Out of the three objectives presented in \Cref{sec:batch_active_learning}, \MaxDiag{} satisfies (INF), but not (DIV) and (REP). 
Indeed, if the pool set contains (almost-) duplicates, \MaxDiag{} may select a batch consisting of (almost) identical inputs.

\subsubsection{Greedy Determinant Maximization} \label{sec:specific_sel_methods:maxdet}
To take account of the inputs $\Xsel$ that have already been selected, it is possible to also condition the GP on the selected values $\Xsel$, since the posterior variance of a GP does not depend on the unknown labels for $\Xsel$. Picking the input with maximal uncertainty after conditioning is equivalent to maximizing a determinant, as we show in \Cref{sec:appendix:maxdet}:
\begin{IEEEeqnarray*}{+rCl+x*}
\NextSample(k, \Xsel, \Xrem) & = & \argmax_{\bfx \in \Xrem} k_{\to\post{\Xsel,\sigma^2}}(\bfx, \bfx) \\
& = & \argmax_{\bfx \in \Xrem} \det(k(\Xsel \cup \{\bfx\}, \Xsel \cup \{\bfx\}) + \sigma^2 \bfI)~, \IEEEyesnumber \label{eq:maxdet}
\end{IEEEeqnarray*}
We call the corresponding selection method \MaxDet{}. It is equivalent to performing non-batch mode active learning on the GP with kernel $k$, and has been applied to GPs by \cite{seo_gaussian_2000}. Moreover, as we show in \Cref{sec:appendix:maxdet}, it is also equivalent to applying BatchBALD \citep{kirsch_batchbald_2019} to the GP with kernel $k$. If $\sigma^2 = 0$, it is equivalent to the $P$-greedy method for kernel interpolation \citep{de_marchi_near-optimal_2005}, and it is also related to the greedy algorithm for D-optimal design \citep{wynn_sequential_1970}. In comparison to a naive implementation that computes each determinant separately, the runtime complexity of the determinant computation in \MaxDet{} can be reduced by a factor of $\BigO(\Nsel^2)$ to $\BigO(\Ncand\Nsel(T_k + \Nsel))$ when implementing \MaxDet{} via a partial pivoted matrix-free Cholesky decomposition as suggested in \cite{pazouki_bases_2011}. For the case $\Nsel \gg \dfeat$, we show in \Cref{sec:appendix:maxdet} how the runtime complexity can be reduced further. For given $\sigma^2 > 0$ in \eqref{eq:maxdet}, it follows from \eqref{eq:posterior_composition} that applying \MaxDet{}-TP to a kernel $k$ is equivalent to applying \MaxDet{}-P to $k_{\to\post{\Xtrain,\sigma^2}}$ (with the same $\sigma^2$). %

\subsubsection{Greedy total uncertainty minimization} \label{sec:specific_sel_methods:bait}
While \MaxDet{} satisfies (INF) and (DIV), it does not satisfy (REP) since it does not incorporate the pool set distribution. To fix this, \cite{ash_gone_2021} propose \Bait{}. The regression version of \Bait{} tries to minimize the sum of the GP posterior variances on the training and pool set.\footnote{There is an \quot{unregularized} version of \Bait{} that extends to classification, but we use the \quot{regularized} version with $\sigma^2 > 0$ here since it is more natural for GPs and avoids numerical issues.} In other words, \Bait{} aims to minimize the acquisition function
\begin{IEEEeqnarray*}{+rCl+x*}
a(\Xsel) & \equalDef & \sum_{\tbfx \in \Xtrain \cup \Xpool} k_{\to\post{\Xsel, \sigma^2}}(\tbfx, \tbfx)~. \IEEEyesnumber \label{eq:bait_acq}
\end{IEEEeqnarray*}
In \Cref{sec:appendix:bait}, we show that \eqref{eq:bait_acq} is equivalent to the original \Bait{} formulation.
This is also known as (Bayesian) V-optimal design \citep{montgomery_design_2017} and a similar method has been studied for NNs by \cite{cohn_neural_1996}. \cite{ash_gone_2021} propose two alternative methods for efficient approximate optimization of this acquisition function. The first one, which we call \Bait{}-F, is to simply use greedy selection as for the other methods in this section. The second alternative, which we call \Bait{}-FB, greedily selects $2\Nbatch$ points (forward step) and then greedily removes $\Nbatch$ points (backward step). In our framework, we can define \Bait{}-F by
\begin{IEEEeqnarray*}{+rCl+x*}
\NextSample(k, \Xsel, \Xrem) & \equalDef & \argmin_{\bfx \in \Xrem} \sum_{\tbfx \in \Xtrain \cup \Xpool} k_{\to \post{\Xsel \cup \{\bfx\}, \sigma^2}}(\tbfx, \tbfx)~.
\end{IEEEeqnarray*}
Details on \Bait{}-F and \Bait{}-FB are given in \Cref{sec:appendix:bait}. Like for \MaxDet{}, applying \Bait{}-F-TP or \Bait{}-FB-TP to a kernel $k$ is equivalent to applying \Bait{}-F-P or \Bait{}-FB-P to $k_{\to\post{\Xtrain,\sigma^2}}$ (with the same $\sigma^2$).

\subsubsection{Frank-Wolfe optimization} In order to make $\Xbatch$ representative of the pool set, \cite{pinsler_bayesian_2019} suggest to choose $\Xbatch$ such that $\sum_{\bfx \in \Xpool} \phi(\bfx)$ is well-approximated by $\sum_{\bfx \in \Xbatch} w_{\bfx} \phi(\bfx)$, where $w_{\bfx}$ are non-negative weights. Specifically, they propose to apply the Frank-Wolfe optimization algorithm to a corresponding optimization problem, which automatically selects elements of $\Xbatch$ iteratively. This can be seen as an attempt to represent the distribution of $\Xpool$ with $\Xbatch$ by approximating the empirical kernel mean embedding $\Npool^{-1} \sum_{\bfx \in \Xpool} k(\bfx, \cdot)$ using $\Xbatch$. The corresponding selection method can be implemented in kernel space or feature space. Since the kernel space version scales quadratically with $\Npool$, \cite{pinsler_bayesian_2019} use the feature space version for large pool sets. We also use the feature space version for our experiments and show the pseudocode in \Cref{sec:appendix:frankwolfe}. While the version by \cite{pinsler_bayesian_2019} allows to select the same $\bfx \in \Xpool$ multiple times, we prohibit this as it would allow to select smaller batches and thus prevent a fair comparison to other methods. %

\subsubsection{Greedy distance maximization} A simple strategy to enforce the diversity of a set of points is to greedily select points with maximum distance to all previously selected points. The resulting algorithm has been frequently proposed in the literature under different names, see \Cref{sec:appendix:maxdist}. In our case, the kernel $k$ gives rise to a distance measure $d_k(\bfx, \tbfx)$ as in \eqref{eq:kernel_distance}.
With this distance measure, the \MaxDist{} selection method is specified by
\begin{IEEEeqnarray*}{+rCl+x*}
\NextSample(k, \Xsel, \Xrem) & = & \argmax_{\bfx \in \Xrem} \min_{\bfx' \in \Xsel} d_k(\bfx, \bfx')~.
\end{IEEEeqnarray*}
If the $\argmax$ is not unique, an arbitrary maximizer is chosen. If $\Xsel$ is empty, we choose $\argmax_{\bfx \in \Xrem} k(\bfx, \bfx)$.

The use of \MaxDist{}-TP  with $k_{\mathrm{lin}}$ for BMAL has been suggested by \cite{yu_passive_2010}, and with a kernel similar to $k_{\mathrm{ll}}$ for BMDAL by \cite{sener_active_2018} and \cite{geifman_deep_2017}. \cite{sener_active_2018} also alternatively propose a more involved discrete optimization algorithm. 
In their experiments, \MaxDist{} yielded only slightly worse results than the more involved optimization algorithm while being significantly faster and easier to implement. They note that the batch selected by \MaxDist{} is suboptimal with respect to a covering objective by at most a factor of two. In \Cref{sec:appendix:maxdist}, we show that a similar guarantee can be given when applying \MaxDist{} to a sketched approximation of the desired kernel. The use of dimensionality reduction for \MaxDist{} has also been analyzed by \cite{eppstein_approximate_2020}. 
Inspired by the reasoning of \cite{sener_active_2018}, we interpret the distances as uncertainty estimates: If both the optimal regression function $f_*$ and the learned regression function $f_{\bftheta_T}$ are $L$-Lipschitz with respect to $d_k$, and we have $y_i = f_*(\bfx_i) = f_{\bftheta_T}(\bfx_i)$ on the training set, then we have the worst-case bound
\begin{IEEEeqnarray*}{+rCl+x*}
|f_{\bftheta_T}(\bfx) - f_*(\bfx)| & \leq & 2L \min_{\tilde\bfx \in \Xtrain} d_k(\bfx, \tilde\bfx)~. \IEEEyesnumber \label{eq:dist_uncertainty}
\end{IEEEeqnarray*}
Of course, the Lipschitz constant $L$ might itself depend on $\Xtrain$, so this should only be interpreted as a crude heuristic.
\cite{wenzel_novel_2021} show that for kernel interpolation with Sobolev kernels, \MaxDist{} and \MaxDet{} with $\sigma^2 = 0$ yield asymptotically equivalent convergence rates.

\subsubsection{$k$-means++ seeding} 
Similar to \MaxDet{}, \MaxDist{} enforces (INF) and (DIV) but not (REP). To incorporate (REP), i.e., sample more points from regions with higher pool set density, we can view batch selection as a clustering problem: 
For example, if the distance-based uncertainty estimate in \eqref{eq:dist_uncertainty} holds, we could try to minimize the corresponding upper bound on the pool set MSE $\frac{1}{\Npool} \sum_{\bfx \in \Xpool} |f_{\bftheta_T}(\bfx) - f_*(\bfx)|^2$ after adding $\Xbatch$:
\begin{IEEEeqnarray*}{+rCl+x*}
\Xbatch = \argmin_{\Xbatch \subseteq \Xpool, |\Xbatch| = \Nbatch} \frac{1}{\Npool} \sum_{\bfx \in \Xpool} \min_{\tilde\bfx \in \Xmode \cup \Xbatch} d_k(\bfx, \tilde\bfx)^2~. \IEEEyesnumber \label{eq:sum_squared_dists}
\end{IEEEeqnarray*}
This optimization problem is essentially the k-medoids problem \citep{kaufman_finding_1990}, which combines the objective for the k-means clustering algorithm \citep{lloyd_least_1982} with the constraint that the cluster centers must be chosen from within the data to be clustered. For large pool sets, common k-medoids algorithms can be computationally infeasible.
An efficient approximate k-medoids solution can be computed using the seeding method of the k-means++ algorithm \citep{arthur_k-means_2007, ostrovsky_effectiveness_2006}, which simply chooses the next batch element randomly via the distribution
\begin{IEEEeqnarray*}{+rCl+x*}
\forall \bfx \in \Xrem: P(\NextSample(k, \Xsel, \Xrem) = \bfx) = \frac{\min_{\tilde\bfx \in \Xsel} d_k(\bfx, \tilde\bfx)^2}{\sum_{\bfx' \in \Xrem} \min_{\tilde\bfx \in \Xsel} d_k(\bfx', \tilde\bfx)^2}~,
\end{IEEEeqnarray*}
and if $\Xsel$ is empty, it selects $\NextSample(k, \Xsel, \Xrem)$ uniformly at random from $\Xrem$.
We refer to the corresponding selection method as \KMeansPP{}. For the case $\Xmode = \emptyset$, \cite{arthur_k-means_2007} showed that with respect to the objective in \eqref{eq:sum_squared_dists}, the batch selected by \KMeansPP{} is suboptimal by a factor of at most $16 + 8\log(\Nbatch)$ in expectation. The use of \KMeansPP-P{} for BMDAL has been proposed in the so-called BADGE method by \cite{ash_deep_2019}. In contrast to our setting, BADGE is designed for classification and introduces an uncertainty estimate into $k_{\mathrm{ll}}$ not through a posterior transformation but through the influence of the softmax output layer on the magnitude of the gradients.

\begin{figure}[tb]
\centering
\begin{subfigure}[b]{0.48\textwidth}
\centering
\includegraphics[width=\textwidth]{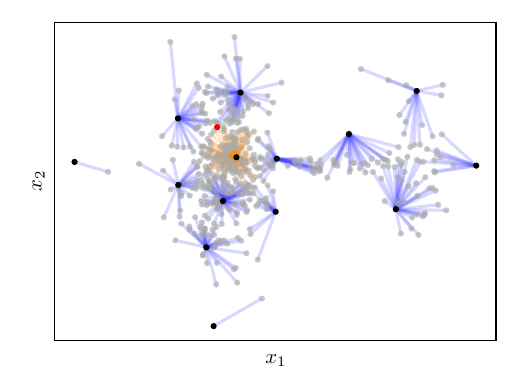}
\caption{One \LCMD{} step.} \label{fig:lcmd_step_1}
\end{subfigure}
\begin{subfigure}[b]{0.48\textwidth}
\centering
\includegraphics[width=\textwidth]{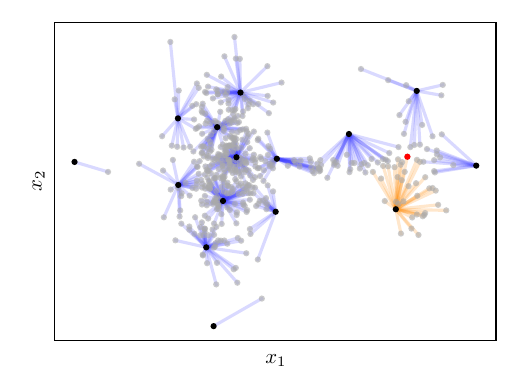}
\caption{Subsequent \LCMD{} step.} \label{fig:lcmd_step_2}
\end{subfigure}
\caption{Two steps of \LCMD{} selection for points $\bfx \in \bbR^2$ with linear feature map $\phi(\bfx) = \bfx$. The black points represent the already selected points $\Xsel$ and the gray points represent the remaining points $\Xrem$. The lines associate each remaining point to the closest selected point, forming clusters. The orange lines represent the cluster with the largest sum of squared distances. The red point, which is the remaining point with the largest distance to the cluster center within the orange cluster, is selected next. In the left plot, it can be seen that the smaller-radius cluster on the left is preferred over the larger-radius clusters on the right due to the higher point density on the left. After adding a point from the cluster, its size shrinks and a cluster on the right becomes dominant, which is shown in the right plot.} \label{fig:lcmd_steps}
\end{figure}

\subsubsection{Largest cluster maximum distance} As an alternative to the randomized \KMeansPP{} method, we propose a novel deterministic method that is inspired by the same objective (\eqref{eq:sum_squared_dists}). This method, which we call \LCMD{} (largest cluster maximum distance), is visualized in \Cref{fig:lcmd_steps}. Intuitively, we enforce (REP) by limiting the selection to the largest cluster, while we also enforce (DIV) by picking the maximum distance point within this cluster. \LCMD{} can be formally defined as follows: We interpret points $\tbfx \in \Xsel$ as cluster centers. For each point $\bfx \in \Xrem$, we define its associated center $c(\bfx) \in \Xsel$ as
\begin{IEEEeqnarray*}{+rCl+x*}
c(\bfx) & \equalDef & \argmin_{\tilde\bfx \in \Xsel} d_k(\bfx, \tilde\bfx)~.
\end{IEEEeqnarray*}
If there are multiple minimizers, we pick an arbitrary one of them. Then, for each center $\tbfx \in \Xsel$, we define the cluster size
\begin{IEEEeqnarray*}{+rCl+x*}
s(\tbfx) \equalDef \sum_{\bfx \in \Xrem: c(\bfx) = \tbfx} d_k(\bfx, \tbfx)^2~.
\end{IEEEeqnarray*} 
We then pick the maximum-distance point from the cluster with maximum size:
\begin{IEEEeqnarray*}{+rCl+x*}
\NextSample(k, \Xsel, \Xrem) = \argmax_{\bfx \in \Xrem: s(c(\bfx)) = \max_{\tbfx \in \Xsel} s(\tbfx)} d_k(\bfx, c(\bfx))~.
\end{IEEEeqnarray*}
As for \MaxDist{}, if $\Xsel$ is empty, we choose $\argmax_{\bfx \in \Xrem} k(\bfx, \bfx)$ instead. If the selection of pool points should be adapted to the distribution of another set $\calX$ instead of $\Xpool$, one may simply compute the cluster sizes based on $\calX$ instead. Importantly, like \KMeansPP{} but unlike some other k-medoids methods, \LCMD{} can be implemented with a runtime complexity that is linear in $\Npool$, as discussed in \Cref{sec:appendix:lcmd}.

\subsubsection{Other options} As mentioned above, \MaxDist{} can be interpreted as a greedy optimization algorithm for a covering objective that is NP-hard to (approximately) optimize \citep{gonzalez_clustering_1985, feder_optimal_1988}, but for which other approximate optimization algorithms have been proposed \citep{sener_active_2018}. Similarly, \MaxDet{}-P attempts to greedily maximize $\det(k(\Xbatch, \Xbatch) + \sigma^2 \bfI)$, which is NP-hard to (approximately) optimize \citep{civril_exponential_2013}, but for which other approximate optimization algorithms have been proposed \citep{biyik_batch_2019}. We do not investigate these advanced optimization algorithms here as they come with greatly increased runtime cost, and \MaxDet{} and \MaxDist{} already enjoy some approximation guarantees, as discussed in \Cref{sec:appendix:selection}.  

\cite{yu_passive_2010}, \cite{wu_pool-based_2018}, and \cite{zhdanov_diverse_2019} suggest less scalable clustering-based approaches for BM(D)AL. Alternatively, it might be interesting to try the greedy $k$-means++ algorithm \citep{celebi_comparative_2013}, which provides a slightly less efficient alternative to the $k$-means++ algorithm.

\cite{caselton_optimal_1984} propose to optimize mutual information between the batch samples and the remaining pool samples, which is analyzed for GPs by \cite{krause_near-optimal_2008}, but does not scale well with the pool set size for GPs, at least in a general kernel-space formulation. Another option is to remove the non-negativity constraint on the weights $w_{\bfx}$ used in \FrankWolfe{}. This setting is also treated in a generalized fashion in \cite{santin_sampling_2021}. An investigation of this method is left to future work.

\subsection{Discussion}

When considering the design criteria from \Cref{sec:batch_active_learning}, we can say that \MaxDiag{} only satisfies (INF), while \MaxDet{} and \MaxDist{} also satisfy (DIV). Arguably, \Bait{}, \KMeansPP{}, and \LCMD{} satisfy all three properties (INF), (DIV) and (REP). The \FrankWolfe{} method is only designed to satisfy (REP), based on which one could argue that (INF) and (DIV) are also satisfied to some extent.

In terms of runtime complexity, as can be seen in \Cref{table:selection_methods}, all considered selection methods are well-behaved for moderate feature space dimension $\dfeat$. When considering kernels such as $k_{\mathrm{grad}}$, whose evaluation is tractable despite having very high feature space dimension, distance-based selection methods are still efficient while \Bait{} and \FrankWolfe{} can become intractable for large pool set sizes, and \MaxDet{} exhibits worse scaling with respect to $\Nbatch$. Moreover, if $\phi(\Xtrain)$ has full rank and $\dfeat \leq \Ntrain$, it follows from \eqref{eq:posterior_fm_other} that in the limit $\sigma^2 \to 0$, the GP posterior uncertainty becomes zero everywhere. Hence, Bayesian posterior-based methods like \MaxDet{}, \Bait{} and \FrankWolfe{} might require $\dfeat \gtrsim \Ntrain$ for good performance, which in turn deteriorates their runtime. 

In the non-batch active learning setting, that is, for $\Nbatch = 1$, some selection methods become equivalent: \LCMD{}-P, \MaxDist{}-P, and \MaxDet{}-P become equivalent to \MaxDiag{}-P; moreover, \KMeansPP{}-P becomes equivalent to \Random{}. This suggests that for $\Nbatch = 1$, TP-mode is necessary for \KMeansPP{} and \LCMD{} to ensure (REP).

\section{Experiments} \label{sec:bmdal:experiments}

To evaluate a variety of combinations of kernels, kernel transformations, and selection methods, we introduce a new open-source benchmark for BMDAL for regression. 
Our benchmark uses 15 large tabular regression data sets, with input dimensions ranging between two and 379, that are selected mostly from the UCI and OpenML repositories, cf.\ the detailed description in \Cref{sec:appendix:data_sets}.
The initial pool set size $\Npool$ for these data sets lies between $31335$ and $198720$. As an NN model, we use a three-layer fully connected NN with 512 neurons in both hidden layers, parameterized as in \Cref{sec:reg_fcnn}. We train the NN using Adam \citep{kingma_adam_2015} for 256 epochs with batch size 256, using early stopping based on a validation set. Results shown here are for the ReLU activation function, but we also re-ran most of our experiments for the SiLU (a.k.a.\ swish) activation \citep{elfwing_sigmoid-weighted_2018}, and unless indicated otherwise, our insights discussed below apply to results for both activation functions. We manually optimized the parameters $\sigma_w, \sigma_b$ and the learning rate separately for both activation functions to optimize the average logarithmic RMSE of \Random{} selection. Details on the NN architecture and training are described in \Cref{sec:appendix:nn_config}. For the hyperparameter $\sigma^2$, which occurs in \MaxDet{}, \Bait{}, and various posterior-based transformations, we found that smaller values typically yield better average results but may cause numerical instabilities. As a compromise, we chose $\sigma^2 = 10^{-6}$ in our experiments and use 64-bit floats for computations involving $\sigma^2$.

In our evaluation, we start with $\ntrain = 256$ and then acquire 16 batches with $\Nbatch = 256$ samples each using the respective BMAL method. We repeat this 20 times with different seeds for NN initialization and different splits of the data into training, validation, pool, and test sets. We measure the mean absolute error (MAE), root mean squared error (RMSE), 95\% and 99\% quantiles, and the maximum error (MAXE) on the test set after each BMAL step. For each of those five error metrics, we average the logarithms of the metric over the 20 repetitions, and, depending on the experiment, over the 16 steps and/or the 15 data sets. Note that a difference of $\delta$ between two logarithmic values corresponds to a ratio $e^\delta \approx 1+\delta$ between the values; for example, a reduction by $\delta = 0.1$ corresponds to a reduction of the geometric mean error by about 10\%. Our most important metric is the RMSE, but we will also put some focus on MAXE since it can be interpreted as a measure of robustness to distribution shifts. Generally, RMSE is more affected by rare but large errors than MAE, while the quantiles and MAXE exclusively focus on rare but large errors.

In the following, we will discuss some of the benchmark results. More detailed results can be found in \Cref{sec:appendix:results}.

\subsection{Comparison to Existing Methods} Based on our detailed evaluation in \Cref{table:all_algs} and \Cref{table:all_algs_silu}, we propose a new BMDAL algorithm as the combination of the \LCMD{}-TP selection method and the kernel $k_{\mathrm{grad}\to\rp{512}}$. \Cref{fig:existing_algs} and \Cref{table:existing_algs_results} show that our proposed combination clearly outperforms other methods from the literature in terms of averaged logarithmic RMSE over our benchmark data sets and random splits. We incorporate the methods from the literature into our framework as shown in \Cref{table:existing_algs_results}, which involves the following modifications:
\begin{itemize}
\item The BALD \citep{houlsby_bayesian_2011} and BatchBALD \citep{kirsch_batchbald_2019} acquisition functions are applied to a last-layer Gaussian Process model.
\item For BAIT, we rescale $k_{\mathrm{ll}}$ based on the training set before applying the posterior transformation, see \Cref{sec:kernel_transformations:scaling}, and we apply regularization by using a small $\sigma^2 > 0$.
\item For ACS-FW \citep{pinsler_bayesian_2019}, we use the \FrankWolfe{} selection method with $k_{\mathrm{ll} \to \acsrfhyper{512}}$. Compared to the experiments by \cite{pinsler_bayesian_2019}, there are several differences: First, we do not permit \FrankWolfe{} to select smaller batches by selecting the same point multiple times. Second, we use 512 random features instead of 10. Third, our acs-rf-hyper transformation first rescales $k_{\mathrm{ll}}$ based on the training set, which we found to improve performance. Fourth, our $k_{\mathrm{ll}}$ kernel incorporates the last-layer bias and not only the weights.
\item By Core-Set, we refer to the k-center-greedy method of \cite{sener_active_2018} applied to $k_{\mathrm{ll}}$, which is also equivalent to FF-Active \citep{geifman_deep_2017}.
\item For BADGE \citep{ash_deep_2019}, which originally incorporates uncertainties into $\phi_{\mathrm{ll}}$ through softmax gradients, we use $\phi_{\mathrm{ll} \to \Xtrain}$ instead of $\phi_{\mathrm{ll}}$.
\end{itemize}
As argued in \Cref{sec:batch_active_learning}, we do not compare to methods that require training with ensembles \citep{krogh_neural_1994}, since ensembles are more expensive to train and these methods are typically designed for non-batch active learning. Moreover, we do not compare to methods that require training with Dropout \citep{tsymbalov_dropout-based_2018} or custom loss functions \citep{ranganathan_deep_2020}, since these methods can change the error for the underlying NN, which makes them difficult to compare fairly and more inconvenient to use.

\begin{table}[tb]
\centering
\renewcommand{\arraystretch}{1.3}
\small
\begin{tabular}{C{0.32\textwidth}cccc}
BMDAL method & Selection method & Kernel & \multicolumn{2}{c}{mean log RMSE ($\downarrow$)} \\
& & & ReLU & SiLU \\
\hline
Supervised learning & \Random{} & --- & -1.401 & -1.406 \\
BALD \citep{houlsby_bayesian_2011} with last-layer GP & \MaxDiag{} & $k_{\mathrm{ll}\to\Xtrain}$ & -1.285 & -1.300 \\
BatchBALD \citep{kirsch_batchbald_2019} with last-layer GP & \MaxDet{}-P & $k_{\mathrm{ll}\to\Xtrain}$ & -1.463 & -1.467 \\
\Bait{} \citep{ash_gone_2021} & \Bait{}-FB-P & $k_{\mathrm{ll}\to\Xtrain}$ & -1.541 & -1.522 \\
ACS-FW \citep{pinsler_bayesian_2019} & \FrankWolfe{}-P & $k_{\mathrm{ll}\to\acsrfhyper{512}}$ & -1.439 & -1.437 \\
Core-Set${}^*$ \citep{sener_active_2018}, FF-Active \citep{geifman_deep_2017} & \MaxDist{}-TP & $k_{\mathrm{ll}}$ & -1.491 & -1.515 \\
BADGE \citep{ash_deep_2019} with last-layer GP uncertainty & \KMeansPP{}-P & $k_{\mathrm{ll}\to\Xtrain}$ & -1.530 & -1.484 \\
Ours & \LCMD{}-TP & $k_{\mathrm{grad}\to\rp{512}}$ & \textbf{-1.590} & \textbf{-1.597} \\
\hline
\multicolumn{5}{l}{\footnotesize ${}^*$ This refers to their simpler k-center-greedy selection method.}
\end{tabular}
\caption{%
Comparison of our BMDAL method against other methods from the literature (cf.\ \Cref{table:existing_algs}). The mean log RMSE is averaged over all data sets, repetitions, and BMAL steps for the respective experiments with ReLU or SiLU activation function. We make small adjustments to the literature methods as described in \Cref{sec:bmdal:experiments}.
} \label{table:existing_algs_results}
\end{table}

\subsection{Evaluated Combinations} Our framework allows us to obtain a vast number of BMDAL algorithms via combinations of base kernels, kernel transformations, and the P and TP modes of different selection methods. \Cref{table:all_algs} and \Cref{table:all_algs_silu} in \Cref{sec:appendix:results} show a large number of such combinations for ReLU and SiLU activations, respectively. These combinations have been selected according to the following principles:
\begin{itemize}
\item Kernels for P-mode selection use posterior-based transformations, while kernels for TP-mode selection do not (see \Cref{sec:iterative_sel_methods}).
\item Sketching and random features always use 512 target features. Similar to the hidden layer size of 512, this number has been selected to be a bit larger than the usually employed $\Nbatch = 256$. Note that due to the bias in the last layer, $k_{\mathrm{ll}}$ has a 513-dimensional feature space.
\item \FrankWolfe{} is only run in P mode (as proposed in \cite{pinsler_bayesian_2019}), since in TP mode, kernel mean embeddings in 512-dimensional feature space would be approximated using more than 512 samples. Note that \cite{pinsler_bayesian_2019} only use 10 instead of 512 random features in their experiments, leading to a worse approximation.
\item Due to the equivalence between P mode with posterior transformation and TP mode without posterior transformation mentioned in \Cref{sec:specific_sel_methods:maxdet}, \textsc{Bait} is always run in P mode, and \MaxDet{} is mostly run in P mode except for $k_{\mathrm{grad}}$ and $k_{\mathrm{nngp}}$ due to their high-dimensional feature space.
\end{itemize}

In general, we observe the following trends in our results across selection methods:
\begin{itemize}
\item The network-dependent base kernels $k_{\mathrm{ll}}$ and $k_{\mathrm{grad}}$ clearly outperform the network-independent base kernels $k_{\mathrm{lin}}$ and $k_{\mathrm{nngp}}$ across different selection methods, modes and kernel transformations.
\item Out of the network-dependent base kernels, $k_{\mathrm{grad}}$ typically outperforms $k_{\mathrm{ll}}$, at least for NN hyperparameters optimized for \Random{} (cf.\ \Cref{sec:appendix:nn_config}). It should be noted that in our ReLU experiments, these optimized hyperparameters typically lead to many dead neurons in the last hidden layer, which may affect $k_{\mathrm{ll}}$ by reducing the effective feature space dimension.\footnote{In the extreme case where all neurons in the last hidden layer are dead, the network-dependent base kernels become degenerate, which can cause numerical problems in selection methods. Once a selection method suggests an invalid (e.g.\ already selected) sample for the batch, we fill up the rest of the batch with random samples. Out of the 597900 BMDAL steps in our ReLU experiments, such invalid samples were suggested in just 4 steps in total.}
We define the effective feature space dimension of the pool set for a kernel $k$ as
\begin{IEEEeqnarray*}{+rCl+x*}
d_{\mathrm{eff}} \equalDef \frac{\tr(k(\Xpool, \Xpool))}{\|k(\Xpool, \Xpool)\|_2} = \frac{\lambda_1 + \hdots + \lambda_{\dfeat}}{\lambda_1}~,
\end{IEEEeqnarray*}
where $\lambda_1 \geq \hdots \geq \lambda_{\dfeat}$ are the eigenvalues of the feature covariance matrix 
\begin{IEEEeqnarray*}{+rCl+x*}
\phi(\Xpool)^\top \phi(\Xpool) \in \bbR^{\dfeat \times \dfeat}~.
\end{IEEEeqnarray*}
With this definition, $d_{\mathrm{eff}}$ is indeed typically much larger for $k_{\mathrm{grad}\to\rp{512}}$ than for $k_{\mathrm{ll}}$ in our experiments.\footnote{Specifically, averaged over all corresponding ReLU experiments with $\Nbatch=256$ and over all BMAL steps, $k_{\mathrm{grad}\to\rp{512}}$ leads to an average $d_{\mathrm{eff}}$ of about $5.5$, while $k_{\mathrm{ll}}$ leads to an average $d_{\mathrm{eff}}$ of about $1.7$. On corresponding BMAL steps, the effective dimension is larger for $k_{\mathrm{grad}\to\rp{512}}$ about 95\% of the time. For SiLU, the results are slightly less extreme, with effective dimensions of $4$ and $2.3$, and the effective dimension of $k_{\mathrm{grad}\to\rp{512}}$ being larger about 90\% of the time.}
Another difference is that, for the ReLU activation function, $k_{\mathrm{grad}}$ is discontinuous while $k_{\mathrm{ll}}$ is not. 
\item For $k_{\mathrm{grad}}$, applying sketching does not strongly affect the resulting accuracy while leading to considerably faster runtimes.
\item When evaluating the use of ensembled NN kernels, we want to differentiate between the effect of ensembling on the accuracy of supervised learning and the effect of ensembling on the quality of the selected batches $\Xbatch$. To eliminate the former effect, we only consider the averaged errors of the individual ensemble members and not the error of their averaged predictions. With this method of evaluation, we find that ensembling of network-dependent kernels only leads to small improvements in the error, at least for the ensembling configurations we tested.
This is in contrast to other papers where the uncertainty of the ensemble predictions turned out to be more beneficial \citep{beluch_power_2018, pop_deep_2018}. Perhaps ensembling is less useful in our case because our non-ensembled kernels already provide good uncertainty measures.
\item Out of the acs-grad, acs-rf and acs-rf-hyper transformations, acs-rf often performs best, except for \FrankWolfe{}-P with base kernel $k_{\mathrm{grad}}$, where acs-rf-hyper performs best.
\item In contrast to \cite{ash_gone_2021}, we find that \Bait{}-FB does not perform better than \Bait{}-F.
\item The relative gains for BMDAL methods compared to \Random{} selection are typically largest on metrics such as MAXE or 99\% quantile, and worst on MAE.
\item All investigated BMDAL methods only take a few seconds to select a batch in our experiments on our NVIDIA RTX 3090 GPUs (cf.\ \Cref{sec:appendix:experiments}), which is typically faster than the time for training the corresponding NN. Hence, we expect all investigated BMDAL methods to be much faster than the time for labeling in most scenarios where BMDAL is desirable. Note that the runtime of TP-mode selection methods is comparable to those of P-mode selection methods only because we typically run P-mode selection with 64-bit floats to avoid numerical issues for posteriors.
TP-mode selection methods need to consider $\Ntrain + \Nbatch$ instead of $\Nbatch$ selected points, which can be significantly slower than P-mode if $\Ntrain \gg \Nbatch$. Especially for TP-mode selection methods, it may therefore be desirable to let $\Nbatch$ grow proportionally to $\Ntrain$.
\end{itemize}

\begin{table}
\centering
\footnotesize
\begin{tabular}{ccccc}
Selection method & Sel.\ mode & Selected kernel & mean log RMSE & Avg.\ time [s] \\
\hline
\Random{} & --- & --- & -1.401 & 0.001 \\
\MaxDiag{} & --- & $k_{\mathrm{grad}\to\rp{512}\to\acsrf{512}}$ & -1.370 & 0.650 \\
\MaxDet{} & P & $k_{\mathrm{grad}\to\rp{512}\to\Xtrain}$ & -1.512 & 0.770 \\
\Bait{} & F-P & $k_{\mathrm{grad}\to\rp{512}\to\Xtrain}$ & -1.585 & 1.508 \\
\FrankWolfe{} & P & $k_{\mathrm{grad}\to\rp{512}\to\acsrfhyper{512}}$ & -1.542 & 0.823 \\
\MaxDist{} & P & $k_{\mathrm{grad}\to\rp{512}\to\Xtrain}$ & -1.514 & 0.713 \\
\KMeansPP{} & P & $k_{\mathrm{grad}\to\rp{512}\to\acsrf{512}}$ & -1.569 & 0.836 \\
\LCMD{} & TP & $k_{\mathrm{grad}\to\rp{512}}$ & \textbf{-1.590} & 0.981 \\
\end{tabular}
\caption{Selected kernels and modes per selection method that are shown in our plots. The mean log RMSE is averaged over all data sets, repetitions, and BMAL steps. The average time for the batch selection is averaged over all data sets and BMAL steps, measured at one repetition with only one process running on each NVIDIA RTX 3090 GPU. An overview of all results can be found in \Cref{table:all_algs}.} \label{table:selected_kernels}
\end{table}

\subsection{Best Kernels and Modes for each Selection Method} Our base kernels, kernel transformations, and selection modes yield numerous ways to apply each selection method. To compare selection methods, we choose for each selection method the best-performing combination according to the averaged logarithmic RMSE, excluding kernels with ensembling and $k_{\mathrm{grad}}$ without sketching since they considerably increase computational cost while providing comparable accuracy to their more efficient counterparts. The selected combinations are shown in \Cref{table:selected_kernels}. Compared to the literature methods in \Cref{table:existing_algs_results}, we see that optimizing the kernel and mode can yield a considerable difference in performance. 
Note that the relative performance between the configurations for SiLU is slightly different, as can be seen in \Cref{table:all_algs_silu}. If we selected combinations according to the SiLU results, the combinations for \MaxDist{} and \KMeansPP{} in \Cref{table:selected_kernels} would use TP-mode and $k_{\mathrm{grad}\to\rp{512}}$ instead. When considering only kernels based on $k_{\mathrm{ll}}$, the results from \Cref{table:all_algs} and \Cref{table:all_algs_silu} show that a comparison of selection methods would look qualitatively similar.

\begin{figure}[p]
\centering
\includegraphics[height=0.78\textheight]{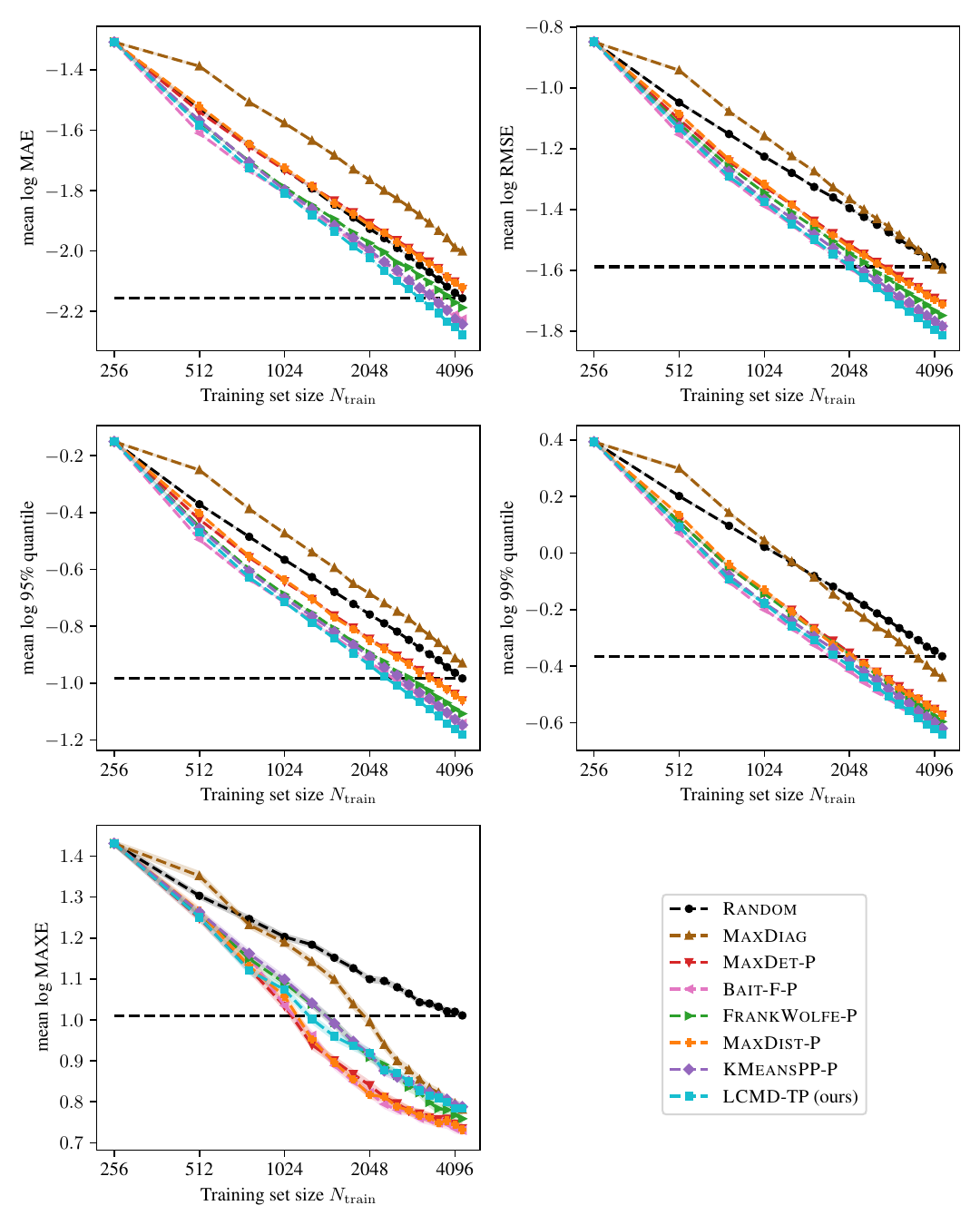}
\caption{%
This figure shows how fast the errors decrease during BMAL for different selection methods and their corresponding kernels from \Cref{table:selected_kernels}.
Specifically, for each of the five error metrics, the corresponding plot shows the logarithmic error metric between each BMAL step for $\Nbatch = 256$, averaged over all repetitions and data sets.
The performance of \Random{} can be interpreted as the performance of supervised learning without active learning. The black horizontal dashed line corresponds to the final performance of \Random{} at $\Ntrain = 4352$. The shaded area, which is nearly invisible for all metrics except MAXE, corresponds to one estimated standard deviation of the mean estimator, cf.\ \Cref{sec:appendix:results}.} \label{fig:learning_curves}
\end{figure}

\begin{figure}[p]
\centering
\includegraphics[height=0.83\textheight]{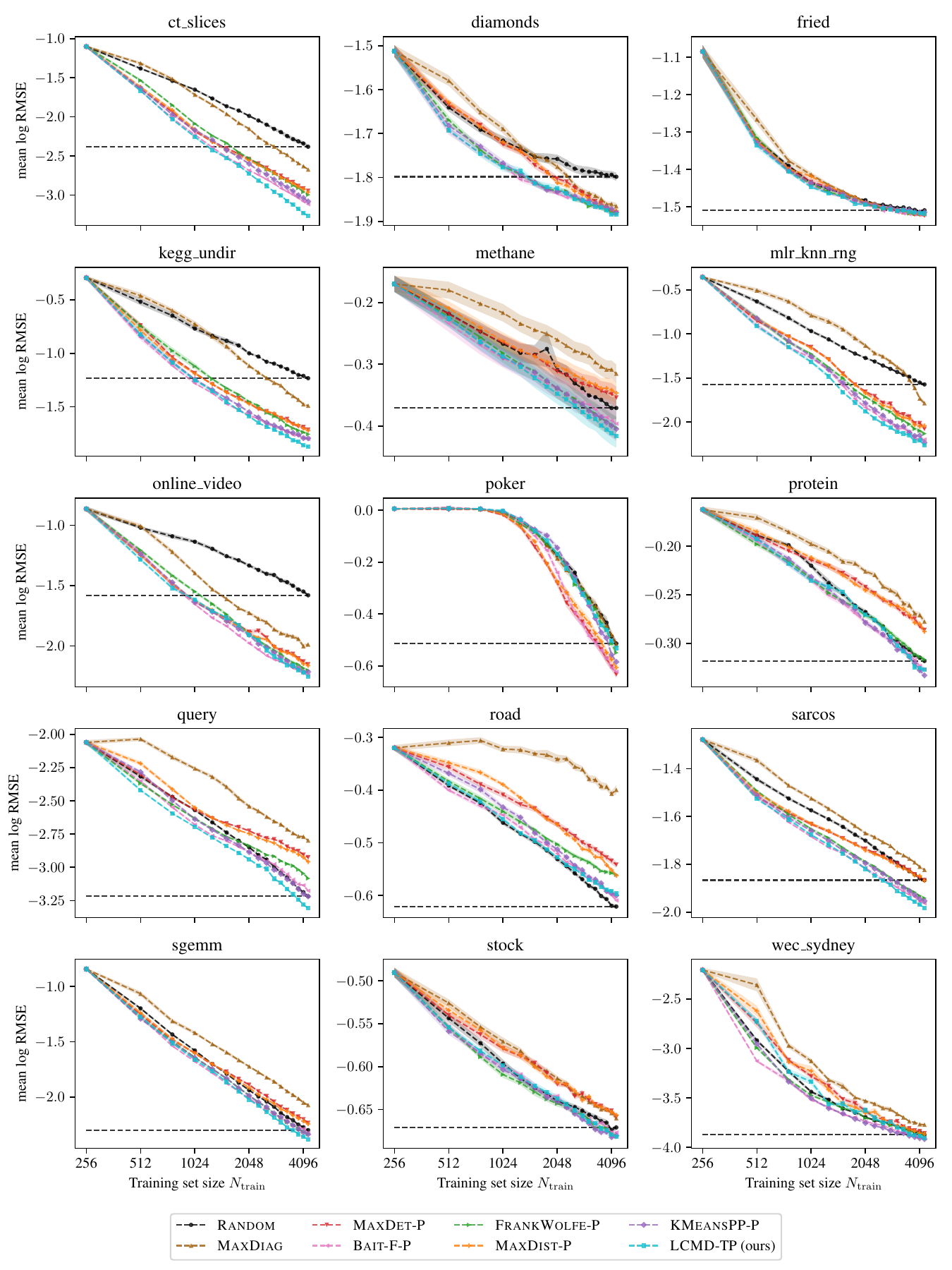}
\captionsetup{font=small}
\caption{%
This figure shows how fast the RMSE decreases during BMAL on the individual benchmark data sets for different selection methods and their corresponding kernels from \Cref{table:selected_kernels}.
Specifically, the plots above show the logarithmic RMSE between each BMAL step for $\Nbatch = 256$, averaged over all repetitions.
The black horizontal dashed line corresponds to the final performance of \Random{} at $\Ntrain = 4352$. The shaded area corresponds to one estimated standard deviation of the mean estimator, cf.\ \Cref{sec:appendix:results}.} \label{fig:learning_curves_individual_rmse}
\end{figure}

\begin{figure}[p]
\centering
\includegraphics[height=0.75\textheight]{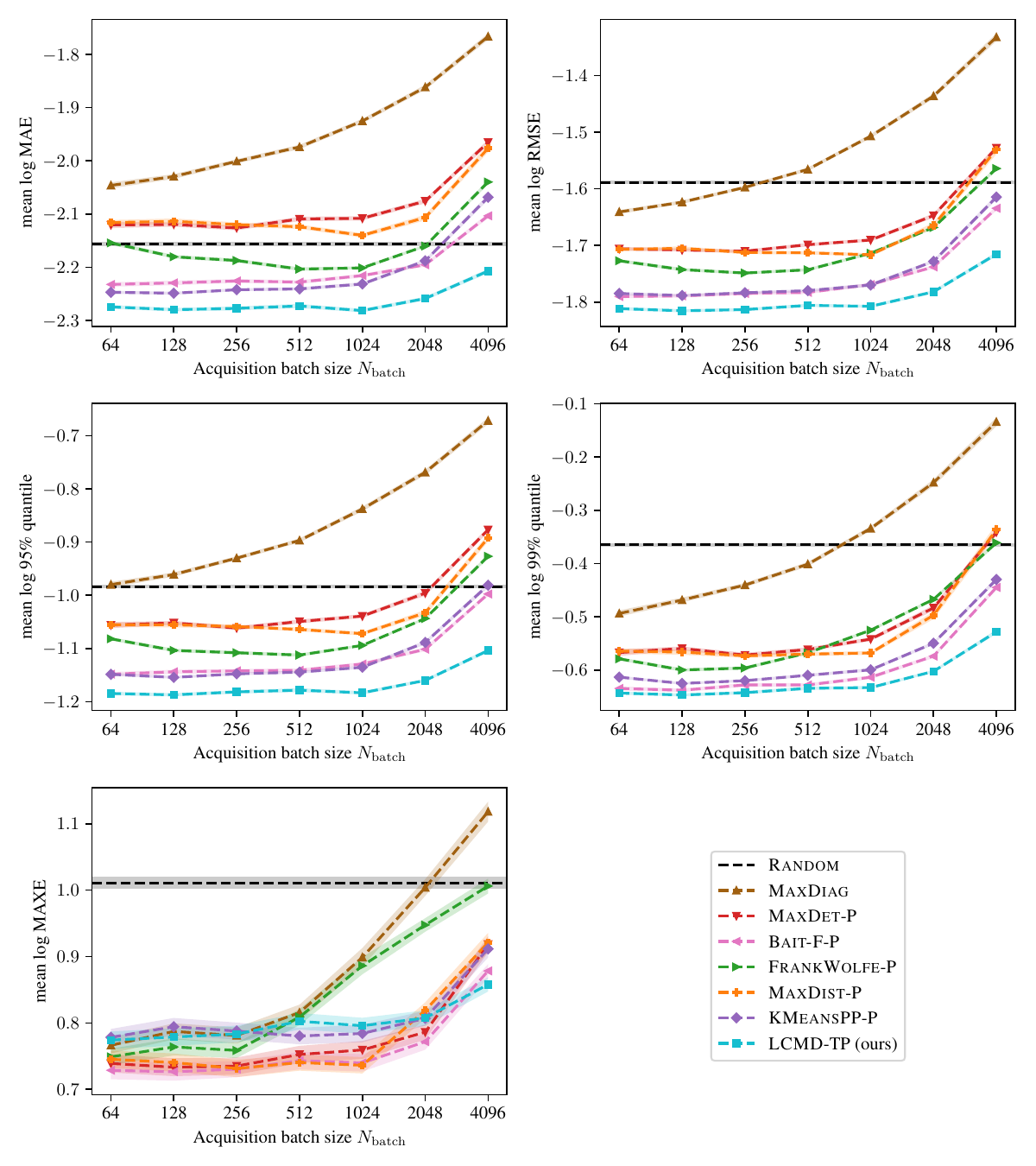}
\captionsetup{font=small}
\caption{%
This figure shows how much the final accuracy of different BMDAL methods deteriorates when fewer BMAL steps with larger batch sizes are used. Specifically, we use different selection methods with the corresponding kernels from \Cref{table:selected_kernels}, starting with $\Ntrain = 256$ and then performing $2^m$ BMAL steps with batch size $\Nbatch = 2^{12-m}$ for $m \in \{0, \hdots, 6\}$, such that the final training set size is $4352$ in each case. 
For each of the five error metrics, the corresponding plot shows the final logarithmic error metric, averaged over all data sets and repetitions. Note that the performance of \Random{} selection does not depend on $\Nbatch$ but only on the final training set size, hence it is shown as a constant line here. The shaded area, which is nearly invisible for all metrics except MAXE, corresponds to one estimated standard deviation of the mean estimator, cf.\ \Cref{sec:appendix:results}.
} \label{fig:batch_sizes}
\end{figure}

\subsection{Comparison of Selection Methods}

We compare the selected configurations from \Cref{table:selected_kernels} in several aspects. \Cref{fig:learning_curves} shows the evolution of the mean logarithmic MAE, RMSE, MAXE, 95\% quantile, and 99\% quantile over the BMAL steps, for a batch size of $\Nbatch = 256$. This demonstrates that the best considered BMDAL methods can match the average performance of \Random{} selection with about half of the samples for RMSE, and even fewer samples for MAXE. On individual data sets, this may differ, as shown in \Cref{fig:learning_curves_individual_rmse}. From this figure, it is apparent that when considering RMSE, \LCMD{}-TP outperforms other methods not only in terms of average performance but across the majority of the data sets. Specifically, \Cref{table:table_data_sets} shows that \LCMD{}-TP matches or exceeds the performance of the other selected BMDAL methods on 8 out of the 15 data sets in terms of RMSE. \Cref{fig:learning_curves_individual_rmse} also shows that the selected \MaxDet{}-P and \MaxDist{}-P configurations yield very similar performance on all data sets.

\Cref{fig:batch_sizes} shows the influence of the chosen batch size $\Nbatch$ on the final performance at $\ntrain = 4352$ training samples. As expected, the naive active learning scheme \MaxDiag{} is particularly sensitive to the batch size. The other selection methods are less sensitive to changes in $\Nbatch$ and exhibit almost no degradation in performance up to $\Nbatch = 1024$. Note that this \quot{threshold} might depend on the initial and final training set sizes as well as the feature-space dimension $\dfeat$. %

Overall, the discussed figures and the detailed results in \Cref{table:all_algs} show that \LCMD{}-TP performs best in terms of MAE, RMSE, and 95\% quantile, followed by \Bait{}-F-P and \KMeansPP{}. 
For MAXE, \MaxDist{}, \MaxDet{} and \Bait{}-F-P exhibit the best performances. Since \MaxDet{} and \MaxDist{} are motivated by worst-case considerations, it is not surprising that they perform well on MAXE, while the strong performance of \Bait{}-F-P on MAXE is unexpected. Moreover, it is perhaps surprising that in \Cref{fig:learning_curves}, the relative performances for the 99\% quantile are arguably more similar to the RMSE performances than to the MAXE performances. For the 95\% quantile, the relative performances for \MaxDet{}, \MaxDist{} and \MaxDiag{} are even worse than for the RMSE. Thus, the use of \MaxDist{} instead of \LCMD{} may only be advisable if one expects strong distribution shifts between pool and test sets.

\subsection{When should BMDAL be Applied?} 

\begin{figure}[tb]
\centering
\includegraphics[height=0.4\textheight]{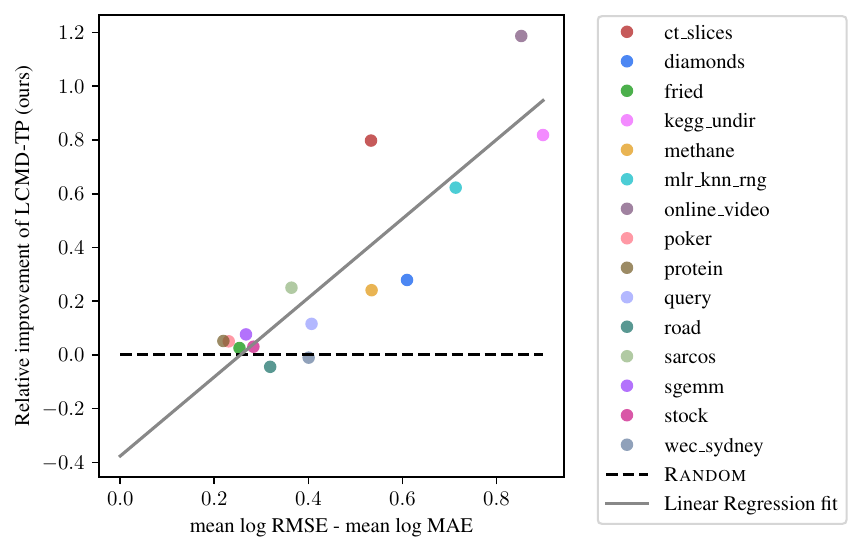}
\caption{This figure shows, for each data set, the improvement in sample efficiency of \LCMD{}-TP over \Random{} (on the $y$-axis) versus the variation of the error distribution (on the $x$-axis). The variation is measured as mean log RMSE $-$ mean log MAE on the initial training set ($\Ntrain = 256$). The improvement in sample efficiency is measured by $\frac{\text{mean log RMSE(\LCMD{}-TP)} - \text{mean log RMSE(\Random{})}}{\text{mean log RMSE(\Random{})} - (\text{mean log RMSE at }\Ntrain=256)}$. Here, the means are taken over all 20 repetitions and, unless indicated otherwise, over the trained networks after each of the 16 BMDAL steps. 
The Pearson correlation coefficient for the plotted data is $R \approx 0.88$.
} \label{fig:skewness_plot}
\end{figure}

While \LCMD{}-TP achieves excellent average performance, its benefits over \Random{} selection vary strongly between data sets, as is evident from \Cref{fig:learning_curves_individual_rmse}. Overall, \LCMD{}-TP with $k_{\mathrm{grad}\to\rp{512}}$ outperforms random selection in terms of RMSE on 13 out of the 15 data sets and barely performs worse on the other two. Nonetheless, an \textit{a priori} estimate of the benefits of \LCMD{}-TP over \Random{} selection on individual data sets could be useful to inform practitioners on whether they should be interested in applying BMDAL. \Cref{fig:skewness_plot} shows that on our 15 benchmark data sets, the variation of test errors after training on the initial training set ($\Ntrain = 256$), measured by the quotient $\frac{\mathrm{RMSE}}{\mathrm{MAE}}$, is strongly correlated with the improvement in sample efficiency through \LCMD{}-TP over \Random{}. In other words, the larger $\frac{\mathrm{RMSE}}{\mathrm{MAE}}$ on the initial training set, the more benefit we can expect from \LCMD{}-TP with $k_{\mathrm{grad}\to\rp{512}}$ over random selection.

\section{Conclusion} \label{sec:conclusion}

In this paper, we introduced a framework to compose BMDAL algorithms out of base kernels, kernel transformations, and selection methods. We then evaluated different combinations of these components on a new benchmark consisting of 15 large tabular regression data sets. In our benchmark results, for all considered selection methods, replacing the wide-spread last-layer kernel $k_{\mathrm{ll}}$ by a sketched finite-width neural tangent kernel $k_{\mathrm{grad}\to\rp{p}}$ leads to accuracy improvements at similar runtime and memory cost. Moreover, our novel \LCMD{} selection method sets new state-of-the-art results in our benchmark in terms of RMSE and MAE. 

\subsection{Limitations} %
The BMDAL methods in our framework are very attractive for practitioners using NNs for regression since they are scalable to large data sets and can be applied to a wide variety of NN architectures and training methods without requiring modifications to the NN. However, while our benchmark contains many large data sets, it cannot cover all possible application scenarios that the considered BMDAL methods could be applied to. For example, it is unclear whether our insights can be transferred to applications like drug discovery \citep{mehrjou_genedisco_2021} or atomistic ML \citep{zaverkin_exploration_2021}, where other types of data and other NNs are employed. Even in the tabular data setting, the relevance of our results for smaller data sets or recently proposed NN architectures \citep[e.g.][]{gorishniy_revisiting_2021, somepalli_saint_2022, kadra_well-tuned_2021} is unclear. Moreover, the current benchmark does not involve distribution shifts between pool and test data, which would be interesting for some practical applications.

\subsection{Remaining Questions} Our results give rise to some interesting questions for future research, of which we list some in the following: Can $k_{\mathrm{grad}}$ be adapted to incorporate effects of optimizers such as Adam? How can it be efficiently evaluated and sketched for other types of layers? 
How can we decide which method to use on a data set, beyond just using the one with the best average performance across the benchmark? Are there some characteristics of data sets or training setups that can be used to predict which method will perform best?
Are clustering-based methods like \LCMD{}-TP also superior to other methods for non-batch AL?
How much better performance could be attained if the methods had access to the pool labels, and what kinds of batches would such a method select?
Would it have similar properties as \cite{zhou_towards_2021} found for classification?
How can our framework be generalized to classification or multi-output regression?

Our framework is formulated for the pool-based AL setting, where samples should be selected from a pool of unlabeled samples. By using a different kind of selection methods with the same kernels and kernel transformations, our framework could be adapted to the streaming AL setting, where unlabeled samples arrive sequentially and one has to decide immediately whether to label them or not. For the membership-query AL setting, where unlabeled samples can be chosen arbitrarily, the situation is more difficult: Since the feature map $\phi$ is typically not surjective, samples cannot be chosen in the feature space directly, and a direct choice in the input space might require differentiating the kernel for efficient optimization. Nonetheless, extending our methods to other AL settings could be an interesting avenue for further research.

\acks{We want to thank Philipp Hennig, Tizian Wenzel, Daniel Winkle, Benjamin Unger, and Paul Bürkner for helpful comments.
Funded by Deutsche Forschungsgemeinschaft (DFG, German Research Foundation) under Germany's Excellence Strategy - EXC 2075 – 390740016. The authors thank the International Max Planck Research School for Intelligent Systems (IMPRS-IS) for supporting David Holzmüller. Viktor Zaverkin acknowledges the financial support received in the form of a Ph.D.\ scholarship from the Studienstiftung des Deutschen  Volkes (German National Academic Foundation).}

\begin{appendixenv}

\section{Overview} \label{sec:appendix:overview}

The appendix structure mirrors the structure of the main paper, providing more details on the corresponding sections of the main paper: We discuss further details on base kernels in \Cref{sec:appendix:kernels}, on kernel transformations in \Cref{sec:appendix:kernel_transformations}, and on selection methods in \Cref{sec:appendix:selection}. The latter section includes efficiency-focused pseudocode as well as discussions on relations to the literature. Finally, we provide details on the setup and results of our experiments in \Cref{sec:appendix:experiments}.

\section{Details on Base Kernels} \label{sec:appendix:kernels}

In the following, we will provide details for the infinite-width NNGP kernel.

\subsection{NNGP Kernel} \label{sec:appendix:nngp}

For the fully-connected NN model considered in \Cref{sec:reg_fcnn} with a ReLU activation function, the NNGP kernel is given by
\begin{IEEEeqnarray*}{+rCl+x*}
k_{\mathrm{nngp}}(\bfx, \tilde\bfx) & \equalDef & k_{\mathrm{nngp}}^{(L)}(\bfx, \tilde\bfx)~,
\end{IEEEeqnarray*}
where we roughly follow \cite{lee_wide_2019} and recursively define
\begin{IEEEeqnarray*}{+rCl+x*}
k_{\mathrm{nngp}}^{(1)}(\bfx, \tilde\bfx) & \equalDef & \frac{\sigma_w^2}{d} \langle \bfx, \tilde\bfx\rangle \\
k_{\mathrm{nngp}}^{(l+1)}(\bfx, \tilde\bfx) & \equalDef & \sigma_w^2 f(k_{\mathrm{nngp}}^{(l)}(\bfx, \bfx), k_{\mathrm{nngp}}^{(l)}(\bfx, \tilde \bfx), k_{\mathrm{nngp}}^{(l)}(\tilde \bfx, \tilde \bfx)) \\
f(a, b, c) & \equalDef & \frac{\sqrt{ac}}{2\pi} \left(\sqrt{1-u^2} + u(\pi - \arccos(u))\right) \text{ with } u \equalDef \frac{b}{\sqrt{ac}}~.
\end{IEEEeqnarray*}
Note that we do not include the $\sigma_b$ terms here since we initialize the biases to zero unlike \cite{lee_wide_2019}.

\section{Details on Kernel Transformations} \label{sec:appendix:kernel_transformations}

In the following, we will discuss additional aspects of various kernel transformations.

\subsection{Gaussian Process Posterior Transformation} \label{sec:appendix:posterior}

\cite{khan_approximate_2019} showed that certain posterior approximation methods for NNs turn them into Gaussian processes with the finite-width NTK $k_{\mathrm{grad}}$. In the following, we will present a self-contained derivation of this relationship in our framework, including the last-layer kernel $k_{\mathrm{ll}}$. In our exposition, we roughly follow \cite{daxberger_laplace_2021}. For a Bayesian NN, we impose a prior $p(\bftheta) = \calN(\bftheta \mid \bfzero, \lambda^2 \bfI)$ on the NN parameters $\bftheta$. Using an observation model $y_i = f_{\bftheta}(\bfx_i) + \varepsilon_i, \varepsilon_i \sim \calN(0, \sigma^2)$ i.i.d., the negative log-likelihood of the data is given by
\begin{IEEEeqnarray*}{+rCl+x*}
-\log p(\Ytrain \mid \Xtrain, \bftheta) & = & C + \frac{1}{2\sigma^2}\sum_{i=1}^{\Ntrain} (y_i - f_{\bftheta}(\bfx_i))^2 = C_1 + \frac{1}{2\sigma^2}\Ntrain \calL(\bftheta)
\end{IEEEeqnarray*}
for some constant $C_1 \in \bbR$. The negative log-posterior is hence given by
\begin{IEEEeqnarray*}{+rCl+x*}
\tilde\calL(\bftheta) & \equalDef & -\log p(\bftheta \mid \Ytrain, \Xtrain) \\
& = & \log(Z)-\log p(\Ytrain \mid \Xtrain, \bftheta)-\log p(\bftheta) \\
& = & C_2 + \frac{1}{2\sigma^2}\Ntrain \calL(\bftheta) + \frac{1}{2\lambda^2} \|\bftheta\|_2^2~,
\end{IEEEeqnarray*}
If $\bftheta^*$ minimizes $\tilde\calL$, that is, if $\bftheta^*$ is a maximum \textit{a posteriori} (MAP) estimate, we obtain the second-order Taylor approximation
\begin{IEEEeqnarray*}{+rCl+x*}
\tilde\calL(\bftheta) \approx \tilde\calL(\bftheta^*) + \frac{1}{2} (\bftheta - \bftheta^*)^\top \bfH (\bftheta - \bftheta^*), \quad \bfH \equalDef \nabla_{\bftheta}^2 \tilde\calL(\bftheta^*)~,
\end{IEEEeqnarray*}
which yields a Gaussian approximation to the posterior, the so-called Laplace approximation \citep{laplace_memoire_1774, mackay_bayesian_1992}:
\begin{IEEEeqnarray*}{+rCl+x*}
p(\bftheta \mid \Ytrain, \Xtrain) \approx \calN(\bftheta \mid \bftheta^*, \bfH^{-1})~.
\end{IEEEeqnarray*}
The Hessian is given by
\begin{IEEEeqnarray*}{+rCl+x*}
\bfH & = & \frac{1}{\lambda^2} \bfI + \frac{1}{2\sigma^2} \sum_{(\bfx, y) \in \Dtrain} \nabla_{\bftheta}^2 (y - f_{\bftheta^*}(\bfx))^2 \\
& = & \lambda^{-2} \bfI + \sigma^{-2} \sum_{(\bfx, y) \in \Dtrain} \left((\nabla_{\bftheta} f_{\bftheta^*}(\bfx))(\nabla_{\bftheta} f_{\bftheta^*}(\bfx))^\top + (f_{\bftheta^*}(\bfx) - y)\nabla_{\bftheta}^2 f_{\bftheta^*}(\bfx)\right)~.
\end{IEEEeqnarray*}
By ignoring the terms $(f_{\bftheta^*}(\bfx) - y)\nabla_{\bftheta}^2 f_{\bftheta^*}(\bfx)$, which are expected to be small since the first factor is small, we arrive at the generalized Gauss-Newton (GGN) approximation to the Hessian \citep{schraudolph_fast_2002}:
\begin{IEEEeqnarray*}{+rCl+x*}
\bfH_{\mathrm{GGN}} & = & \lambda^{-2} \bfI + \sigma^{-2} \sum_{(\bfx, y) \in \Dtrain} (\nabla_{\bftheta} f_{\bftheta^*}(\bfx))(\nabla_{\bftheta} f_{\bftheta^*}(\bfx))^\top~.
\end{IEEEeqnarray*}
By pretending that $\bftheta^* = \bftheta_T$, i.e.\ that the parameters at the end of training are the minimizer of $\tilde\calL$, we can relate this to $\phi_{\mathrm{grad}}$:
\begin{IEEEeqnarray*}{+rCl+x*}
\bfH_{\mathrm{GGN}} & = & \lambda^{-2} \bfI + \sigma^{-2} \sum_{(\bfx, y) \in \Dtrain} \phi_{\mathrm{grad}}(\bfx)\phi_{\mathrm{grad}}(\bfx)^\top \\
& = & \lambda^{-2} \bfI + \sigma^{-2} \phi_{\mathrm{grad}}(\Xtrain)^\top \phi_{\mathrm{grad}}(\Xtrain)~.
\end{IEEEeqnarray*}
If we want to compute the predictive distribution of $f_{\bftheta}(\bfx)$ for $\bftheta \sim p(\bftheta \mid \Xtrain)$, we can further use the linearization
\begin{IEEEeqnarray*}{+rCl+x*}
f_{\bftheta}(\bfx) & \approx & f_{\bftheta^*}(\bfx) + \langle \bftheta - \bftheta^*, \nabla_{\bftheta} f_{\bftheta^*}(\bfx)\rangle = f_{\bftheta^*}(\bfx) + \langle \bftheta - \bftheta^*, \phi_{\mathrm{grad}}(\bfx)\rangle~,
\end{IEEEeqnarray*}
which, according to \cite{immer_improving_2021}, improves the results for the predictive distribution. The predictive distribution can then be approximated as
\begin{IEEEeqnarray*}{+rCl+x*}
p(\calY \mid \calX, \Dtrain) & \approx & \calN(\calY \mid f_{\bftheta^*}(\calX), \phi_{\mathrm{grad}}(\calX)\bfH_{\mathrm{GGN}}^{-1} \phi_{\mathrm{grad}}(\calX)^\top)~, 
\end{IEEEeqnarray*}
with the covariance matrix
\begin{IEEEeqnarray*}{+rCl+x*}
&& \phi_{\mathrm{grad}}(\calX)\bfH_{\mathrm{GGN}}^{-1} \phi_{\mathrm{grad}}(\calX)^\top \\
& = & \phi_{\mathrm{grad}}(\calX)(\lambda^{-2} \bfI + \sigma^{-2} \phi_{\mathrm{grad}}(\Xtrain)^\top \phi_{\mathrm{grad}}(\Xtrain))^{-1}\phi_{\mathrm{grad}}(\calX)^\top \\
& = & \sigma^2\lambda\phi_{\mathrm{grad}}(\calX)(\lambda\phi_{\mathrm{grad}}(\Xtrain)^\top \lambda\phi_{\mathrm{grad}}(\Xtrain) + \sigma^2 \bfI)^{-1}\lambda\phi_{\mathrm{grad}}(\calX)^\top \\
& = & (\lambda^2 k_{\mathrm{grad}})_{\to\post{\Xtrain, \sigma^2}}(\calX, \calX)~.
\end{IEEEeqnarray*}
This demonstrates that $(\lambda^2 k_{\mathrm{grad}})_{\to\post{\Xtrain, \sigma^2}}(\calX, \calX)$ yields the posterior predictive covariance on $\calX$ of a Bayesian NN under the following approximations:
\begin{enumerate}[(1)]
\item The parameter posterior is approximated using the Laplace approximation,
\item The Hessian matrix in the Laplace approximation is approximated using the GGN approximation,
\item The predictive posterior is further approximated using a linearization of the NN, and
\item The MAP estimate is approximated by the trained parameters of the NN.
\end{enumerate}

If we perform Bayesian inference only over the last-layer parameters (i.e., $\bftheta = \tilde\bfW^{(L)}$), the derivation above shows instead that the posterior predictive covariance is approximated by $(\lambda^2 k_{\mathrm{ll}})_{\to\post{\Xtrain, \sigma^2}}(\calX, \calX)$. Moreover, for the last-layer parameters, the approximations (1) -- (3) are exact, since the NN is affine linear in the last-layer parameters, and the approximation (4) does not change the resulting kernel. Such last-layer Bayesian models have been employed, for example, by \cite{lazaro-gredilla_marginalized_2010, snoek_scalable_2015, ober_benchmarking_2019, kristiadi_being_2020}, and are also known as neural linear models \citep{ober_benchmarking_2019}.

\cite{eschenhagen_mixtures_2021} experimentally demonstrated that taking a mixture of multiple Laplace approximations around different local minima of the loss function can improve uncertainty predictions for Bayesian NNs. If we consider $\Nens$ local minima $\bftheta^{(i)}$ with corresponding base kernels $k^{(i)}$ and combine the Laplace approximations with uniform weights, we obtain the posterior distributions
\begin{IEEEeqnarray*}{+rCl+x*}
p(\calY \mid \calX, \Dtrain) & \approx & \frac{1}{\Nens} \sum_{i=1}^{\Nens} \calN(\calY \mid f_{\bftheta^{(i)}}(\calX), (\lambda k^{(i)})_{\to\post{\Xtrain, \sigma^2}}(\calX, \calX))~.
\end{IEEEeqnarray*}
By the law of total covariance, we have
\begin{IEEEeqnarray*}{+rCl+x*}
\Cov(y_1, y_2 \mid \bfx_1, \bfx_2, \Dtrain) & \approx & \bbE_{i \sim \calU\{1, \hdots, \Nens\}} [(\lambda^2 k^{(i)})_{\to\post{\Xtrain, \sigma^2}}(\bfx_1, \bfx_2)] \\
&& ~+~ \Cov_{i \sim \calU\{1, \hdots, \Nens\}}(f_{\bftheta^{(i)}}(\bfx_i), f_{\bftheta^{(i)}}(\bfx_j)) \\
& = & (\lambda^2 k)_{\to\post{\Xtrain, \sigma^2}\to\ens{\Nens}}(\bfx_1, \bfx_2) \\
&& ~+~ \Cov_{i \sim \calU\{1, \hdots, \Nens\}}(f_{\bftheta^{(i)}}(\bfx_i), f_{\bftheta^{(i)}}(\bfx_j))~.
\end{IEEEeqnarray*}
Hence, the predictive covariance for a mixture of Laplace approximations is approximately given by an ensembled posterior kernel plus the covariance of the ensemble predictions. Note that due to the summation, it is important that the (posterior) kernel is scaled correctly. We leave an experimental evaluation of this approach to future work.

\subsection{Sketching} \label{sec:appendix:random_projections}

In the following, we prove the variant of the Johnson-Lindenstrauss theorem mentioned in \Cref{sec:kernel_transformations:rp}.

\thmjl*

\begin{proof}
By Theorem 1 from \cite{arriaga_algorithmic_1999}, the following bound holds for fixed $\bfx, \tilde\bfx \in \calX$ with probability\footnote{We inserted the factor $2$ in front of $e^{-\varepsilon^2 p / 8}$ that has been forgotten in their Theorem 1.} $\geq 1 - 2e^{-\varepsilon^2 p / 8}$:
\begin{equation}
    (1-\varepsilon) d_k(\bfx, \tilde\bfx)^2 \leq d_{k_{\to\rp{p}}}(\bfx, \tilde\bfx)^2 \leq (1+\varepsilon) d_k(\bfx, \tilde\bfx)^2~. \label{eq:eps_isometry_sq}
\end{equation}
Now, \eqref{eq:eps_isometry_sq} implies \eqref{eq:eps_isometry} for a single pair $\bfx, \tilde\bfx$ because of
\begin{IEEEeqnarray*}{+rCl+x*}
(1-\varepsilon)^2 \leq (1-\varepsilon) \leq (1+\varepsilon) \leq (1+\varepsilon)^2~.
\end{IEEEeqnarray*}
To obtain the bound for all pairs $(\bfx, \tilde\bfx)$, we note that the bound is trivial for pairs $(\bfx, \bfx)$ and it is equivalent for the pairs $(\bfx, \tilde\bfx)$ and $(\tilde\bfx, \bfx)$. Hence, we only have to consider less than $\frac{|\calX|^2}{2}$ pairs. By the union bound, the probability that \eqref{eq:eps_isometry} holds for all pairs $(\bfx, \tilde\bfx)$ is at least
\begin{equation*}
    1 - 2\frac{|\calX|^2}{2}e^{-\varepsilon^2 p/8} \stackrel{\text{\eqref{eq:min_dim}}}{\geq} 1 - \delta~. \qedhere
\end{equation*}
\end{proof}

\subsection{ACS Random Features Transformation} \label{sec:appendix:acs_rf}

We will now derive the form of $k_{\to\acs}$ presented in \Cref{sec:kernel_transformations:acs_rf}. Partially following the notation of \cite{pinsler_bayesian_2019}, we want to compute
\begin{IEEEeqnarray*}{+rCl+x*}
k_{\to\acs}(\bfx_n, \bfx_m) \equalDef \langle \calL_n, \calL_m \rangle = \bbE_{\bftheta \sim p(\bftheta|\Dtrain)} [\calL_n(\bftheta)\calL_m(\bftheta)]~,
\end{IEEEeqnarray*}
where
\begin{IEEEeqnarray*}{+rCl+x*}
\calL_m(\bftheta) = \bbE_{y_m \sim p(\cdot \mid \bfx_m, \Dtrain)} [\log p(y_m \mid \bfx_m, \bftheta)] + \bbH[y_m \mid \bfx_m, \Dtrain]
\end{IEEEeqnarray*}
with $\bbH[y_m \mid \bfx_m, \Dtrain]$ denoting the conditional entropy of $y_m$ given $\bfx_m$ and $\Dtrain$.

In a GP model without hyper-prior on $\sigma^2$, we have %
\begin{IEEEeqnarray*}{+rCl+x*}
p(y_m \mid \bfx_m, \bftheta) & = & \calN(y_m \mid f_{\bftheta_T}(\bfx_m) + \bftheta^\top \phi_{\to\scale{\Xtrain}}(\bfx_m), \sigma^2) \\
p(y_m \mid \bfx_m, \Dtrain) & = & \calN(y_m \mid f_{\bftheta_T}(\bfx_m), k_{\to\Xtrain}(\bfx_m, \bfx_m) + \sigma^2) \\
\bbH(\calN(\bfmu, \bfSigma)) & = & \frac{n}{2} \ln(2\pi e) + \frac{1}{2} \ln(\det(\bfSigma)) \quad \text{for } \bfmu \in \bbR^n, \bfSigma \in \bbR^{n \times n}~,
\end{IEEEeqnarray*}
which allows us to derive
\begin{IEEEeqnarray*}{+rCl+x*}
\bbH[y_m \mid \bfx_m, \Dtrain] & = & \frac{1}{2} + \frac{1}{2} \log(2\pi (k_{\to\Xtrain}(\bfx_m, \bfx_m) + \sigma^2))~.
\end{IEEEeqnarray*}
By shifting the prior and the log-likelihood by $f_{\bftheta_T}(\bfx_m)$, we obtain
\begin{IEEEeqnarray*}{+rCl+x*}
&& \bbE_{y_m \sim p(\cdot \mid \bfx_m, \calD_0)} [\log p(y_m \mid \bfx_m, \bftheta)] \\
& = & \bbE_{y_m \sim \calN(0, k_{\to\Xtrain}(\bfx_m, \bfx_m) + \sigma^2)} [\log \calN(y_m \mid \bftheta^\top \phi_{\to\scale{\Xtrain}}(\bfx_m), \sigma^2)] \\
& = & -\frac{1}{2}\log(2\pi \sigma^2) - \bbE_{y_m \sim \calN(0, k_{\to\Xtrain}(\bfx_m, \bfx_m) + \sigma^2)} \frac{(y_m - \bftheta^\top \phi_{\to\scale{\Xtrain}}(\bfx_m))^2}{2\sigma^2} \\
& = & -\frac{1}{2}\log(2\pi \sigma^2) - \frac{1}{2\sigma^2} \left((\bftheta^\top \phi_{\to\scale{\Xtrain}}(\bfx_m))^2 + k_{\to\Xtrain}(\bfx_m, \bfx_m) + \sigma^2\right)~.
\end{IEEEeqnarray*}
Therefore, we have $k_{\to\acs}(\bfx, \tilde\bfx) \equalDef \bbE_{\bftheta \sim P(\bftheta \mid \Dtrain)} [\facs(\bfx, \bftheta) \facs(\tilde\bfx, \bftheta)]$ with
\begin{IEEEeqnarray*}{+rCl+x*}
\facs(\bfx_m, \bftheta) \equalDef \calL_m(\bftheta) & = & \frac{1}{2} \log\left(1 + \frac{k_{\to\Xtrain}(\bfx_m, \bfx_m)}{\sigma^2}\right) \\
&& ~-~ \frac{(\bftheta^\top \phi_{\to\scale{\Xtrain}}(\bfx_m))^2 + k_{\to\Xtrain}(\bfx_m, \bfx_m)}{2\sigma^2}~.
\end{IEEEeqnarray*}
Moreover, it follows from Eq.~(4) in \cite{pinsler_bayesian_2019} that
\begin{IEEEeqnarray*}{+rCl+x*}
f_{\mathrm{pool}}(\bftheta) - f_{\mathrm{batch}}(\bftheta) = \sum_{\bfx_m \in \Xpool \setminus \Xbatch} \calL_m(\bftheta) = \sum_{\bfx \in \Xpool \setminus \Xbatch} \facs(\bfx, \bftheta)~.
\end{IEEEeqnarray*}

\section{Details on Selection Methods} \label{sec:appendix:selection}

In this section, we will provide efficiency-focused pseudocode for all selection methods and analyze its runtime and memory complexity. Hereby, we will neglect that the required integer bit size for indexing elements of $\Xpool$ and $\Xtrain$ grows logarithmically with $\Npool + \Ntrain$. For some selection methods, we will additionally discuss relations to the literature and theoretical properties. \Cref{sec:appendix:iterative_selection} will first provide a pseudocode structure for iterative selection, where the missing components are then specified for the respective selection methods in the subsequent sections. The following notation will be used throughout this section:

We allow having vectors and matrices indexed by points $\bfx$ instead of indices $i \in \bbN$, which we write with square brackets as $\bfv[\bfx]$ or $\bfM[\bfx, \tbfx]$. In a practical implementation, where the points $\bfx \in \Xpool$ are for example numbered as $\bfx_1, \hdots, \bfx_{\Npool}$, one may simply use $\bfv[i]$ instead of $\bfv[\bfx_i]$. Again, we assume in pseudocode that all $\bfx$ are distinct, such that we can use set notation, but identical copies of $\bfx$ should be treated as distinct. This problem also disappears when using indices. We denote by $\bfu \odot \bfv$ the element-wise (Hadamard) product of the vectors $\bfu$ and $\bfv$. Whenever an $\argmax$ is not unique, we leave the choice of the maximizer to the implementation.

\subsection{Iterative Selection Scheme} \label{sec:appendix:iterative_selection}

While our iterative selection template shown in \Cref{alg:select_simple} is sufficient for a high-level understanding, it is not well-suited for an efficient implementation. To this end, we present a more detailed iterative selection template in \Cref{alg:select_simple}, which closely matches our open-source implementation. The template in \Cref{alg:select_simple} involves three methods called \textsc{Init}, \textsc{Add}, and \textsc{Next}, which are allowed to have side effects, i.e.\ access and modify common variables. For each selection method except \Random{} and \MaxDiag{}, our implementation directly mirrors this structure, containing a class providing the three methods together with \textsc{Select} from \Cref{alg:select_simple}.

\begin{algorithm}[tb]
\caption{Iterative selection algorithm template involving three customizable functions \textsc{Init}, \textsc{Add} and \textsc{Next} that are allowed to have side effects (i.e., read/write variables in \textsc{Select}).} \label{alg:iterative_selection_detailed}
\begin{algorithmic}
\Function{Select}{$k$, $\Xtrain$, $\Xpool$, $\Nbatch$, mode $\in \{$P, TP$\}$}
	\State $\Xmode \assign \Xtrain$ if mode = TP else $\emptyset$
	\State $\Xcand \equalDef \Xmode \cup \Xpool$
	\State $\Xbatch \assign \emptyset$
	\State \Call{Init}{}
	\For{$\bfx$ in $\Xmode$}
		\State \Call{Add}{$\bfx$}
	\EndFor
	\For{$i$ from $1$ to $\Nbatch$}
		\State $\bfx \assign $\Call{Next}{}  %
		\If{$\bfx \in \Xbatch \cup \Xtrain$ (failed selection)}
			\State fill up $\Xbatch$ with $\Nbatch - |\Xbatch|$ random samples from $\Xpool \setminus \Xbatch$ and return $\Xbatch$
		\EndIf
		\State $\Xbatch \assign \Xbatch \cup \{\bfx\}$
		\State \Call{Add}{$\bfx$}
	\EndFor
	\State \Return $\Xbatch$
\EndFunction
\end{algorithmic}
\end{algorithm}

\subsection{\Random{}} \label{sec:appendix:random}

We implement \Random{} by taking the first $\Nbatch$ indices out of a random permutation of $\{1, \hdots, \Npool\}$. Since there are $\Npool!$ possible random permutations, this requires at least $\log_2(\Npool!) = \BigO(\Npool \log \Npool)$ random bits, so the runtime for this suboptimal implementation is, in theory, \quot{only} $\BigO(\Npool \log \Npool)$, which is still extremely fast in practice. The memory complexity for our suboptimal implementation is $\BigO(\Npool)$.

\subsection{\MaxDiag{}} \label{sec:appendix:maxdiag}

A simple implementation of \MaxDiag{} is shown in \Cref{alg:maxdiag}. The runtime of this implementation is $\BigO(\Npool \log \Npool)$ due to sorting. While other algorithms might be faster for $\Nbatch \ll \Npool$, the runtime is already very fast in practice. The memory complexity is $\BigO(\Npool)$.

\begin{algorithm}[tb]
\caption{\MaxDiag{} pseudocode implementation using \Cref{alg:iterative_selection_detailed}.} \label{alg:maxdiag}
\begin{algorithmic}
\Function{Init}{}
	\State Sort elements in $\Xpool$ as $\tilde\bfx_1, \hdots, \tilde\bfx_{\Npool}$ such that $k(\tilde\bfx_1, \tilde\bfx_1) \geq \hdots \geq k(\tilde\bfx_{\Npool}, \tilde\bfx_{\Npool})$
\EndFunction\\
\Function{Add}{$\bfx$}
\EndFunction\\
\Function{Next}{}
	\State \Return $\tilde\bfx_i$
\EndFunction
\end{algorithmic}
\end{algorithm}

\subsection{\MaxDet{}} \label{sec:appendix:maxdet}

\subsubsection{Equivalence of \MaxDet{} to Non-batch Mode Active Learning With Fixed Kernel} Using the Schur determinant formula
\begin{IEEEeqnarray*}{+rCl+x*}
\det \begin{pmatrix}
\bfA & \bfB \\
\bfC & \bfD
\end{pmatrix} = \det(\bfA) \det(\bfD - \bfC \bfA^{-1} \bfB)~,
\end{IEEEeqnarray*}
we can compute
\begin{IEEEeqnarray*}{+rCl+x*}
&& \det(k(\calX \cup \{\bfx\}, \calX \cup \{\bfx\}) + \sigma^2 \bfI) \\
& = & \det \begin{pmatrix}
k(\calX, \calX) + \sigma^2 \bfI & k(\calX, \bfx) \\
k(\bfx, \calX) & k(\bfx, \bfx) + \sigma^2
\end{pmatrix} \\
& = & \det(k(\calX, \calX) + \sigma^2 \bfI) \det(k(\bfx, \bfx) + \sigma^2 - k(\bfx, \calX) (k(\calX, \calX) + \sigma^2 \bfI)^{-1} k(\calX, \bfx)) \\
& = & \det(k(\calX, \calX) + \sigma^2 \bfI) \cdot (\sigma^2 + k_{\to\post{\calX, \sigma^2}}(\bfx, \bfx))~. \IEEEyesnumber \label{eq:maxdet_onestep_equiv}
\end{IEEEeqnarray*}
This shows that
\begin{IEEEeqnarray*}{+rCl+x*}
\argmax_{\bfx \in \Xrem} k_{\to\post{\Xsel, \sigma^2}}(\bfx, \bfx) = \argmax_{\bfx \in \Xrem} \det(k(\Xsel \cup \{\bfx\}, \Xsel \cup \{\bfx\}) + \sigma^2 \bfI)~.
\end{IEEEeqnarray*}

\subsubsection{Equivalence of \MaxDet{} to BatchBALD on a GP} As in \Cref{sec:kernel_transformations:post}, we consider a GP model in feature space, given by $y_i = \bfw^\top \phi(\bfx_i) + \varepsilon_i$ with weight prior $\bfw \sim \calN(\bfzero, \bfI)$ and i.i.d.\ observation noise $\varepsilon_i \sim \calN(0, \sigma^2)$. The objective of BatchBALD \citep{kirsch_batchbald_2019} is to maximize the mutual information
\begin{equation*}
    a(\Xbatch) \equalDef \bbH(\Ybatch \mid \Xbatch, \Dtrain) - \bbE_{\bfw \sim p(\bfw|\Dtrain)} \bbH(\Ybatch \mid \Xbatch, \Dtrain, \bfw)~,
\end{equation*}
where $\bbH$ refers to the (conditional) entropy. Writing $\Ybatch^* \equalDef \bbE[\Ybatch\mid\Xbatch, \Dtrain]$, we have
\begin{align*}
    p(\Ybatch \mid \Xbatch, \Dtrain) &= \calN(\Ybatch \mid \Ybatch^*, k(\Xbatch, \Xbatch) + \sigma^2 \bfI) \\
    p(\Ybatch \mid \Xbatch, \Dtrain, \bfw) &= \calN(\Ybatch \mid \phi(\Xtrain)^\top \bfw, \sigma^2 \bfI) \\
    \bbH(\calN(\bfmu, \bfSigma)) &= \frac{n}{2} \ln(2\pi e) + \frac{1}{2} \ln(\det(\bfSigma)) \quad \text{for } \bfmu \in \bbR^n, \bfSigma \in \bbR^{n \times n}~.
\end{align*}
Hence, we can compute
\begin{equation}
    a(\Xbatch) = \frac{1}{2} \ln(\det(k(\Xbatch, \Xbatch) + \sigma^2 \bfI)) - \Nbatch \ln(\sigma)~. \label{eq:batchbald_gp}
\end{equation}
This shows that greedy maximization of the BatchBALD acquisition function, as proposed by \cite{kirsch_batchbald_2019}, is equivalent to \MaxDet{}. \cite{kirsch_batchbald_2019} showed that for any Bayesian model, greedy optimization of $a$ is suboptimal by a factor of at most $(1-1/e)$. By applying this result to the GP model, we obtain that the same suboptimality bound applies to \MaxDet{} for the form of $a$ given in \eqref{eq:batchbald_gp}.

\subsubsection{Equivalence of \MaxDet{} to the P-greedy Algorithm} In our notation, the P-greedy algorithm \citep{de_marchi_near-optimal_2005} can be written as
\begin{IEEEeqnarray*}{+rCl+x*}
\NextSample(k, \Xsel, \Xrem) = \argmax_{\bfx \in \Xrem} P_{k, \Xsel}(\bfx)~,
\end{IEEEeqnarray*}
where by Lemma 4.1 in \cite{de_marchi_near-optimal_2005}, the non-negative power function $P_{k, \Xsel}$ can be written as
\begin{IEEEeqnarray*}{+rCl+x*}
P_{k, \Xsel}(\bfx)^2 & = & k(\bfx, \bfx) - k(\bfx, \Xsel)k(\Xsel, \Xsel)^{-1} k(\Xsel, \bfx)~.
\end{IEEEeqnarray*}
A calculation analogous to \eqref{eq:maxdet_onestep_equiv} therefore shows that P-greedy is equivalent to \MaxDet{} with $\sigma^2 = 0$. 

\subsubsection{Relation to the Greedy Algorithm for D-optimal Design} In our notation, the D-optimal design problem \citep{wald_efficient_1943} is to maximize the determinant $\det(\phi(\Xsel)^\top \phi(\Xsel))$, which can only be nonzero in the case $\Nsel \geq \dfeat$. It can be seen as the $\sigma \to 0$ limit for $\Nsel \geq \dfeat$ of the determinant-maximization objective that motivates \MaxDet{} since an eigenvalue-based argument shows that
\begin{IEEEeqnarray*}{+rCl+x*}
\det(\phi(\Xsel)^\top \phi(\Xsel)+\sigma^2 \bfI) = \sigma^{\dfeat - \Nsel} \det(k(\Xsel, \Xsel) + \sigma^2 \bfI)~.
\end{IEEEeqnarray*}
In this sense, the corresponding greedy algorithm \citep{wynn_sequential_1970} is the underparameterized ($\dfeat \leq \Nsel$) analog to the P-greedy algorithm, since the latter can only be well-defined in the overparameterized regime ($\dfeat \geq \Nsel$). Some guarantees for this greedy algorithm are given by \cite{wynn_sequential_1970} and \cite{madan_combinatorial_2019}. While the classical D-optimal design uses $\sigma = 0$, Bayesian D-optimal design uses $\sigma > 0$ and is thus even more directly related to \MaxDet{} \citep{chaloner_bayesian_1995}.

\subsubsection{Kernel-space Implementation of \MaxDet{}} In the following, we want to derive an efficient kernel-space implementation of \MaxDet{}. Let $\Xcand \equalDef \Xmode \cup \Xpool$. We perform a partial pivoted Cholesky decomposition of the matrix $\bfM = k(\Xcand, \Xcand) + \sigma^2 \bfI$, which has been suggested for $P$-greedy by \cite{pazouki_bases_2011} and in the context of determinantal point processes by \cite{chen_fast_2018}. We denote submatrices of $\bfM$ for example by $\bfM[\calX, \bfx] \equalDef (M_{\tilde\bfx, \bfx})_{\tilde\bfx \in \calX}$.

Suppose that at the current step, the points $\calX \equalDef \Xmode \cup \Xbatch$ have already been added. Consider the Cholesky decomposition $\bfM[\calX, \calX] = \bfL(\calX)\bfL(\calX)^\top$. Then, the Cholesky decomposition for $\calX \cup \{\bfx\}$ is of the form
\begin{IEEEeqnarray*}{+rCl+x*}
\begin{pmatrix}
\bfM[\calX, \calX] & \bfM[\calX, \bfx] \\
\bfM[\bfx, \calX] & \bfM[\bfx, \bfx]
\end{pmatrix} & = & \bfL(\calX \cup \{\bfx\}) \bfL(\calX \cup \{\bfx\})^\top \\
& = & \begin{pmatrix}
\bfL(\calX) & \bfzero \\
\bfb(\calX, \bfx)^\top & \sqrt{c(\calX, \bfx)}
\end{pmatrix} \begin{pmatrix}
\bfL(\calX) & \bfzero \\
\bfb(\calX, \bfx)^\top & \sqrt{c(\calX, \bfx)}
\end{pmatrix}^\top~, \IEEEyesnumber \label{eq:cholesky_recursive}
\end{IEEEeqnarray*}
which implies $\bfb(\calX, \bfx) = \bfL(\calX)^{-1}\bfM[\calX, \bfx]$ and $c(\calX, \bfx) = \bfM[\bfx, \bfx] - \|\bfb(\calX, \bfx)\|_2^2$. Using the general inversion formula for block-triangular matrices given by
\begin{IEEEeqnarray*}{+rCl+x*}
\begin{pmatrix}
\bfA & \bfzero \\
\bfB & \bfC
\end{pmatrix}^{-1} = \begin{pmatrix}
\bfA^{-1} & \bfzero \\
-\bfC^{-1} \bfB \bfA^{-1} & \bfC^{-1}
\end{pmatrix}~,
\end{IEEEeqnarray*}
we obtain
\begin{IEEEeqnarray*}{+rCl+x*}
\bfb(\calX \cup \{\bfx\}, \tbfx) & = & \bfL(\calX \cup \{\bfx\})^{-1} \bfM[\calX \cup \{\bfx\}, \tbfx] \\
& = & \begin{pmatrix}
\bfL(\calX)^{-1} & \bfzero \\
-c(\calX, \bfx)^{-1/2} \bfb(\calX, \bfx)^\top \bfL(\calX)^{-1} & c(\calX, \bfx)^{-1/2}
\end{pmatrix} \begin{pmatrix}
\bfM[\calX, \tbfx] \\ \bfM[\bfx, \tbfx]
\end{pmatrix} \\
& = & \begin{pmatrix}
\bfb(\calX, \tbfx) \\
c(\calX, \bfx)^{-1/2}(\bfM[\bfx, \tbfx] - \bfb(\calX, \bfx)^\top \bfb(\calX, \tbfx)) \IEEEyesnumber \label{eq:b_update}
\end{pmatrix}
\end{IEEEeqnarray*}
and therefore
\begin{IEEEeqnarray*}{+rCl+x*}
c(\calX \cup \{\bfx\}, \tbfx) = c(\calX, \tbfx) - c(\calX, \bfx)^{-1}(\bfM[\bfx, \tbfx] - \bfb(\calX, \bfx)^\top \bfb(\calX, \tbfx))^2~. \IEEEyesnumber \label{eq:c_update}
\end{IEEEeqnarray*}

With the above considerations, we can implement \MaxDet{} as follows, which is given as pseudocode in \Cref{alg:maxdet_kernel}:
\begin{itemize}
\item For \textsc{Init}, we initialize $\bfb(\emptyset, \bfx)$ to an empty vector and $\bfc(\emptyset, \bfx) = M[\bfx, \bfx]$. We do not precompute $\bfM$ since not all entries of $\bfM$ will be used, otherwise, the runtime complexity would be quadratic in $\Ncand$.
\item For \textsc{Next}, note that by \eqref{eq:cholesky_recursive},
\begin{IEEEeqnarray*}{+rCl+x*}
\det \bfM[\calX \cup \{\bfx\}, \calX \cup \{\bfx\}] = c(\calX, \bfx) \cdot \det(\bfL(\calX))^2~,
\end{IEEEeqnarray*}
hence
\begin{IEEEeqnarray*}{+rCl+x*}
\argmax_{\bfx \in \Xpool \setminus \Xbatch} \det \bfM[\calX \cup \{\bfx\}, \calX \cup \{\bfx\}] = \argmax_{\bfx \in \Xpool \setminus \Xbatch} c(\calX, \bfx)~.
\end{IEEEeqnarray*}
\item For \textsc{Add}, we update the $\bfb$ and $c$ values as in \eqref{eq:b_update} and \eqref{eq:c_update}.
\end{itemize}

Regarding the runtime complexity, the \textsc{Add} step is clearly the most expensive one, requiring $\BigO(\Ncand \Nsel)$ operations for computing $\bfB^\top \bfB[\cdot, \bfx]$ and $\BigO(\Ncand T_k)$ operations for computing $k(\Xcand, \bfx)$. Since \textsc{Add} is called $\Nsel$ times, the total runtime complexity is therefore $\BigO(\Ncand \Nsel(T_k + \Nsel))$. The total memory complexity is $\BigO(\Ncand \Nsel)$, required for storing $\bfB$.

\begin{algorithm}[tb]
\caption{\MaxDet{} pseudocode implementation in kernel space for given $\sigma^2 \geq 0$ using \Cref{alg:iterative_selection_detailed}.} \label{alg:maxdet_kernel}
\begin{algorithmic}
\Function{Init}{}
	\State $\bfB \assign$ empty $0 \times \Xcand$ matrix
	\State $\bfc \assign (k(\bfx, \bfx) + \sigma^2)_{\bfx \in \Xcand}$
\EndFunction\\
\Function{Add}{$\bfx$}
	\State $\bfv \assign \sqrt{\bfc} \odot ((k(\Xcand, \bfx) + \sigma^2\bfI[\Xcand, \bfx] - \bfB^\top \bfB[\cdot, \bfx])$ \Comment{$\sqrt{\bfc}$ should be understood element-wise}
	\State $\bfB \assign \begin{pmatrix}
	\bfB \\ \bfv^\top
	\end{pmatrix}$
	\State $\bfc \assign \bfc - \bfv \odot \bfv$
\EndFunction\\
\Function{Next}{}
	\State \Return $\argmax_{\bfx \in \Xpool \setminus \Xbatch} c[\bfx]$
\EndFunction
\end{algorithmic}
\end{algorithm}

\subsubsection{Feature-space Implementation of \MaxDet{}} In cases where the feature space dimension $\dfeat$ of $\phi$ is smaller than the number $\Nsel$ of added samples, it can be beneficial to implement \MaxDet{} in feature space instead. For $\Xsel \equalDef \Xmode \cup \Xbatch$, we want to compute 
\begin{IEEEeqnarray*}{+rCl+x*}
\argmax_{\bfx \in \Xpool \setminus \Xbatch} k_{\to\post{\Xsel, \sigma^2}}(\bfx, \bfx)~.
\end{IEEEeqnarray*}
Now, from \eqref{eq:posterior_fm}, we know that
\begin{IEEEeqnarray*}{+rCl+x*}
k_{\to\post{\Xsel, \sigma^2}}(\bfx, \bfx) & = & \phi(\bfx)^\top \hat\bfSigma_{\Xsel}^{-1} \phi(\bfx), \qquad \hat\bfSigma_{\Xsel} \equalDef \sigma^{-2} \phi(\Xsel)^\top \phi(\Xsel) + \bfI~.
\end{IEEEeqnarray*}
Adding a point to $\Xsel$ leads to a rank-1 update of $\hat\bfSigma_{\Xsel}$, and the corresponding update of $\hat\bfSigma_{\Xsel}^{-1}$ can be computed using the Sherman-Morrison formula. This gives rise to three approaches towards implementing \MaxDet{} in feature space:
\begin{enumerate}[(1)]
\item Keep track of $\phi(\bfx)$ and $\hat\bfSigma_{\Xsel}^{-1}$. Compute $k_{\to\post{\Xsel, \sigma^2}}(\bfx, \bfx) = \phi(\bfx)^\top \hat\bfSigma_{\Xsel}^{-1} \phi(\bfx)$ in each step.
\item Keep track of $\phi(\bfx)$ and $\psi(\bfx) \equalDef \hat\bfSigma_{\Xsel}^{-1} \phi(\bfx)$. Compute 
\begin{IEEEeqnarray*}{+rCl+x*}
k_{\to\post{\Xsel, \sigma^2}}(\bfx, \bfx) = \phi(\bfx)^\top \psi(\bfx)
\end{IEEEeqnarray*}
in each step.
\item Keep track of $\phi_{\to\post{\Xsel, \sigma^2}}(\bfx)$, one possible realization of which is $\phi_{\to\post{\Xsel, \sigma^2}}(\bfx) = \hat\bfSigma_{\Xsel}^{-1/2} \phi(\bfx)$. Compute $k_{\to\post{\Xsel, \sigma^2}}(\bfx, \bfx) = \phi_{\to\post{\Xsel, \sigma^2}}(\bfx)^\top \phi_{\to\post{\Xsel, \sigma^2}}(\bfx)$ in each step.
\end{enumerate}
Option (1) is less computationally efficient than (2), (3) since one needs to compute matrix-vector products instead of only inner products. Version (2) and (3) are similar, but here we favor (3) since it only requires storing one vector instead of two for each $\bfx$. 
Since
\begin{IEEEeqnarray*}{+rCl+x*}
\phi_{\to\post{\Xsel\cup\{\bfx\}, \sigma^2}} = \phi_{\to\post{\Xsel, \sigma^2}\to\post{\{\bfx\}, \sigma^2}}~,
\end{IEEEeqnarray*}
we will now consider how to efficiently compute a single posterior update $\phi_{\to\post{\{\bfx\}, \sigma^2}}$. To this end, we first consider how to compute matrix square roots of specific rank-1 updates:

\begin{lemma} \label{lemma:rank_1_update_sqrt}
Let $\bfv \in \bbR^p$ and let $c \geq -\frac{1}{\bfv^\top \bfv}$. Then,
\begin{IEEEeqnarray*}{+rCl+x*}
\bfI + c\bfv \bfv^\top & = & \left(\bfI + \frac{c}{1 + \sqrt{1+c\bfv^\top \bfv}} \bfv \bfv^\top \right)^2~.
\end{IEEEeqnarray*}
\begin{proof}
Due to the condition on $c$, the square root is well-defined. We have
\begin{IEEEeqnarray*}{+rCl+x*}
\left(\bfI + \frac{c}{1 + \sqrt{1+c\bfv^\top \bfv}} \bfv \bfv^\top \right)^2 & = & \bfI + C \bfv \bfv^\top~,
\end{IEEEeqnarray*}
where
\begin{IEEEeqnarray*}{+rCl+x*}
C & = & \frac{2c}{1 + \sqrt{1+c\bfv^\top \bfv}} + \frac{c^2\bfv^\top \bfv}{(1 + \sqrt{1+c\bfv^\top \bfv})^2} = \frac{2c(1+\sqrt{1+c\bfv^\top \bfv}) + c^2\bfv^\top\bfv}{2+2\sqrt{1+c\bfv^\top \bfv}+c\bfv^\top \bfv} = c~. & \qedhere
\end{IEEEeqnarray*}
\end{proof}
\end{lemma}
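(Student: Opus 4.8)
The plan is to exploit the fact that $\bfI + c\bfv\bfv^\top$ acts as a pure scaling along the single direction $\bfv$ and as the identity on the orthogonal complement $\Span(\bfv)^\perp$. Concretely, $\bfv$ is an eigenvector with eigenvalue $1 + c\,\bfv^\top\bfv$, while every vector orthogonal to $\bfv$ is an eigenvector with eigenvalue $1$. Since the hypothesis $c \ge -1/(\bfv^\top\bfv)$ guarantees $1 + c\,\bfv^\top\bfv \ge 0$, the matrix is positive semidefinite, and its PSD square root must share these eigenspaces; hence that square root is again of the form $\bfI + \alpha\,\bfv\bfv^\top$ for a suitable scalar $\alpha$. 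This observation reduces the matrix identity to pinning down the correct value of $\alpha$.

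Second, I would reduce the problem to a scalar equation. Using the rank-one identity $(\bfv\bfv^\top)^2 = (\bfv^\top\bfv)\,\bfv\bfv^\top$, expansion gives
\[
(\bfI + \alpha\,\bfv\bfv^\top)^2 = \bfI + (2\alpha + \alpha^2\,\bfv^\top\bfv)\,\bfv\bfv^\top,
\]
so matching with $\bfI + c\,\bfv\bfv^\top$ amounts to solving the scalar quadratic $(\bfv^\top\bfv)\alpha^2 + 2\alpha - c = 0$. Selecting the root that keeps the square root PSD yields $\alpha = (-1 + \sqrt{1 + c\,\bfv^\top\bfv})/(\bfv^\top\bfv)$, and rationalizing (multiply numerator and denominator by $1 + \sqrt{1 + c\,\bfv^\top\bfv}$, using $(\sqrt{1+c\bfv^\top\bfv}-1)(\sqrt{1+c\bfv^\top\bfv}+1) = c\,\bfv^\top\bfv$) gives exactly the claimed coefficient $c/(1 + \sqrt{1 + c\,\bfv^\top\bfv})$.

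Alternatively, and this is how I would actually present the argument for brevity, I would skip the derivation and verify the stated formula directly: substitute $\alpha = c/(1 + \sqrt{1 + c\,\bfv^\top\bfv})$ into the expansion above, collect the coefficient $C$ of $\bfv\bfv^\top$ over the common denominator $(1 + \sqrt{1 + c\bfv^\top\bfv})^2 = 2 + 2\sqrt{1 + c\bfv^\top\bfv} + c\bfv^\top\bfv$, and observe that its numerator $2c(1 + \sqrt{1 + c\bfv^\top\bfv}) + c^2\bfv^\top\bfv$ factors as $c$ times that same denominator, whence $C = c$. There is essentially no obstacle here beyond bookkeeping: the only point requiring genuine care is the well-definedness of the square root, which is precisely what the hypothesis $c \ge -1/(\bfv^\top\bfv)$ secures, since it forces $1 + c\,\bfv^\top\bfv \ge 0$ so that the inner root is real and the resulting factor is an honest symmetric PSD square root. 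Both routes agree, so I would take the direct verification as the proof and leave the eigenspace reasoning as motivation.
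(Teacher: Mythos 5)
Your proposal is correct, and the direct verification you settle on — expanding $(\bfI + \alpha\,\bfv\bfv^\top)^2$ via $(\bfv\bfv^\top)^2 = (\bfv^\top\bfv)\,\bfv\bfv^\top$ and showing the coefficient $C$ collapses to $c$ over the common denominator $2 + 2\sqrt{1+c\bfv^\top\bfv} + c\bfv^\top\bfv$ — is exactly the computation in the paper's proof. The eigenspace motivation and the quadratic-in-$\alpha$ derivation are fine additional context but not needed.
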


The following proposition shows how to update the posterior feature map after observing a point $\bfx$:

\begin{proposition}[Forward update] \label{prop:forward_update}
Let $\sigma^2 > 0$, let $k$ be a kernel and let $\tilde k \equalDef k_{\to \post{\{\bfx\}, \sigma^2}}$. Then,
\begin{IEEEeqnarray*}{+rCl+x*}
\tilde k(\bfx', \bfx'') & = & k(\bfx', \bfx'') - k(\bfx', \bfx)(k(\bfx, \bfx) + \sigma^2 \bfI)^{-1} k(\bfx, \bfx'')
\end{IEEEeqnarray*}
Consequently, if $\phi$ is a feature map for $k$, then
\begin{IEEEeqnarray*}{+rCl+x*}
\tilde \phi(\bfx') \equalDef \left(\bfI - \frac{\phi(\bfx)\phi(\bfx)^\top}{\sigma^2 + \phi(\bfx)^\top \phi(\bfx)}\right)^{1/2} \phi(\bfx') = \left(\bfI - \beta \phi(\bfx)\phi(\bfx)^\top\right) \phi(\bfx')
\end{IEEEeqnarray*}
is a feature map for $\tilde k$, where
\begin{IEEEeqnarray*}{+rCl+x*}
\beta \equalDef \frac{1}{\sqrt{\sigma^2 + \phi(\bfx)^\top \phi(\bfx)}\left(\sqrt{\sigma^2 + \phi(\bfx)^\top \phi(\bfx)} + \sigma\right)}~.
\end{IEEEeqnarray*}

\begin{proof}
The kernel update equation follows directly from \eqref{eq:posterior_kernel}.

\textbf{Step 1: Feature map.} The specified feature map $\tilde\phi$ satisfies
\begin{IEEEeqnarray*}{+rCl+x*}
\tilde\phi(\bfx')^\top \tilde\phi(\bfx'') & = & \phi(\bfx')^\top \left(\bfI - \frac{\phi(\bfx)\phi(\bfx)^\top}{\sigma^2 + \phi(\bfx)^\top \phi(\bfx)}\right)  \phi(\bfx'') \\
& = & k(\bfx', \bfx'') - \tilde k(\bfx', \bfx) (\sigma^2 + k(\bfx, \bfx))^{-1} k(\bfx, \bfx'') \\
& = & \tilde k(\bfx', \bfx'')~,
\end{IEEEeqnarray*}
hence it is a feature map for $\tilde k$.

\textbf{Step 2: Square root.} According to \Cref{lemma:rank_1_update_sqrt}, we have
\begin{IEEEeqnarray*}{+rCl+x*}
&& \left(\bfI - \frac{\phi(\bfx) \phi(\bfx)^\top}{\sigma^2 + \phi(\bfx)^\top \phi(\bfx)}\right)^{1/2} \\
& = & \bfI - \frac{1}{(\sigma^2 + \phi(\bfx)^\top \phi(\bfx))\left(1 + \sqrt{1 - \phi(\bfx)^\top \phi(\bfx)/(\sigma^2 + \phi(\bfx)^\top \phi(\bfx))}\right)} \phi(\bfx)\phi(\bfx)^\top \\
& = & \bfI - \frac{1}{(\sigma^2 + \phi(\bfx)^\top \phi(\bfx))(1 + \sigma(\sigma^2 + \phi(\bfx)^\top \phi(\bfx))^{-1/2})} \phi(\bfx)\phi(\bfx)^\top \\
& = & \bfI - \frac{1}{\sqrt{\sigma^2 + \phi(\bfx)^\top \phi(\bfx)}(\sqrt{\sigma^2 + \phi(\bfx)^\top \phi(\bfx)} + \sigma)} \phi(\bfx)\phi(\bfx)^\top~. & \qedhere
\end{IEEEeqnarray*}
\end{proof}
\end{proposition}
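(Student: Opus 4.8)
The plan is to handle the two assertions of the proposition separately, since the kernel identity follows immediately from the general posterior transformation, whereas the feature-map assertion reduces to computing an explicit square root of a rank-one perturbation of the identity. Concretely, the kernel identity is just the posterior-covariance formula \eqref{eq:posterior_kernel} applied with the conditioning set equal to the singleton $\{\bfx\}$ rather than $\Xtrain$: the matrix $k(\{\bfx\},\{\bfx\})+\sigma^2\bfI$ collapses to the scalar $k(\bfx,\bfx)+\sigma^2$, whose inverse is an ordinary reciprocal, so the first displayed equation requires nothing beyond this specialization. Everything of substance lies in exhibiting the explicit feature map.

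For the feature-map claim, the key observation I would exploit is that I need not verify the square-root structure in order to check that the kernel is reproduced. Writing $\bfM \equalDef \bfI - \frac{\phi(\bfx)\phi(\bfx)^\top}{\sigma^2+\phi(\bfx)^\top\phi(\bfx)}$, any symmetric square root $\bfM^{1/2}$ gives $\tilde\phi(\bfx')^\top\tilde\phi(\bfx'') = \phi(\bfx')^\top\bfM\,\phi(\bfx'')$. Hence the first step is to expand $\phi(\bfx')^\top\bfM\,\phi(\bfx'')$ directly: the identity term contributes $k(\bfx',\bfx'')$ and the rank-one term contributes $\frac{k(\bfx',\bfx)\,k(\bfx,\bfx'')}{\sigma^2+k(\bfx,\bfx)}$, which by the kernel identity just established equals $\tilde k(\bfx',\bfx'')$. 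This shows that \emph{any} square root of $\bfM$ is a valid feature map for $\tilde k$, so it only remains to produce the specific closed form $\bfI-\beta\,\phi(\bfx)\phi(\bfx)^\top$.

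The second step is to compute that square root explicitly, and here I would invoke \Cref{lemma:rank_1_update_sqrt} with $\bfv = \phi(\bfx)$ and $c = -\tfrac{1}{\sigma^2+\phi(\bfx)^\top\phi(\bfx)}$, first checking the admissibility hypothesis $c \geq -\tfrac{1}{\bfv^\top\bfv}$, which holds precisely because $\sigma^2>0$ forces $\sigma^2+\|\phi(\bfx)\|_2^2 > \|\phi(\bfx)\|_2^2$. The lemma then yields $\bfM^{1/2} = \bfI + \frac{c}{1+\sqrt{1+c\,\bfv^\top\bfv}}\bfv\bfv^\top$, and the remaining task is purely algebraic: using $1+c\,\phi(\bfx)^\top\phi(\bfx) = \sigma^2/(\sigma^2+\phi(\bfx)^\top\phi(\bfx))$ and setting $s\equalDef\sqrt{\sigma^2+\phi(\bfx)^\top\phi(\bfx)}$ collapses the coefficient to $-1/(s(s+\sigma))$, which is exactly $-\beta$. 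The only real obstacle is this bookkeeping of nested square roots, but it is routine once the substitution $s=\sqrt{\sigma^2+\phi(\bfx)^\top\phi(\bfx)}$ is in place; there is no conceptual difficulty, only the need to track the factor carefully so that the stated $\beta$ emerges rather than a superficially different but equal expression.
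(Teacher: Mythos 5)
Your proposal is correct and follows essentially the same route as the paper's proof: the kernel identity is read off from \eqref{eq:posterior_kernel} with the singleton conditioning set, the feature-map property is checked by expanding $\phi(\bfx')^\top \bfM\, \phi(\bfx'')$ directly (so that any symmetric square root works), and the explicit square root comes from \Cref{lemma:rank_1_update_sqrt} with $\bfv=\phi(\bfx)$ and $c=-1/(\sigma^2+\phi(\bfx)^\top\phi(\bfx))$, whose admissibility and simplification to $-\beta$ you verify exactly as in the paper.
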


In our implementation, we keep track of the following quantities:
\begin{IEEEeqnarray*}{+rCl+x*}
\bfPhi_{\calX}[\bfx] & \equalDef & \phi_{\to\post{\calX, \sigma^2}}(\bfx) \\
c_{\calX}[\bfx] & \equalDef & \bfPhi_{\calX}[\bfx]^\top \bfPhi_{\calX}[\bfx]~.
\end{IEEEeqnarray*}
Note that $\phi_{\to\post{\calX, \sigma^2}}$ is not uniquely defined, since any rotation of $\phi_{\to\post{\calX, \sigma^2}}$ leads to the same kernel. When computing $\bfPhi$ as defined above, we do not care which version of $\phi_{\to\post{\calX, \sigma^2}}$ it corresponds to, as long as the same version of $\phi_{\to\post{\calX, \sigma^2}}$ is used for all $\bfx$.

Following \Cref{prop:forward_update}, $\bfPhi$ and $c$ can be updated with a new observation $\bfx \notin \calX$ as follows:
\begin{IEEEeqnarray*}{+rCl+x*}
\bfPhi_{\calX \cup \{\bfx\}}[\bfx'] & = & \bfPhi_{\calX}[\bfx'] - \beta_{\calX}(\bfx) \bfPhi_{\calX}[\bfx] \langle \bfPhi_{\calX}[\bfx], \bfPhi_{\calX}[\bfx']\rangle \\
\bfc_{\calX \cup \{\bfx\}}[\bfx'] & = & \bfPhi_{\calX}[\bfx']^\top \left(\bfI - \frac{\bfPhi_{\calX}[\bfx]\bfPhi_{\calX}[\bfx]^\top}{\sigma^2 + \bfPhi_{\calX}[\bfx]^\top \bfPhi_{\calX}[\bfx]}\right) \bfPhi_{\calX}[\bfx'] \\
& = & \bfc_{\calX}[\bfx'] - \gamma_{\calX}(\bfx)^{-2} \langle \Phi_{\calX}[\bfx], \Phi_{\calX}[\bfx']\rangle^2~,
\end{IEEEeqnarray*}
where
\begin{IEEEeqnarray*}{+rCl+x*}
\gamma_{\calX}(\bfx) & \equalDef & \sqrt{\sigma^2 + \bfc_{\calX}[\bfx]} \\
\beta_{\calX}(\bfx) & \equalDef & \frac{1}{\gamma_{\calX}(\bfx)(\gamma_{\calX}(\bfx) + \sigma)}~.
\end{IEEEeqnarray*}
Together, these considerations lead to a feature-space implementation of \MaxDet{}, presented in \Cref{alg:maxdet_features}.

For the complexity analysis of \Cref{alg:maxdet_features}, we exclude the computation of the feature matrix $\phi(\Xcand)$. As for the kernel-space version of \MaxDet{}, the runtime is then dominated by the runtime of \textsc{Add}, which has a runtime complexity of $\BigO(\Ncand \dfeat)$. Since it is called $\Nsel$ times, the total runtime complexity of \Cref{alg:maxdet_features} is $\BigO(\Ncand \Nsel \dfeat)$. If the kernel $k$ is evaluated by an inner product of the pre-computed features, the runtime of a kernel evaluation scales as $T_k = \Theta(\dfeat)$. In this case, the runtime of the kernel-space version in \Cref{alg:maxdet_kernel} has a runtime complexity of $\BigO(\Ncand\Nsel(\dfeat + \Nsel))$, which is not asymptotically better than the one for \Cref{alg:maxdet_features}. However, in our implementation, we observe that the kernel-space version typically runs faster for $\Nsel \lesssim 3\dfeat$, which can be attributed to the smaller constant in the runtime of \textsc{Add}. It is easily verified that the memory complexity of \Cref{alg:maxdet_features} scales as $\BigO(\Ncand \dfeat)$.

\begin{algorithm}[tb]
\caption{\MaxDet{} pseudocode implementation in feature space for given $\sigma^2 \geq 0$ using \Cref{alg:iterative_selection_detailed}.} \label{alg:maxdet_features}
\begin{algorithmic}
\Function{Init}{}
	\State $\bfPhi \assign \phi(\Xcand) \in \bbR^{\Ncand \times \dfeat}$ \Comment{feature matrix}
	\State $\bfc \assign (\langle \bfPhi[\bfx, \cdot]^\top, \bfPhi[\bfx, \cdot]^\top\rangle)_{\bfx \in \Xcand}$ \Comment{vector containing the kernel diagonal}
\EndFunction\\
\Function{Add}{$\bfx$}
	\State $\gamma \assign \sqrt{\sigma^2 + \bfc[\bfx]}$
	\State $\beta \assign (\gamma(\gamma + \sigma))^{-1}$
	\State $\bfu \assign \bfPhi \bfPhi[\bfx, \cdot]^\top$
	\State $\bfc \assign \bfc - \gamma^{-2} (\bfu \odot \bfu)$
	\State $\bfPhi \assign \bfPhi - \beta \bfu \bfPhi[\bfx, \cdot]$
\EndFunction\\
\Function{Next}{}
	\State \Return $\argmax_{\bfx \in \Xpool \setminus \Xbatch} \bfc[\bfx]$
\EndFunction
\end{algorithmic}
\end{algorithm}

\subsection{\Bait{}} \label{sec:appendix:bait}

In this section, we write $\Xtp \equalDef \Xtrain \cup \Xpool$.

\subsubsection{Connection Between Kernel and Feature Map Formulations} We can rewrite \Bait{}'s acquisition function from \eqref{eq:bait_acq} as
\begin{IEEEeqnarray*}{+rCl+x*}
a(\calX) & \equalDef & \sum_{\tbfx \in \Xtp} k_{\to\post{\calX, \sigma^2}}(\tbfx, \tbfx) \stackrel{\text{\eqref{eq:posterior_fm_other}}}{=} \sigma^2 \sum_{\tbfx \in \Xtp} \phi(\tbfx)^\top (\phi(\calX)^\top \phi(\calX) + \sigma^2 \bfI)^{-1} \phi(\tbfx) \\
& = & \sigma^2\sum_{\tbfx \in \Xtp} \tr\left(\phi(\tbfx)^\top (\phi(\calX)^\top \phi(\calX) + \sigma^2 \bfI)^{-1} \phi(\tbfx)\right) \\
& = & \sigma^2\sum_{\tbfx \in \Xtp} \tr\left((\phi(\calX)^\top \phi(\calX) + \sigma^2 \bfI)^{-1} \phi(\tbfx) \phi(\tbfx)^\top\right) \\
& = & \sigma^2\tr\left((\phi(\calX)^\top \phi(\calX) + \sigma^2 \bfI)^{-1} \phi(\Xtp)^\top \phi(\Xtp)\right) \\
& = & \sigma^2 \tr\left((\bfA_{\calX} + \sigma^2 \bfI)^{-1} \bfA_{\Xtp}\right)~,
\end{IEEEeqnarray*}
where
\begin{IEEEeqnarray*}{+rCl+x*}
\bfA_{\calX} \equalDef \phi(\calX)^\top \phi(\calX)~.
\end{IEEEeqnarray*}
The latter trace-based formulation corresponds to the formulation by \cite{ash_gone_2021}. 

\subsubsection{Forward Version} We will first derive an efficient implementation of \Bait{}-F in feature space, which builds on our derivation of \MaxDet{} in feature space and serves as a basis for \Bait{}-FB in feature space. As for the feature space version of \MaxDet{}, we choose to update the features using square roots of rank-1 updates. We will not derive a kernel space version of \Bait{} here since it appears that a kernel space version would not scale to large data sets. In the following, we will assume $\sigma^2 > 0$.

In a single iteration of the \Bait{}-F selection method, we want to find $\bfx \in \Xpool \setminus \Xbatch$ \emph{maximizing}
\begin{IEEEeqnarray*}{+rCl+x*}
&& a(\calX) - a(\calX \cup \{\bfx\}) \\
& \stackrel{\text{\Cref{prop:forward_update}}}{=} & \sum_{\tbfx \in \Xtp} \frac{k_{\to \post{\calX, \sigma^2}}(\tbfx, \bfx)^2}{k_{\to \post{\calX, \sigma^2}}(\bfx, \bfx) + \sigma^2} \\
& = & \frac{\phi_{\to \post{\calX, \sigma^2}}(\bfx)^\top \phi_{\to \post{\calX, \sigma^2}}(\Xtp)^\top \phi_{\to \post{\calX, \sigma^2}}(\Xtp) \phi_{\to \post{\calX, \sigma^2}}(\bfx)}{\phi_{\to \post{\calX, \sigma^2}}(\bfx)^\top \phi_{\to \post{\calX, \sigma^2}}(\bfx) + \sigma^2}~.
\end{IEEEeqnarray*}
While the inner product in the denominator of the last expression corresponds to the quantity $\bfc_{\calX}[\bfx]$ from the MaxDet feature space implementation, we still need a way to efficiently compute the numerator.
Therefore, in addition to the quantities $\bfPhi_{\calX}$ and $\bfc_{\calX}$ tracked in the feature-space implementation of \MaxDet{}, we track the following quantities for \Bait{}:
\begin{IEEEeqnarray*}{+rCl+x*}
\bfSigma_{\calX} & \equalDef & \phi_{\to \post{\calX, \sigma^2}}(\Xtp)^\top \phi_{\to \post{\calX, \sigma^2}}(\Xtp) \\
\bfv_{\calX}[\bfx] & \equalDef & \Phi_{\calX}[\bfx]^\top \bfSigma_{\calX} \Phi_{\calX}[\bfx]~.
\end{IEEEeqnarray*}
Using these quantities, we can write
\begin{IEEEeqnarray*}{+rCl+x*}
a(\calX) - a(\calX \cup \{\bfx\}) & = & \frac{\bfv_{\calX}[\bfx]}{\bfc_{\calX}[\bfx] + \sigma^2}~.
\end{IEEEeqnarray*}
Following \Cref{prop:forward_update}, we obtain the following update equations:
\begin{IEEEeqnarray*}{+rCl+x*}
\bfSigma_{\calX \cup \{\bfx\}} & = & (\bfI - \beta_{\calX}(\bfx) \bfPhi_{\calX}[\bfx] \bfPhi_{\calX}[\bfx]^\top) \bfSigma_{\calX} (\bfI - \beta_{\calX}(\bfx) \bfPhi_{\calX}[\bfx] \bfPhi_{\calX}[\bfx]^\top) \\
& = & \bfSigma_{\calX} - \beta_{\calX}(\bfx) \bfPhi_{\calX}[\bfx] \bfPhi_{\calX}[\bfx]^\top \bfSigma_{\calX} - \beta_{\calX}(\bfx) \bfSigma_{\calX} \bfPhi_{\calX}[\bfx] \bfPhi_{\calX}[\bfx]^\top \\
&& ~+~ \beta_{\calX}(\bfx)^2 \bfPhi_{\calX}[\bfx] \bfPhi_{\calX}[\bfx]^\top \bfSigma_{\calX} \bfPhi_{\calX}[\bfx] \bfPhi_{\calX}[\bfx]^\top \\
\bfv_{\calX \cup \{\bfx\}}[\bfx'] & = & \bfPhi_{\calX}[\bfx']^\top (\bfI - \gamma_{\calX}^{-2}(\bfx) \bfPhi_{\calX}[\bfx] \bfPhi_{\calX}[\bfx]^\top) \bfSigma_{\calX} (\bfI - \gamma_{\calX}(\bfx)^{-2} \bfPhi_{\calX}[\bfx] \bfPhi_{\calX}[\bfx]^\top) \bfPhi_{\calX}[\bfx'] \\
& = & \bfv_{\calX}[\bfx'] - 2\gamma_{\calX}(\bfx)^{-2} \bfPhi_{\calX}[\bfx']^\top \bfPhi_{\calX}[\bfx] \bfPhi_{\calX}[\bfx]^\top \bfSigma_{\calX} \bfPhi_{\calX}[\bfx'] \\
&& ~+~ \gamma_{\calX}(\bfx)^{-4} \bfPhi_{\calX}[\bfx']^\top \bfPhi_{\calX}[\bfx] \bfPhi_{\calX}[\bfx]^\top \bfSigma_{\calX} \bfPhi_{\calX}[\bfx] \bfPhi_{\calX}[\bfx]^\top \bfPhi_{\calX}[\bfx']~.
\end{IEEEeqnarray*}

This leads to the pseudocode in \Cref{alg:bait_f_features}. For the runtime analysis, we neglect the time for evaluating $\phi$ as usual. Then, the runtime of \textsc{Init} is $\BigO((\Ntrain+\Npool)\dfeat^2)$, the runtime of \textsc{Add} is $\BigO((\Ncand + \dfeat) \dfeat)$ and the runtime of \textsc{Next} is $\BigO(\Npool)$. Hence, the overall runtime of \Bait{}-F is $\BigO(\Ncand\Nsel\dfeat + (\Ntrain+\Npool)\dfeat^2)$. The memory complexity is $\BigO((\Ncand + \dfeat)\dfeat)$.

\begin{algorithm}[tb]
\caption{\Bait{}-F pseudocode implementation in feature space for given $\sigma^2 > 0$ using \Cref{alg:iterative_selection_detailed}.} \label{alg:bait_f_features}
\begin{algorithmic}
\Function{Init}{}
	\State $\bfPhi \assign \phi(\Xcand) \in \bbR^{\Ncand \times \dfeat}$ \Comment{feature matrix}
	\State $\bfc \assign (\langle \bfPhi[\bfx, \cdot]^\top, \bfPhi[\bfx, \cdot]^\top\rangle)_{\bfx \in \Xcand}$ \Comment{vector containing the kernel diagonal}
	\State $\bfSigma \assign \phi(\Xtp)^\top \phi(\Xtp)$ \Comment{Train and pool second moment matrix}
	\State $\bfv \assign (\langle \bfPhi[\bfx, \cdot]^\top, \bfSigma \bfPhi[\bfx, \cdot]^\top\rangle)_{\bfx \in \Xcand}$ \Comment{Numerator of the acquisition function}
\EndFunction\\
\Function{Add}{$\bfx$}
	\State $\gamma \assign \sqrt{\sigma^2 + \bfc[\bfx]}$
	\State $\beta \assign (\gamma(\gamma + \sigma))^{-1}$
	\State $\bfu \assign \bfPhi \bfPhi[\bfx, \cdot]^\top$
	\State $\tilde\bfu \assign \bfu \odot \bfu$

	\State $\bfw \assign \bfSigma \bfPhi[\bfx, \cdot]^\top$
	\State $\tilde v \assign \bfv[\bfx]$
	\State $\bfv \assign \bfv - 2\gamma^{-2} (\bfPhi \bfw) \odot \bfu + \gamma^{-4} \tilde v \tilde\bfu$
	
	\State $\bfA \assign \bfw \Phi[\bfx, \cdot]$
	\State $\bfSigma \assign \bfSigma - \beta (\bfA + \bfA^\top) + \beta^2 \tilde v \Phi[\bfx, \cdot]^\top \Phi[\bfx, \cdot]$
	
	\State $\bfc \assign \bfc - \gamma^{-2} \tilde\bfu$
	\State $\bfPhi \assign \bfPhi - \beta \bfu \bfPhi[\bfx, \cdot]$
\EndFunction\\
\Function{Next}{}
	\State \Return $\argmax_{\bfx \in \Xpool \setminus \Xbatch} \frac{\bfv[\bfx]}{\sigma^2 + \bfc[\bfx]}$
\EndFunction
\end{algorithmic}
\end{algorithm}

\subsubsection{Forward-Backward Version} To fit \Bait{}-FB into our framework, we first extend our iterative selection template to include a backward selection step. The extended template is shown in \Cref{alg:forward_backward_selection}. Here, we have an additional parameter $\Nextra$ specifying how many additional batch elements will be selected in the forward step and then removed in the backward step. Following \cite{ash_gone_2021}, we set $\Nextra \equalDef \min\{\Nbatch, \Npool-\Nbatch\}$ in our experiments.

\begin{algorithm}
\caption{Forward-backward selection algorithm template involving five customizable functions \textsc{Init}, \textsc{Add}, \textsc{Next}, \textsc{Remove}, \textsc{NextBackward} that are allowed to have side effects (i.e., read/write variables in \textsc{Select}).} \label{alg:forward_backward_selection}
\begin{algorithmic}
\Function{Select}{$k$, $\Xtrain$, $\Xpool$, $\Nbatch$, $\Nextra$, mode $\in \{$P, TP$\}$}
	\State $\Xmode \assign \Xtrain$ if mode = TP else $\emptyset$
	\State $\Xcand \equalDef \Xmode \cup \Xpool$
	\State $\Xbatch \assign \emptyset$
	\State \Call{Init}{}
	\For{$\bfx$ in $\Xmode$}
		\State \Call{Add}{$\bfx$}
	\EndFor
	\For{$i$ from $1$ to $\Nbatch+\Nextra$}
		\State $\bfx \assign $\Call{Next}{}  %
		\If{$\bfx \in \Xbatch \cup \Xtrain$ (failed selection)}
			\State ensure $|\Xbatch| = \Nbatch$ by removing the latest samples or filling up with random samples
			\State \Return $\Xbatch$
		\EndIf
		\State $\Xbatch \assign \Xbatch \cup \{\bfx\}$
		\State \Call{Add}{$\bfx$}
	\EndFor
	\For{$i$ from $1$ to $\Nextra$}
		\State $\bfx \assign $\Call{NextBackward}{}  %
		\If{$\bfx \notin \Xbatch \cup \Xtrain$ (failed selection)}
			\State ensure $|\Xbatch| = \Nbatch$ by removing the latest samples
			\State \Return $\Xbatch$
		\EndIf
		\State $\Xbatch \assign \Xbatch \setminus \{\bfx\}$
		\State \Call{Remove}{$\bfx$}
	\EndFor
	\State \Return $\Xbatch$
\EndFunction
\end{algorithmic}
\end{algorithm}

To implement \Bait{}-FB within \Cref{alg:forward_backward_selection}, we can reuse the methods \textsc{Init}, \textsc{Add} and \textsc{Next} from \Bait{}-F. In addition, we need to implement the methods \textsc{Remove} and \textsc{NextBackward} for the backward step. To this end, the following proposition shows how the kernel and feature map update when \emph{removing} a point $\bfx$ from the set $\calX$ of observed points:

\begin{proposition}[Backward update] \label{prop:backward_update}
For a kernel $k$ and $\sigma^2 > 0$, let $\tilde k \equalDef k_{\to \post{\{\bfx\}, \sigma^2}}$. Then,
\begin{IEEEeqnarray*}{+rCl+x*}
k(\bfx', \bfx'') = \tilde k(\bfx', \bfx'') + \tilde k(\bfx', \bfx) (\sigma^2 - \tilde k(\bfx, \bfx))^{-1} \tilde k(\bfx, \bfx'')~.
\end{IEEEeqnarray*}
Consequently, if $\tilde \phi$ is a feature map for $\tilde k$, then
\begin{IEEEeqnarray*}{+rCl+x*}
\phi(\bfx') \equalDef \left(\bfI + \frac{\tilde\phi(\bfx)\tilde\phi(\bfx)^\top}{\sigma^2 - \tilde\phi(\bfx)^\top \tilde\phi(\bfx)}\right)^{1/2} \tilde \phi(\bfx') = \left(\bfI + \tilde \beta \tilde\phi(\bfx)\tilde\phi(\bfx)^\top\right) \tilde \phi(\bfx')
\end{IEEEeqnarray*}
is a feature map for $k$, where
\begin{IEEEeqnarray*}{+rCl+x*}
\tilde \beta \equalDef \frac{1}{\sqrt{\sigma^2 - \tilde\phi(\bfx)^\top \tilde\phi(\bfx)}\left(\sqrt{\sigma^2 - \tilde\phi(\bfx)^\top \tilde\phi(\bfx)} + \sigma\right)}~.
\end{IEEEeqnarray*}

\begin{proof}
\textbf{Step 1: Finding $k(\bfx, \bfx)$.} We have
\begin{IEEEeqnarray*}{+rCl+x*}
\tilde k(\bfx, \bfx) = k(\bfx, \bfx) - \frac{k(\bfx, \bfx)^2}{k(\bfx, \bfx) + \sigma^2} = \frac{k(\bfx, \bfx)\sigma^2}{k(\bfx, \bfx) + \sigma^2} < \sigma^2~.
\end{IEEEeqnarray*}
Hence,
\begin{IEEEeqnarray*}{+rCl+x*}
\sigma^2 - \tilde k(\bfx, \bfx) = \frac{\sigma^2(k(\bfx, \bfx) + \sigma^2)}{k(\bfx, \bfx) + \sigma^2} - \frac{k(\bfx, \bfx)\sigma^2}{k(\bfx, \bfx) + \sigma^2} = \frac{\sigma^4}{k(\bfx, \bfx) + \sigma^2}~,
\end{IEEEeqnarray*}
which yields
\begin{IEEEeqnarray*}{+rCl+x*}
\frac{\sigma^2}{\sigma^2 - \tilde k(\bfx, \bfx)} = \frac{k(\bfx, \bfx) + \sigma^2}{\sigma^2}~.
\end{IEEEeqnarray*}

\textbf{Step 2: Finding $k(\bfx', \bfx)$.} Now, we compute
\begin{IEEEeqnarray*}{+rCl+x*}
\tilde k(\bfx', \bfx) = k(\bfx', \bfx) - k(\bfx', \bfx) \frac{k(\bfx, \bfx)}{k(\bfx, \bfx) + \sigma^2} = k(\bfx', \bfx) \frac{\sigma^2}{k(\bfx, \bfx) + \sigma^2}~,
\end{IEEEeqnarray*}
which yields
\begin{IEEEeqnarray*}{+rCl+x*}
k(\bfx', \bfx) = \tilde k(\bfx', \bfx) \frac{k(\bfx, \bfx) + \sigma^2}{\sigma^2} = \tilde k(\bfx', \bfx) \frac{\sigma^2}{\sigma^2 - \tilde k(\bfx, \bfx)}~.
\end{IEEEeqnarray*}

\textbf{Step 3: Finding $k(\bfx', \bfx'')$.} Finally, we have
\begin{IEEEeqnarray*}{+rCl+x*}
\tilde k(\bfx', \bfx'') & = & k(\bfx', \bfx'') - \frac{k(\bfx', \bfx) k(\bfx, \bfx'')}{k(\bfx, \bfx) + \sigma^2}~,
\end{IEEEeqnarray*}
which yields
\begin{IEEEeqnarray*}{+rCl+x*}
k(\bfx', \bfx'') & = & \tilde k(\bfx', \bfx'') + k(\bfx', \bfx) \cdot \frac{1}{k(\bfx, \bfx) + \sigma^2} \cdot k(\bfx, \bfx'') \\
& = & \tilde k(\bfx', \bfx'') + \tilde k(\bfx', \bfx) \frac{k(\bfx, \bfx) + \sigma^2}{\sigma^2} \cdot \frac{1}{k(\bfx, \bfx) + \sigma^2} \cdot \tilde k(\bfx, \bfx'')  \frac{\sigma^2}{\sigma^2 - \tilde k(\bfx, \bfx)} \\
& = & \tilde k(\bfx', \bfx'') + \frac{\tilde k(\bfx', \bfx) \tilde k(\bfx, \bfx'')}{\sigma^2 - \tilde k(\bfx, \bfx)}~.
\end{IEEEeqnarray*}

\textbf{Step 4: Feature map.} The specified feature map $\phi$ satisfies
\begin{IEEEeqnarray*}{+rCl+x*}
\phi(\bfx')^\top \phi(\bfx'') & = & \tilde \phi(\bfx')^\top \left(\bfI + \frac{\tilde\phi(\bfx)\tilde\phi(\bfx)^\top}{\sigma^2 - \tilde\phi(\bfx)^\top \tilde\phi(\bfx)}\right) \tilde \phi(\bfx'') \\
& = & \tilde k(\bfx', \bfx'') + \tilde k(\bfx', \bfx) (\sigma^2 - \tilde k(\bfx, \bfx))^{-1} \tilde k(\bfx, \bfx'') \\
& = & k(\bfx', \bfx'')~,
\end{IEEEeqnarray*}
hence it is a feature map for $k$.

\textbf{Step 5: Square root.} According to \Cref{lemma:rank_1_update_sqrt}, we have
\begin{IEEEeqnarray*}{+rCl+x*}
&& \left(\bfI + \frac{\tilde \phi(\bfx) \tilde\phi(\bfx)^\top}{\sigma^2 - \tilde\phi(\bfx)^\top \tilde\phi(\bfx)}\right)^{1/2} \\
& = & \bfI + \frac{1}{(\sigma^2 - \tilde\phi(\bfx)^\top \tilde\phi(\bfx))\left(1 + \sqrt{1 + \tilde\phi(\bfx)^\top \tilde\phi(\bfx)/(\sigma^2 - \tilde\phi(\bfx)^\top \tilde\phi(\bfx))}\right)} \tilde\phi(\bfx)\tilde\phi(\bfx)^\top \\
& = & \bfI + \frac{1}{(\sigma^2 - \tilde\phi(\bfx)^\top \tilde\phi(\bfx))(1 + \sigma(\sigma^2 - \tilde\phi(\bfx)^\top \tilde\phi(\bfx))^{-1/2})} \tilde\phi(\bfx)\tilde\phi(\bfx)^\top \\
& = & \bfI + \frac{1}{\sqrt{\sigma^2 - \tilde\phi(\bfx)^\top \tilde\phi(\bfx)}(\sqrt{\sigma^2 - \tilde\phi(\bfx)^\top \tilde\phi(\bfx)} + \sigma)} \tilde\phi(\bfx)\tilde\phi(\bfx)^\top~. & \qedhere
\end{IEEEeqnarray*}
\end{proof}
\end{proposition}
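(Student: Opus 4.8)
The plan is to prove \Cref{prop:backward_update} as the algebraic inverse of the forward update in \Cref{prop:forward_update}. There are two parts: first the scalar kernel identity expressing $k$ in terms of $\tilde k = k_{\to\post{\{\bfx\},\sigma^2}}$, and second the construction of a feature map for $k$ from one for $\tilde k$. For the first part, the idea is to evaluate the forward formula from \Cref{prop:forward_update}, namely $\tilde k(\bfx',\bfx'') = k(\bfx',\bfx'') - k(\bfx',\bfx)(k(\bfx,\bfx)+\sigma^2)^{-1}k(\bfx,\bfx'')$, at the special arguments involving $\bfx$, and then solve the resulting relations for the original quantities.

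Concretely, I would first set $\bfx' = \bfx'' = \bfx$ to obtain $\tilde k(\bfx,\bfx) = k(\bfx,\bfx)\sigma^2/(k(\bfx,\bfx)+\sigma^2)$. This immediately shows $\tilde k(\bfx,\bfx) < \sigma^2$, which is the key positivity fact, and rearranging gives $\sigma^2/(\sigma^2 - \tilde k(\bfx,\bfx)) = (k(\bfx,\bfx)+\sigma^2)/\sigma^2$. Next, taking only $\bfx'' = \bfx$ yields $\tilde k(\bfx',\bfx) = k(\bfx',\bfx)\sigma^2/(k(\bfx,\bfx)+\sigma^2)$, which I invert using the previous identity to write $k(\bfx',\bfx) = \tilde k(\bfx',\bfx)\,\sigma^2/(\sigma^2 - \tilde k(\bfx,\bfx))$. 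Substituting both of these back into the forward formula for general $\bfx',\bfx''$ collapses the correction term and produces exactly $k(\bfx',\bfx'') = \tilde k(\bfx',\bfx'') + \tilde k(\bfx',\bfx)(\sigma^2 - \tilde k(\bfx,\bfx))^{-1}\tilde k(\bfx,\bfx'')$, the claimed identity.

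For the feature-map part, I would verify directly that the proposed $\phi(\bfx') = (\bfI + \tilde\beta\,\tilde\phi(\bfx)\tilde\phi(\bfx)^\top)\tilde\phi(\bfx')$ reproduces $k$: expanding $\phi(\bfx')^\top\phi(\bfx'')$ against the matrix $\bfI + \tilde\phi(\bfx)\tilde\phi(\bfx)^\top/(\sigma^2 - \tilde\phi(\bfx)^\top\tilde\phi(\bfx))$ and using $\tilde\phi(\bfx')^\top\tilde\phi(\bfx'') = \tilde k(\bfx',\bfx'')$ together with $\tilde\phi(\bfx)^\top\tilde\phi(\bfx) = \tilde k(\bfx,\bfx)$ recovers the kernel identity just proved. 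It then remains to confirm that $\bfI + \tilde\beta\,\tilde\phi(\bfx)\tilde\phi(\bfx)^\top$ is indeed the principal square root of that matrix; this follows from \Cref{lemma:rank_1_update_sqrt} applied with $\bfv = \tilde\phi(\bfx)$ and $c = 1/(\sigma^2 - \tilde\phi(\bfx)^\top\tilde\phi(\bfx))$, after which the stated value of $\tilde\beta$ drops out of the lemma's formula.

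The only genuine subtlety, and the step I would be most careful about, is well-definedness of the square root: the lemma requires $c \geq -1/(\bfv^\top\bfv)$, and here I need $\sigma^2 - \tilde\phi(\bfx)^\top\tilde\phi(\bfx) > 0$ so that $c > 0$ and the scalar square roots are real. This is exactly guaranteed by the inequality $\tilde k(\bfx,\bfx) = \tilde\phi(\bfx)^\top\tilde\phi(\bfx) < \sigma^2$ established in the first step, so the positivity computation is not a side remark but the load-bearing ingredient. The remaining work is routine algebra that mirrors the Sherman--Morrison structure of the forward update, so I do not anticipate obstacles beyond keeping track of the scalar factors.
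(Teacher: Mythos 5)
Your proposal matches the paper's proof essentially step for step: you invert the forward update by specializing it at $\bfx'=\bfx''=\bfx$ and $\bfx''=\bfx$, substitute back to get the kernel identity, verify the feature map reproduces $k$, and obtain the square-root form and $\tilde\beta$ from \Cref{lemma:rank_1_update_sqrt} with $\bfv=\tilde\phi(\bfx)$ and $c=1/(\sigma^2-\tilde\phi(\bfx)^\top\tilde\phi(\bfx))$. Your emphasis on $\tilde k(\bfx,\bfx)<\sigma^2$ as the load-bearing fact for well-definedness is exactly the role it plays in the paper's Step 1, so the argument is correct and there is nothing to add.
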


To formulate the backward update for our tracked quantities $\bfPhi, \bfc, \bfSigma$ and $\bfv$, we define for $\bfx \in \calX$ the scalars
\begin{IEEEeqnarray*}{+rCl+x*}
\tilde\gamma_{\calX}(\bfx) & \equalDef & \sqrt{\sigma^2 - \bfc[\bfx]} \\
\tilde\beta_{\calX}(\bfx) & \equalDef & \frac{1}{\gamma(\gamma + \sigma)}~.
\end{IEEEeqnarray*}
Then, using \Cref{prop:backward_update}, we can compute the backwards update as
\begin{IEEEeqnarray*}{+rCl+x*}
\bfPhi_{\calX \setminus \{\bfx\}}[\bfx'] & = & \bfPhi_{\calX}[\bfx'] + \tilde\beta_{\calX}(\bfx) \bfPhi_{\calX}[\bfx] \langle \bfPhi_{\calX}[\bfx], \bfPhi_{\calX}[\bfx']\rangle \\
\bfc_{\calX \setminus \{\bfx\}}[\bfx'] & = & \bfPhi_{\calX}[\bfx']^\top \left(\bfI + \frac{\bfPhi_{\calX}[\bfx]\bfPhi_{\calX}[\bfx]^\top}{\sigma^2 - \bfPhi_{\calX}[\bfx]^\top \bfPhi_{\calX}[\bfx]}\right) \bfPhi_{\calX}[\bfx'] \\
& = & \bfc_{\calX}[\bfx'] + \tilde\gamma_{\calX}(\bfx)^{-2} \langle \Phi_{\calX}[\bfx], \Phi_{\calX}[\bfx']\rangle^2 \\
\bfSigma_{\calX \setminus \{\bfx\}} & = & (\bfI - \beta_{\calX}(\bfx) \bfPhi_{\calX}[\bfx] \bfPhi_{\calX}[\bfx]^\top) \bfSigma_{\calX} (\bfI - \beta_{\calX}(\bfx) \bfPhi_{\calX}[\bfx] \bfPhi_{\calX}[\bfx]^\top) \\
& = & \bfSigma_{\calX} + \tilde\beta_{\calX}(\bfx) \bfPhi_{\calX}[\bfx] \bfPhi_{\calX}[\bfx]^\top \bfSigma_{\calX} + \tilde\beta_{\calX}(\bfx) \bfSigma_{\calX} \bfPhi_{\calX}[\bfx] \bfPhi_{\calX}[\bfx]^\top \\
&& ~+~ \beta_{\calX}(\bfx)^2 \bfPhi_{\calX}[\bfx] \bfPhi_{\calX}[\bfx]^\top \bfSigma_{\calX} \bfPhi_{\calX}[\bfx] \bfPhi_{\calX}[\bfx]^\top \\
\bfv_{\calX \setminus \{\bfx\}}[\bfx'] & = & \bfPhi_{\calX}[\bfx']^\top (\bfI + \tilde\gamma_{\calX}^{-2}(\bfx) \bfPhi_{\calX}[\bfx] \bfPhi_{\calX}[\bfx]^\top) \bfSigma_{\calX} (\bfI + \tilde\gamma_{\calX}(\bfx)^{-2} \bfPhi_{\calX}[\bfx] \bfPhi_{\calX}[\bfx]^\top) \bfPhi_{\calX}[\bfx'] \\
& = & \bfv_{\calX}[\bfx'] + 2\tilde\gamma_{\calX}(\bfx)^{-2} \bfPhi_{\calX}[\bfx']^\top \bfPhi_{\calX}[\bfx] \bfPhi_{\calX}[\bfx]^\top \bfSigma_{\calX} \bfPhi_{\calX}[\bfx'] \\
&& ~+~ \tilde\gamma_{\calX}(\bfx)^{-4} \bfPhi_{\calX}[\bfx']^\top \bfPhi_{\calX}[\bfx] \bfPhi_{\calX}[\bfx]^\top \bfSigma_{\calX} \bfPhi_{\calX}[\bfx] \bfPhi_{\calX}[\bfx]^\top \bfPhi_{\calX}[\bfx']~.
\end{IEEEeqnarray*}
For the backward step, we want to find $\bfx \in \calX$ \emph{minimizing}
\begin{IEEEeqnarray*}{+rCl+x*}
&& a(\calX \setminus \{\bfx\}) - a(\calX) \\
& \stackrel{\text{\Cref{prop:backward_update}}}{=} & \sum_{\tbfx \in \Xtp} \frac{k_{\to \post{\calX, \sigma^2}}(\tbfx, \bfx)^2}{\sigma^2 - k_{\to \post{\calX, \sigma^2}}(\bfx, \bfx)} \\
& = & \frac{\phi_{\to \post{\calX, \sigma^2}}(\bfx)^\top \phi_{\to \post{\calX, \sigma^2}}(\Xtp)^\top \phi_{\to \post{\calX, \sigma^2}}(\Xtp) \phi_{\to \post{\calX, \sigma^2}}(\bfx)}{\sigma^2 - \phi_{\to \post{\calX, \sigma^2}}(\bfx)^\top \phi_{\to \post{\calX, \sigma^2}}(\bfx)} \\
& = & \frac{\bfv_{\calX}[\bfx]}{\sigma^2 - \bfc_{\calX}[\bfx]}~.
\end{IEEEeqnarray*}

The corresponding implementation of \textsc{Remove} and \textsc{NextBackward} is given in \Cref{alg:bait_fb_features}, completing the implementation of \Bait{}-FB. The runtimes of \textsc{Remove} and \textsc{NextBackward} are equivalent to those of \textsc{Add} and \textsc{Next}, respectively, since the implementation is almost identical. Hence, the runtime complexity of \Bait{}-FB is given by $\BigO(\Ncand(\Nsel+2\Nextra)\dfeat + (\Ntrain+\Npool)\dfeat^2)$. The memory complexity is again $\BigO((\Ncand + \dfeat)\dfeat)$.

\begin{algorithm}[tb]
\caption{Functions \textsc{Remove} and \textsc{NextBackward} that, together with \Cref{alg:bait_f_features} and \Cref{alg:forward_backward_selection}, yield a pseudocode implementation of \Bait{}-FB in feature space for given $\sigma^2 > 0$.} \label{alg:bait_fb_features}
\begin{algorithmic}
\Function{Remove}{$\bfx$}
	\State $\tilde\gamma \assign \sqrt{\sigma^2 - \bfc[\bfx]}$
	\State $\tilde\beta \assign (\gamma(\gamma + \sigma))^{-1}$
	\State $\bfu \assign \bfPhi \bfPhi[\bfx, \cdot]^\top$
	\State $\tilde\bfu \assign \bfu \odot \bfu$

	\State $\bfw \assign \bfSigma \bfPhi[\bfx, \cdot]^\top$
	\State $\tilde v \assign \bfv[\bfx]$
	\State $\bfv \assign \bfv + 2\tilde\gamma^{-2} (\bfPhi \bfw) \odot \bfu + \tilde\gamma^{-4} \tilde v \tilde\bfu$
	
	\State $\bfA \assign \bfw \Phi[\bfx, \cdot]$
	\State $\bfSigma \assign \bfSigma + \tilde\beta (\bfA + \bfA^\top) + \tilde\beta^2 \tilde v \Phi[\bfx, \cdot]^\top \Phi[\bfx, \cdot]$
	
	\State $\bfc \assign \bfc + \tilde\gamma^{-2} \tilde\bfu$
	\State $\bfPhi \assign \bfPhi + \tilde\beta \bfu \bfPhi[\bfx, \cdot]$
\EndFunction\\
\Function{NextBackward}{}
	\State \Return $\argmin_{\bfx \in \Xbatch} \frac{\bfv[\bfx]}{\sigma^2 - \bfc[\bfx]}$
\EndFunction
\end{algorithmic}
\end{algorithm}

\subsection{\FrankWolfe{}} \label{sec:appendix:frankwolfe}

\cite{pinsler_bayesian_2019} proposed to apply the Frank-Wolfe constrained optimization algorithm \citep{frank_algorithm_1956} to the problem of sparsely approximating the empirical kernel mean embedding of $\Xpool$ with non-negative weights. Like \MaxDet{}, the resulting \FrankWolfe{} method allows for a kernel-space and a feature-space implementation. \cite{pinsler_bayesian_2019} used both versions in their experiments and presented the kernel-space version as pseudocode. \Cref{alg:frankwolfe_kernel} is an optimized adaptation of their kernel-space version to our framework. A difference between our version and theirs is that in our version, \textsc{Next} does not allow choosing a previously selected point. Hence, our version prevents the possibility of generating smaller batches by selecting the same point multiple times. Moreover, our version reduces the runtime complexity from $\BigO(\Ncand^2(T_k + \Nsel))$ to $\BigO(\Ncand^2 (T_k + 1))$ by reusing previously computed quantities in \textsc{Add}. The memory complexity of \Cref{alg:frankwolfe_kernel} is $\BigO(\Ncand^2)$, which can be reduced to $\BigO(\Ncand)$ by not storing the kernel matrix $\bfK$, at the cost of having to recompute some kernel values in \textsc{Add}.

\begin{algorithm}[tb]
\caption{\FrankWolfe{} pseudocode implementation in kernel space using \Cref{alg:iterative_selection_detailed}, following \cite{pinsler_bayesian_2019}.} \label{alg:frankwolfe_kernel}
\begin{algorithmic}
\Function{Init}{}
	\State $\bfK \assign (k(\bfx, \tbfx))_{\bfx, \tbfx \in \Xcand}$ \Comment{Corresponds to $\langle \calL_n, \calL_n \rangle$}
	\State $\bfc \assign (\sqrt{\bfK[\bfx, \bfx]})_{\bfx \in \Xcand}$ \Comment{Corresponds to $\sigma_n$}
	\State $r \assign \sum_{\bfx \in \Xcand} \bfc[\bfx]$ \Comment{Corresponds to $\sigma$}
	\State $\bfu \assign (\sum_{\tbfx\in\Xcand}\bfK[\bfx, \tbfx])_{\bfx \in \Xcand}$ \Comment{Corresponds to $\langle \calL, \calL_n\rangle$}
	\State $\bfv \assign (0)_{\bfx \in \Xcand}$ \Comment{Corresponds to $\langle \calL(\bfw), \calL_n \rangle$}
	\State $s \assign 0$ \Comment{Corresponds to $\langle \calL(\bfw), \calL(\bfw)\rangle$}
	\State $t \assign 0$ \Comment{Corresponds to $\langle \calL(\bfw), \calL\rangle$}
\EndFunction\\
\Function{Add}{$\bfx$}
	\State $\gamma \assign \frac{r\bfc[\bfx]^{-1}(\bfu[\bfx] - \bfv[\bfx]) + s-t}{r^2 - 2r\bfc[\bfx]^{-1}\bfv[\bfx] + s}$
	\State $s \assign (1-\gamma)^2 s + 2(1-\gamma)\gamma r\bfc[\bfx]^{-1} \bfv[\bfx] + \gamma^2 r^2$ %
	\State $t \assign (1-\gamma) t + \gamma r\bfc[\bfx]^{-1} \bfu[\bfx]$
	\State $\bfv \assign (1-\gamma)\bfv + \gamma r\bfc[\bfx]^{-1} \bfK[\bfx, \cdot]$
\EndFunction\\
\Function{Next}{}
	\State \Return $\argmax_{\bfx \in \Xpool \setminus \Xbatch} \bfc[\bfx]^{-1} (\bfu[\bfx] - \bfv[\bfx])$
\EndFunction
\end{algorithmic}
\end{algorithm}

The quadratic complexity in $\Ncand$ for the kernel-space version of \FrankWolfe{} shown in \Cref{alg:frankwolfe_kernel} makes it infeasible for large $\Npool$, such as in our experiments. However, \FrankWolfe{} can be realized much more efficiently in moderate-dimensional feature spaces, as shown in \Cref{alg:frankwolfe_features} and implemented in our code and (less efficiently) in the code of \cite{pinsler_bayesian_2019}. In \Cref{alg:frankwolfe_features}, when ignoring the computation of $\bfPhi$, the runtime complexity of \textsc{Init} is $\BigO(\Ncand\dfeat)$, the runtime complexity of \textsc{Add} is $\BigO(\dfeat)$, and the runtime of \textsc{Next} is $\BigO(\Npool \dfeat)$. In total, we obtain a runtime complexity of $\BigO((\Ncand + \Npool\Nbatch + \Nsel)\dfeat) = \BigO((\Ncand + \Npool\Nbatch)\dfeat)$. The memory complexity of \Cref{alg:frankwolfe_features} is $\BigO(\Ncand\dfeat)$.

\begin{algorithm}[tb]
\caption{\FrankWolfe{} pseudocode implementation in feature space using \Cref{alg:iterative_selection_detailed}.} \label{alg:frankwolfe_features}
\begin{algorithmic}
\Function{Init}{}
	\State $\bfPhi \assign \phi(\Xcand) \in \bbR^{\Ncand \times \dfeat}$ \Comment{Corresponds to $\calL_n$}
	\State $\bfc \assign (\|\bfPhi[\bfx, \cdot]\|_2)_{\bfx \in \Xcand}$ \Comment{Corresponds to $\sigma_n$}
	\State $r \assign \sum_{\bfx \in \Xcand} \bfc[\bfx]$ \Comment{Corresponds to $\sigma$}
	\State $\tilde\bfPhi \assign (\bfc[\bfx]^{-1} \bfPhi[\bfx, i])_{\bfx \in \Xcand, i \in \{1, \hdots, \dfeat\}} \in \bbR^{\Ncand \times \dfeat}$ \Comment{Corresponds to $\frac{1}{\sigma_n} \calL_n$}
	\State $\bfu \assign \sum_{\bfx \in \Xcand} \bfPhi[\bfx, \cdot]$ \Comment{Corresponds to $\calL$}
	\State $\bfv \assign \bfzero \in \bbR^{\dfeat}$ \Comment{Corresponds to $\calL(\bfw)$}
\EndFunction\\
\Function{Add}{$\bfx$}
	\State $\gamma \assign \frac{\langle r\tilde\bfPhi[\bfx, \cdot] - \bfv, \bfu - \bfv\rangle}{\langle r\tilde\bfPhi[\bfx, \cdot] - \bfv, r\tilde\bfPhi[\bfx, \cdot] - \bfv\rangle}$
	\State $\bfv \assign (1-\gamma)\bfv + \gamma r \tilde\bfPhi[\bfx, \cdot]$
\EndFunction\\
\Function{Next}{}
	\State \Return $\argmax_{\bfx \in \Xpool \setminus \Xbatch} \langle \tilde\bfPhi[\bfx, \cdot], \bfu - \bfv\rangle$
\EndFunction
\end{algorithmic}
\end{algorithm}

\subsection{\MaxDist{}} \label{sec:appendix:maxdist}

The \MaxDist{} selection method has been proposed various times in the literature under many different names. Up to the selection of the first two points, it is equivalent to the Kennard-Stone algorithm \citep{kennard_computer_1969} proposed for experimental design. \cite{rosenkrantz_analysis_1977} proposed it under the name \emph{farthest insertion} to generate an insertion order for constructing an approximate TSP solution. Later, \cite{gonzalez_clustering_1985} proposed it as an approximation algorithm for a clustering problem. In this context, it is also known as \emph{farthest-point clustering} \citep{bern_approximation_1996} or \emph{k-center greedy} \citep{sener_active_2018}. Moreover, it has been proposed as an initialization method for k-means clustering \citep{katsavounidis_new_1994}. \MaxDist{} is also equivalent to the \emph{geometric greedy} algorithm for kernel interpolation \citep{de_marchi_near-optimal_2005}. When used with $\Nbatch = \Npool$ to construct an ordering of the points, it is known as \emph{farthest-first traversal} or \emph{greedy permutation} of a finite metric space \citep{eppstein_approximate_2020}. %

\Cref{alg:maxdist} shows a pseudocode implementation of \MaxDist{}. Since \textsc{Add} has a runtime of $\BigO(\Npool(T_k + 1))$, the runtime of \Cref{alg:maxdist} is $\BigO(\Npool\Nsel(T_k + 1))$. The memory complexity is $\BigO(\Npool)$.

\begin{algorithm}[tb]
\caption{\MaxDist{} pseudocode implementation using \Cref{alg:iterative_selection_detailed}.} \label{alg:maxdist}
\begin{algorithmic}
\Function{Init}{}
	\State $\bfc \assign (k(\bfx, \bfx))_{\bfx \in \Xcand}$
	\State $\bfd \assign (\infty)_{\bfx \in \Xpool}$ \Comment{Minimum squared distances}
\EndFunction\\
\Function{Add}{$\bfx$}
	\State $\widetilde\bfd \assign (\bfc[\bfx] + \bfc[\tbfx] - 2k(\bfx, \tbfx))_{\tbfx \in \Xpool}$ \Comment{Compute squared kernel distances $d_k(\bfx, \tbfx)^2$}
	\State $\bfd \assign \min(\bfd, \widetilde\bfd)$ \Comment{element-wise minimum}
\EndFunction\\
\Function{Next}{}
	\If{no point has been added yet}
		\State \Return $\argmax_{\bfx \in \Xpool \setminus \Xbatch} \bfc[\bfx]$
	\EndIf
	\State \Return $\argmax_{\bfx \in \Xpool \setminus \Xbatch} \bfd[\bfx]$
\EndFunction
\end{algorithmic}
\end{algorithm}

We will now investigate approximation guarantees for \MaxDist{} with respect to a covering objective, called \emph{minmax radius clustering} or \emph{euclidean k-center problem} \citep{bern_approximation_1996}. Approximation guarantees can also be given for a related objective called \emph{minmax diameter clustering} \citep{gonzalez_clustering_1985, bern_approximation_1996}, which will not be discussed here. The following notation will help to define the minmax radius clustering problem:

\begin{definition}
For a given pseudometric $d$ (i.e., a metric except that $d(\bfx, \bfx') = 0$ is allowed for $\bfx \neq \bfx'$), batch size $\Nbatch \in \bbN$ and batch $\Xbatch \subseteq \Xpool$, we define
\begin{IEEEeqnarray*}{+rCl+x*}
\Delta_d(\Xbatch) & \equalDef & \max_{\bfx \in \Xpool} \min_{\bfx' \in \Xmode \cup \Xbatch} d(\bfx, \bfx')~, \\
\Delta_d^{\Nbatch} & \equalDef & \min_{\Xbatch \subseteq \Xpool, |\Xbatch| = \Nbatch} \Delta_d(\Xbatch)~. & \qedhere
\end{IEEEeqnarray*}
\end{definition}

The minmax radius clustering problem is defined as finding a batch $\Xbatch \subseteq \Xpool$ such that $\Delta_d(\Xbatch)$ is close to $\Delta_d^{\Nbatch}$. The following lemma asserts that \MaxDist{} yields a 2-approximation to this problem, which is in general (close to) the best possible approximation ratio for any polynomial-time algorithm unless P = NP \citep{feder_optimal_1988}.

\begin{lemma} \label{lemma:maxdist_suboptimality}
Let $\Xbatch$ be the batch selected by \MaxDist{} applied to $k$. Then,
\begin{IEEEeqnarray*}{+rCl+x*}
\Delta_{d_k}(\Xbatch) \leq 2\Delta_{d_k}^{\Nbatch}~.
\end{IEEEeqnarray*}

\begin{proof}
For $\Xmode = \emptyset$, this has been proven for example in \cite{bern_approximation_1996}. \cite{sener_active_2018} mentioned the result for general $\Xmode$ but it is unclear where this is proven. Therefore, we give a proof sketch here.

Let $d \equalDef d_k$. Let $D$ be the distance of the last selected point in $\Xbatch$ to the remaining points in $\Xbatch \cup \Xmode$. Then, $\Delta_d(\Xbatch) \leq D$, because otherwise another point with a larger distance would have been chosen instead. At the same time, all points in $\Xbatch$ are at least a distance of $D$ apart from any other point in $\Xbatch \cup \Xmode$, since otherwise the last point would have been chosen already in an earlier step. 

Now, consider a set $\tXbatch \subseteq \Xpool$ with $|\tXbatch| = \Nbatch$ such that $\Delta_d(\tXbatch) = \Delta_d^{\Nbatch}$. To derive a contradiction, assume $\Delta_d^{\Nbatch} < \Delta_d(\Xbatch)/2$. Then, for every $\bfx \in \Xbatch$, there must be $\tilde\bfx \in \Xmode \cup \tXbatch$ such that $d(\bfx, \tilde\bfx) < \Delta_d(\Xbatch)/2$. By our previous considerations, $\tilde\bfx$ cannot be in $\Xmode$, so it must be in $\tXbatch$. Moreover, because points in $\Xbatch$ are at least $D$ apart, no two of them can be closer than $\Delta_d(\Xbatch)/2$ to the same point in $\tXbatch$, hence by the pigeonhole principle every point in $\tXbatch$ must have a point in $\Xbatch$ that is closer to it than $\Delta_d(\Xbatch)/2$. Now, let $\bfx'$ be an arbitrary point in $\Xpool$. Then, there is $\tilde\bfx \in \Xmode \cup \tXbatch$ that is closer than $\Delta_d(\Xbatch)/2$ to $\bfx'$. Moreover, there is $\bfx$ in $\Xmode \cup \Xbatch$ that is closer than $\Delta_d(\Xbatch)/2$ to $\tilde\bfx$. Hence, the triangle inequality yields $d(\bfx', \bfx) < \Delta_d(\Xbatch)$, and since $\bfx'$ was arbitrary, this is a contradiction.
\end{proof}
\end{lemma}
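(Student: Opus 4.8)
The plan is to adapt the textbook 2-approximation argument for farthest-point clustering (equivalently, k-center greedy) to the present setting, where a fixed seed set $\Xmode$ is always part of the covering set. Throughout I would write $d \equalDef d_k$ and use that $d$ is a pseudometric: by \eqref{eq:kernel_distance} it equals the Euclidean distance $\|\phi(\bfx) - \phi(\tbfx)\|_2$ in feature space, so the triangle inequality is available. The central quantity is the distance $D$ at which the final point of $\Xbatch$ was selected, i.e.\ its distance to $\Xmode$ together with all previously chosen points. Since adding a point to the current set can only decrease every quantity $\min_{\bfx' \in \Xmode \cup \Xbatch} d(\bfx, \bfx')$, the greedy selection distances are non-increasing over the $\Nbatch$ steps, so $D$ is the smallest of them.

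From this monotonicity I would extract the two facts the argument needs. First, $\Delta_d(\Xbatch) \leq D$: immediately before the last point was added, the farthest point of $\Xpool$ from the then-current set was at distance $D$, and adding the last point only shrinks the covering radius. Second, every point of $\Xbatch$ is at distance at least $D$ from every other point of $\Xbatch \cup \Xmode$: when a point was selected its greedy distance (hence its distance to everything already present, including $\Xmode$) was $\geq D$, and any later-selected point saw it as already present, so by symmetry is also $\geq D$ away.

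The core is then a contradiction via pigeonhole. Assuming $\Delta_d^{\Nbatch} < \Delta_d(\Xbatch)/2$, pick an optimal $\tXbatch$ with $\Delta_d(\tXbatch) = \Delta_d^{\Nbatch}$. Each $\bfx \in \Xbatch \subseteq \Xpool$ then has some $\tilde\bfx \in \Xmode \cup \tXbatch$ with $d(\bfx, \tilde\bfx) < \Delta_d(\Xbatch)/2$; since points of $\Xbatch$ lie at distance $\geq D \geq \Delta_d(\Xbatch)$ from $\Xmode$, this $\tilde\bfx$ must lie in $\tXbatch$. This defines a map $\Xbatch \to \tXbatch$ that is injective, because two points of $\Xbatch$ sharing an image would be within $\Delta_d(\Xbatch) \leq D$ of each other by the triangle inequality, contradicting the second fact. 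As $|\Xbatch| = |\tXbatch| = \Nbatch$, the map is a bijection. Finally, for arbitrary $\bfx' \in \Xpool$ I would chain: $\bfx'$ is within $\Delta_d(\Xbatch)/2$ of some $\tilde\bfx \in \Xmode \cup \tXbatch$, and $\tilde\bfx$ is within $\Delta_d(\Xbatch)/2$ of some point of $\Xmode \cup \Xbatch$ (the bijection preimage if $\tilde\bfx \in \tXbatch$, or $\tilde\bfx$ itself if $\tilde\bfx \in \Xmode$), so $\Delta_d(\Xbatch) < \Delta_d(\Xbatch)$, a contradiction.

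The main obstacle, and the only place the proof genuinely departs from the seed-free case, is the bookkeeping around $\Xmode$. One must ensure the covering centres of $\Xbatch$ can be taken inside $\tXbatch$ rather than $\Xmode$, so that the pigeonhole runs with exactly $\Nbatch$ against $\Nbatch$; this is precisely where the estimate $D \geq \Delta_d(\Xbatch)$ combined with the $\geq D$ separation from $\Xmode$ is used. One must also allow $\tilde\bfx$ to fall in $\Xmode$ in the final chaining step, handled trivially by taking distance $0$. Care with strict versus non-strict inequalities (the separation is $\geq D$ while the covering bound is $< \Delta_d(\Xbatch)/2$) makes all the triangle-inequality chains come out strict, which is what drives the contradiction.
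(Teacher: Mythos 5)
Your argument is correct and follows essentially the same route as the paper's proof: the same quantity $D$, the same two facts ($\Delta_{d_k}(\Xbatch) \leq D$ and pairwise separation $\geq D$ within $\Xbatch \cup \Xmode$), the same pigeonhole/injectivity step forcing a bijection between $\Xbatch$ and the optimal $\tXbatch$, and the same final triangle-inequality chaining to a contradiction. Your explicit phrasing of the pigeonhole as an injective (hence bijective) map and the explicit handling of the case $\tilde\bfx \in \Xmode$ in the last step are just slightly more detailed renderings of what the paper's proof sketch does.
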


The following simple result will be helpful to prove an approximation guarantee when using sketching:

\begin{lemma} \label{lemma:dist_ineqs}
Let $\Xpool$ be a finite set and let $\alpha > 0$. Moreover, let $d_1, d_2$ be pseudometrics on a set $\calX$ such that $d_1(\bfx, \tilde\bfx) \leq \alpha d_2(\bfx, \tilde\bfx)$ for all $\bfx, \tilde\bfx \in \Xpool$. Then, 
\begin{IEEEeqnarray*}{+rCl+x*}
\forall \Xbatch \subseteq \Xpool: \Delta_{d_1}(\Xbatch) & \leq & \alpha \Delta_{d_2}(\Xbatch)~, \\
\forall \Nbatch \in \{1, \hdots, |\Xpool|\}: \Delta_{d_1}^{\Nbatch} & \leq & \alpha \Delta_{d_2}^{\Nbatch}~.
\end{IEEEeqnarray*}

\begin{proof}
Let $\Xbatch \subseteq \Xpool$ and let $\bfx \in \Xpool$. For the element $\bfx'' \in \Xmode \cup \Xbatch$ minimizing $d_2(\bfx, \bfx'')$, we have
\begin{equation}
    \min_{\bfx' \in \Xmode \cup \Xbatch} d_1(\bfx, \bfx') \leq d_1(\bfx, \bfx'') \leq \alpha d_2(\bfx, \bfx'') = \alpha \min_{\bfx' \in \Xtrain \cup \Xbatch} d_2(\bfx, \bfx')~.
\end{equation}
Applying an analogous argument to $\bfx$ shows that $\Delta_{d_1}(\Xbatch) \leq \alpha \Delta_{d_2}(\Xbatch)$. Another application to $\Xbatch$ then shows that $\Delta_{d_1}^{\Nbatch} \leq \alpha \Delta_{d_2}^{\Nbatch}$.
\end{proof}
\end{lemma}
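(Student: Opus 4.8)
The plan is to derive both inequalities from a single pointwise estimate on nearest-neighbour distances, and then propagate that estimate through the $\max$ and $\min$ appearing in the definitions of $\Delta_d(\Xbatch)$ and $\Delta_d^{\Nbatch}$. The underlying principle is monotonicity: a uniform pointwise bound survives both an outer maximisation and an outer minimisation.

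First I would fix an arbitrary batch $\Xbatch \subseteq \Xpool$ and an arbitrary point $\bfx \in \Xpool$, and compare the two nearest-neighbour distances $\min_{\bfx' \in \Xmode \cup \Xbatch} d_1(\bfx, \bfx')$ and $\min_{\bfx' \in \Xmode \cup \Xbatch} d_2(\bfx, \bfx')$. The key observation is that each minimum is an infimum over a finite set, so I may bound it from above by evaluating the same expression at any fixed competitor. Taking $\bfx''$ to be the point of $\Xmode \cup \Xbatch$ that minimises $d_2(\bfx, \cdot)$ and applying the hypothesis $d_1 \leq \alpha d_2$ gives the chain $\min_{\bfx'} d_1(\bfx, \bfx') \leq d_1(\bfx, \bfx'') \leq \alpha\, d_2(\bfx, \bfx'') = \alpha \min_{\bfx'} d_2(\bfx, \bfx')$. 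This is the only genuine inequality in the whole argument; everything afterwards is bookkeeping.

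Next I would take the maximum over $\bfx \in \Xpool$. Since the pointwise bound holds for every $\bfx$ and the right-hand side is the function $\bfx \mapsto \alpha \min_{\bfx'} d_2(\bfx,\bfx')$ scaled uniformly by $\alpha$, monotonicity of $\max$ yields the first claim, $\Delta_{d_1}(\Xbatch) \leq \alpha\, \Delta_{d_2}(\Xbatch)$, valid for every $\Xbatch$. To obtain the second claim I would feed this per-batch inequality into the definition of the optimal radius: letting $\Xbatch^\ast$ of size $\Nbatch$ attain $\Delta_{d_2}(\Xbatch^\ast) = \Delta_{d_2}^{\Nbatch}$, the batch $\Xbatch^\ast$ is a feasible candidate for the $d_1$-minimisation, so $\Delta_{d_1}^{\Nbatch} \leq \Delta_{d_1}(\Xbatch^\ast) \leq \alpha\, \Delta_{d_2}(\Xbatch^\ast) = \alpha\, \Delta_{d_2}^{\Nbatch}$, where the first step uses that $\Delta_{d_1}^{\Nbatch}$ is a minimum over all admissible batches.

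I do not expect a serious obstacle: the content is a routine quantifier argument, and the only point deserving care is that the hypothesis $d_1 \leq \alpha d_2$ is stated for pairs in $\Xpool$, whereas the competitor $\bfx''$ selected above may lie in $\Xmode$. In the intended application $\Xmode \cup \Xpool$ coincides with the set $\Xtrain \cup \Xpool$ on which the random-projection distortion bound of \Cref{thm:jl} holds, so I would either state the hypothesis on $\Xmode \cup \Xpool$ or simply note that restricting the nearest-neighbour search to this set makes the gap vacuous.
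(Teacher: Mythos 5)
Your proposal is correct and follows essentially the same route as the paper's proof: the single pointwise bound obtained by evaluating the $d_1$-minimum at the $d_2$-minimizer, followed by propagating it through the outer $\max$ over $\bfx$ and then through the $\min$ over batches by plugging in the $d_2$-optimal batch. Your remark that the hypothesis should really cover pairs involving $\Xmode$ is a valid observation about a small imprecision that the paper's own statement shares, and your resolution (stating the hypothesis on $\Xmode \cup \Xpool$, which is what the application to \Cref{thm:jl} provides) is the right one.
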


Finally, we obtain the following approximation guarantee with sketching:

\begin{restatable}{theorem}{thmmaxdistrp}
Suppose that \eqref{eq:eps_isometry} holds. Then, the batch $\Xbatch$ with size $\Nbatch$ computed by \MaxDist{} applied to $k_{\to\rp{p}}$ satisfies
\begin{IEEEeqnarray*}{+rCl+x*}
\Delta_{d_k}(\Xbatch) \leq 2\frac{1+\varepsilon}{1-\varepsilon} \Delta_{d_k}^{\Nbatch}~.
\end{IEEEeqnarray*}
\end{restatable}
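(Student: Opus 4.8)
The plan is to chain together the three results available just above: the $2$-approximation guarantee for \MaxDist{} (\Cref{lemma:maxdist_suboptimality}), the distance-comparison lemma (\Cref{lemma:dist_ineqs}), and the two-sided isometry bound \eqref{eq:eps_isometry}. The central observation is that \MaxDist{} applied to $k_{\to\rp{p}}$ greedily maximizes distances with respect to $d_{k_{\to\rp{p}}}$, not $d_k$, so \Cref{lemma:maxdist_suboptimality} directly yields a guarantee only in the sketched metric, namely $\Delta_{d_{k_{\to\rp{p}}}}(\Xbatch) \leq 2\,\Delta_{d_{k_{\to\rp{p}}}}^{\Nbatch}$. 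The whole proof then consists of transporting both sides of this inequality back to the original metric $d_k$ using \eqref{eq:eps_isometry}, losing a factor $1/(1-\varepsilon)$ on the left and a factor $(1+\varepsilon)$ on the right.

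Concretely, I would first rewrite \eqref{eq:eps_isometry} as the two one-sided bounds $d_k(\bfx,\tilde\bfx) \leq \tfrac{1}{1-\varepsilon} d_{k_{\to\rp{p}}}(\bfx,\tilde\bfx)$ and $d_{k_{\to\rp{p}}}(\bfx,\tilde\bfx) \leq (1+\varepsilon) d_k(\bfx,\tilde\bfx)$, valid for all pairs in $\calX$ (hence in particular in $\Xpool$). Applying \Cref{lemma:dist_ineqs} with $d_1 = d_k$, $d_2 = d_{k_{\to\rp{p}}}$, $\alpha = 1/(1-\varepsilon)$ to the specific batch $\Xbatch$ gives $\Delta_{d_k}(\Xbatch) \leq \tfrac{1}{1-\varepsilon}\,\Delta_{d_{k_{\to\rp{p}}}}(\Xbatch)$. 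Applying \Cref{lemma:dist_ineqs} again, this time with $d_1 = d_{k_{\to\rp{p}}}$, $d_2 = d_k$, $\alpha = 1+\varepsilon$, to the optimal-value quantities gives $\Delta_{d_{k_{\to\rp{p}}}}^{\Nbatch} \leq (1+\varepsilon)\,\Delta_{d_k}^{\Nbatch}$.

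Finally I would concatenate the four inequalities in order:
\begin{IEEEeqnarray*}{+rCl+x*}
\Delta_{d_k}(\Xbatch) \;\leq\; \frac{1}{1-\varepsilon}\Delta_{d_{k_{\to\rp{p}}}}(\Xbatch) \;\leq\; \frac{2}{1-\varepsilon}\Delta_{d_{k_{\to\rp{p}}}}^{\Nbatch} \;\leq\; 2\frac{1+\varepsilon}{1-\varepsilon}\Delta_{d_k}^{\Nbatch}~,
\end{IEEEeqnarray*}
which is exactly the claimed bound. The argument is essentially bookkeeping, so there is no serious mathematical obstacle; the only point requiring care is getting the direction of each inequality right, i.e.\ matching the $1/(1-\varepsilon)$ factor to the realized batch value $\Delta(\Xbatch)$ and the $(1+\varepsilon)$ factor to the optimal value $\Delta^{\Nbatch}$, since swapping them would produce a vacuous or incorrect constant. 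One should also note explicitly that \eqref{eq:eps_isometry} is assumed to hold as a hypothesis here (it holds with high probability by \Cref{thm:jl} once $p$ satisfies \eqref{eq:min_dim}), so the conclusion is deterministic given that event.
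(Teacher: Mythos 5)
Your proposal is correct and follows exactly the same route as the paper's proof: the paper chains $\Delta_{d_k}(\Xbatch) \leq \tfrac{1}{1-\varepsilon}\Delta_{d_{k_{\to\rp{p}}}}(\Xbatch) \leq \tfrac{2}{1-\varepsilon}\Delta_{d_{k_{\to\rp{p}}}}^{\Nbatch} \leq 2\tfrac{1+\varepsilon}{1-\varepsilon}\Delta_{d_k}^{\Nbatch}$ using \Cref{lemma:dist_ineqs}, \Cref{lemma:maxdist_suboptimality}, and \Cref{lemma:dist_ineqs} again, precisely as you describe. Your attention to matching the $1/(1-\varepsilon)$ factor to the realized batch and the $(1+\varepsilon)$ factor to the optimum is exactly the right point of care.
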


\begin{proof}
Let $d_1 \equalDef d_k$ and $d_2 \equalDef d_{k_{\to\rp{p}}}$. Then, 
\begin{IEEEeqnarray*}{+rCl+x*}
    \Delta_{d_1}(\Xbatch) & \stackrel{\text{\Cref{lemma:dist_ineqs}}}{\leq} & \frac{1}{1-\varepsilon} \Delta_{d_2}(\Xbatch) \stackrel{\text{\Cref{lemma:maxdist_suboptimality}}}{\leq} 2\frac{1}{1-\varepsilon} \Delta_{d_2}^{\Nbatch} \\
    & \stackrel{\text{\Cref{lemma:dist_ineqs}}}{\leq} & 2\frac{1+\varepsilon}{1-\varepsilon} \Delta_{d_1}^{\Nbatch}~. & \qedhere
\end{IEEEeqnarray*}
\end{proof}

\subsection{\KMeansPP{}} \label{sec:appendix:kmeanspp}

\Cref{alg:kmeanspp} shows pseudocode for \KMeansPP{}. Like for \MaxDist{}, we obtain a runtime complexity of $\BigO(\Npool\Nsel(T_k + 1))$ and a memory complexity of $\BigO(\Npool)$.

\begin{algorithm}[tb]
\caption{\KMeansPP{} pseudocode implementation using \Cref{alg:iterative_selection_detailed}.} \label{alg:kmeanspp}
\begin{algorithmic}
\Function{Init}{}
	\State $\bfc \assign (k(\bfx, \bfx))_{\bfx \in \Xcand}$
	\State $\bfd \assign (\infty)_{\bfx \in \Xpool}$ \Comment{Minimum squared distances}
\EndFunction\\
\Function{Add}{$\bfx$}
	\State $\widetilde\bfd \assign (\bfc[\bfx] + \bfc[\tbfx] - 2k(\bfx, \tbfx))_{\tbfx \in \Xpool}$ \Comment{Compute squared kernel distances $d_k(\bfx, \tbfx)^2$}
	\State $\bfd \assign \min(\bfd, \widetilde\bfd)$ \Comment{element-wise minimum}
\EndFunction\\
\Function{Next}{}
	\If{no point has been added yet}
		\State \Return uniform random sample from $\Xpool \setminus \Xbatch$
	\EndIf
	\State \Return sample $\bfx \in \Xpool \setminus \Xbatch$ with probability proportional to $\bfd[\bfx]$
\EndFunction
\end{algorithmic}
\end{algorithm}

\subsection{\LCMD{}} \label{sec:appendix:lcmd}

A pseudocode implementation of our newly proposed \LCMD{} method is shown in \Cref{alg:lcmd}. Here, the \textsc{Add} method has a runtime complexity of $\BigO(\Npool(T_k + 1))$ and the \textsc{Next} method has a runtime complexity of $\BigO(\Npool + \Nsel)$. The overall runtime complexity is therefore $\BigO(\Npool\Nsel(T_k + 1) + \Nbatch(\Npool + \Nsel)) = \BigO(\Npool\Nsel(T_k + 1))$. The memory complexity of \Cref{alg:lcmd} is $\BigO(\Ncand)$.

\begin{algorithm}[tb]
\caption{\LCMD{} pseudocode implementation using \Cref{alg:iterative_selection_detailed}.} \label{alg:lcmd}
\begin{algorithmic}
\Function{Init}{}
	\State $\bfc \assign (k(\bfx, \bfx))_{\bfx \in \Xcand}$
	\State $\bfd \assign (\infty)_{\bfx \in \Xpool}$ \Comment{Minimum squared distances}
	\State $\bfv \assign (0)_{\bfx \in \Xpool}$ \Comment{Associated cluster centers; dummy initialization}
\EndFunction\\
\Function{Add}{$\bfx$}
	\State $\widetilde\bfd \assign (\bfc[\bfx] + \bfc[\tbfx] - 2k(\bfx, \tbfx))_{\tbfx \in \Xpool}$ \Comment{Compute squared kernel distances $d_k(\bfx, \tbfx)^2$}
	\State $\bfv \assign (\bfv[\tbfx]$ if $\bfd[\tbfx] \leq \widetilde\bfd[\tbfx]$ else $\bfx)_{\tbfx \in \Xpool}$ \Comment{Update associated cluster centers}
	\State $\bfd \assign \min(\bfd, \widetilde\bfd)$ \Comment{element-wise minimum}
\EndFunction\\
\Function{Next}{}
	\If{no point has been added yet}
		\State \Return $\argmax_{\bfx \in \Xpool \setminus \Xbatch} \bfc[\bfx]$
	\EndIf
	\State $\bfs \assign (\sum_{\tbfx \in \Xpool \setminus \Xbatch: \bfv[\tbfx] = \bfx} \bfd[\tbfx])_{\bfx \in \Xmode \cup \Xbatch}$ \Comment{Compute cluster sizes}
	\State $s_{\mathrm{max}} = \max_{\bfx \in \Xmode \cup \Xbatch} \bfs[\bfx]$ \Comment{Maximum cluster size}
	\State \Return $\argmax_{\bfx \in \Xpool \setminus \Xbatch: \bfs[\bfx] = s_{\mathrm{max}}} \bfd[\bfx]$
\EndFunction
\end{algorithmic}
\end{algorithm}

\section{Details on Experiments} \label{sec:appendix:experiments}

In the following, we provide a more detailed description of our experimental setup and our results. All NN computations were performed with 32-bit floating-point precision, but we switched to 64-bit floating-point precision whenever posterior transformations (computations involving $\sigma^2$) were involved. All \MaxDet{} computations used the kernel-space implementation and all \FrankWolfe{} computations used the feature-space implementation. All experiments were run on a workstation with four NVIDIA RTX 3090 GPUs and an AMD Ryzen Threadripper PRO 3975WX CPU with 256 GB RAM.

\subsection{Data Sets} \label{sec:appendix:data_sets}

We selected 15 tabular regression data sets from different sources, roughly using the following criteria:
\begin{enumerate}[(a)]
\item The data set should be sufficiently large after removing rows with missing values (at least 40000 samples).
\item The data set should be in a format suitable to perform regression, e.g., not consist of a few long time series.
\item The data set should not have too many categorical or text columns with many categories.
\item The test RMSE for randomly sampled training sets should drop substantially when going from $\ntrain=256$ to $\ntrain = 17\cdot 256$. In our case, the selected 15 data sets differ from the other tested data sets in that the RMSE dropped at least by 14\%. %
With this criterion, we want to exclude data sets that would not significantly affect the benchmark results, e.g., because they are too easy to learn or because they are too noisy.
\end{enumerate}

An overview of the selected data sets can be found in \Cref{table:data_set_characteristics} and \Cref{table:data_sets_source}. Our main data sources are the UCI and OpenML repositories \citep{dua_uci_2017, vanschoren_openml_2013}. The sgemm and ct\_slices data sets have also been used by \cite{tsymbalov_dropout-based_2018}. In contrast to \cite{tsymbalov_dropout-based_2018}, we use the undirected and not the directed version of the kegg data set since it contains more samples and the RMSE drops more strongly between $\ntrain=256$ and $\ntrain = 17\cdot 256$. Concerning the other four data sets used in \cite{tsymbalov_dropout-based_2018}, we omitted the BlogFeedback, YearPredictionMSD, and Online News Popularity data sets due to criterion (d), and could not find the Rosenbrock 2000D data set online. Although the poker data set is originally a multi-class classification data set, we include it since it is noise-free and sufficiently difficult to learn. %

Our accompanying code allows automatically downloading and processing all data sets.

\subsection{Preprocessing} We preprocess the data sets in the following way: On some data sets, we remove unwanted columns such as identifier columns, see the details in our code. We then remove rows with NaN (missing) values. Next, if necessary, we randomly subsample the data set such that it contains at most 500000 samples. We use 80\% of the data set, but at most 200000 samples, for training, validation, and pool data. Of those, we initially use $\ntrain = 256$ and $\nvalid = 1024$ and reserve the rest for the pool set. Subsequently, we remove columns with only a single value. We one-hot encode small categorical columns, allowing at most 300 new continuous columns, and discard larger categorical columns. On some data sets, we transform the labels $y$, for example by applying a logarithm or by taking the median of multiple target values, we refer to our code for further details. We standardize the labels $y$ such that they have mean $0$ and variance $1$.\footnote{This only leaks a negligible amount of information from the test set and in turn allows us to better compare the errors across data sets.} We preprocess the inputs $\bfx \in \bbR^d$ as
\begin{IEEEeqnarray*}{+rCl+x*}
x_j^{\mathrm{processed}} & \equalDef & 5\tanh\left(\frac{1}{5} \cdot \frac{x_j - \hat\mu_j}{\hat\sigma_j}\right)~,
\end{IEEEeqnarray*}
where we compute
\begin{IEEEeqnarray*}{+rCl+x*}
\hat\mu_j & \equalDef & \frac{1}{\Ntrain + \Npool}\sum_{\bfx \in \Xtrain \cup \Xpool} x_j \\
\hat\sigma_j^2 & \equalDef & \frac{1}{\Ntrain + \Npool}\sum_{\bfx \in \Xtrain \cup \Xpool} (x_j - \hat\mu_j)^2~.
\end{IEEEeqnarray*}
The motivation for the $\tanh$ function is to reduce the impact of outliers by soft-clipping the coordinates to the interval $(-5, 5)$.

For the sarcos data set, we only used the training data since the test data on the GPML web page (see \Cref{table:data_sets_source}) is already contained in the training data. For the poker data set, we only used the test data since it contains roughly a million samples, while the training set contains around 25000 samples.

The resulting characteristics of the processed data sets can be found in \Cref{table:data_set_characteristics}. The full names, links, and citations are contained in \Cref{table:data_sets_source}.

\begin{table}[tb]
\centering
\begin{tabular}{ccccc}
Short name & Initial pool set size & Test set size & Number of features \\
\hline
sgemm & 192000 & 48320 & 14 \\
wec\_sydney & 56320 & 14400 & 48 \\
ct\_slices & 41520 & 10700 & 379 \\
kegg\_undir & 50407 & 12921 & 27 \\
online\_video & 53748 & 13756 & 26 \\
query & 158720 & 40000 & 4 \\
poker & 198720 & 300000 & 95 \\
road & 198720 & 234874 & 2 \\
mlr\_knn\_rng & 88123 & 22350 & 132 \\
fried & 31335 & 8153 & 10 \\
diamonds & 41872 & 10788 & 29 \\
methane & 198720 & 300000 & 33 \\
stock & 45960 & 11809 & 9 \\
protein & 35304 & 9146 & 9 \\
sarcos & 34308 & 8896 & 21
\end{tabular}
\caption{Data set characteristics.} \label{table:data_set_characteristics}
\end{table}

\begin{table}[tb]
\centering
\renewcommand{\arraystretch}{1.3}
\small
\begin{tabular}{cccC{0.25\textwidth}C{0.24\textwidth}}
Short name & Source & OpenML ID & Full name & Citation \\
\hline
sgemm & \href{https://archive.ics.uci.edu/ml/datasets/SGEMM+GPU+kernel+performance}{UCI} & & SGEMM GPU kernel performance & \cite{ballester-ripoll_sobol_2019} \\
wec\_sydney & \href{https://archive.ics.uci.edu/ml/datasets/Wave+Energy+Converters}{UCI} & & Wave Energy Converters & \cite{neshat_detailed_2018}\\
ct\_slices & \href{https://archive.ics.uci.edu/ml/datasets/Relative+location+of+CT+slices+on+axial+axis}{UCI} & & Relative location of CT slices on axial axis & \cite{graf_2d_2011} \\
kegg\_undir & \href{https://archive.ics.uci.edu/ml/datasets/KEGG+Metabolic+Reaction+Network+\%28Undirected\%29}{UCI} & & KEGG Metabolic Reaction Network (Undirected) & \cite{shannon_cytoscape_2003}\\
online\_video & \href{https://archive.ics.uci.edu/ml/datasets/Online+Video+Characteristics+and+Transcoding+Time+Dataset}UCI & & Online Video Characteristics and Transcoding Time & \cite{deneke_video_2014} \\
query & \href{https://archive.ics.uci.edu/ml/datasets/Query+Analytics+Workloads+Dataset}{UCI} & & Query Analytics Workloads & \cite{anagnostopoulos_scalable_2018}, \cite{savva_explaining_2018} \\
poker & \href{https://archive.ics.uci.edu/ml/datasets/Poker+Hand}{UCI} & & Poker Hand & --- \\
road & \href{https://archive.ics.uci.edu/ml/datasets/3D+Road+Network+\%28North+Jutland\%2C+Denmark\%29}{UCI} & & 3D Road Network (North Jutland, Denmark) & \cite{kaul_building_2013} \\
mlr\_knn\_rng & \href{https://www.openml.org/d/42454}{OpenML} & 42454 & mlr\_knn\_rng & --- \\
fried & \href{https://www.openml.org/d/564}{OpenML} & 564 & fried & \cite{friedman_multivariate_1991} \\
diamonds & \href{https://www.openml.org/d/42225}{OpenML} & 42225 & diamonds & --- \\
methane & \href{https://www.openml.org/d/42701}{OpenML} & 42701 & Methane & \cite{slezak_framework_2018} \\
stock & \href{https://www.openml.org/d/1200}{OpenML} & 1200 & BNG(stock) & --- \\
protein & \href{https://www.openml.org/d/42903}{OpenML} & 42903 & physicochemical-protein & --- \\
sarcos & \href{http://www.gaussianprocess.org/gpml/data/}{GPML} & & SARCOS data & \cite{vijayakumar_locally_2000}
\end{tabular}
\caption{Overview of used data sets. The second column entries are hyperlinks to the respective web pages.} \label{table:data_sets_source}
\end{table}

\subsection{Neural Network Configuration} \label{sec:appendix:nn_config}

We use a fully-connected NN with two hidden layers with 512 neurons each ($L=3$, $d_1 = d_2 = 512$). We employ the neural tangent parametrization as discussed in \Cref{sec:reg_fcnn} with the ReLU activation function. We initialize biases to zero and weights i.i.d.\ from $\calN(0, 1)$. 
For optimization, we use the Adam \citep{kingma_adam_2015} optimizer with its default parameters $\beta_1 = 0.9, \beta_2 = 0.999$, and let the learning rate (see below) decay linearly to zero over training. We use a mini-batch size of $256$ and train for $256$ epochs. 
After each epoch, we measure the validation RMSE on a validation set with 1024 samples. After training, we set the trained model parameters $\bftheta_T$ to the parameters from the end of the epoch where the lowest validation RMSE was attained. While the use of a large validation set might not be realistic for many data-scarce BMAL scenarios, we see this as a simple proxy for more complicated cross-validation or refitting strategies. 

While the reasoning of forward variance preservation as in the well-known Kaiming initialization \citep{he_delving_2015} suggests to set $\sigma_w = \sqrt{2}$, we find that smaller values of $\sigma_w$ can substantially improve the RMSE of the trained models. Possible explanations for this phenomenon might be that large $\sigma_w$ increases the scale of the disturbance by the random initial function of the network \citep{nonnenmacher_which_2021} or brings the NN more towards a \quot{lazy training} regime \citep{chizat_lazy_2019}. Therefore, we manually tuned $\sigma_w, \sigma_b$ and the initial learning rate to optimize the mean log RMSE across all data sets for \Random{} selection. We arrived at $\sigma_w = \sigma_b = 0.2$ and an initial learning rate of $0.375$. 

To assess whether our insights apply to other NN configurations, we also run experiments for a fully-connected NN with the SiLU (a.k.a.\ Swish) activation function \citep{elfwing_sigmoid-weighted_2018}. Again, we use optimized hyperparameters for SiLU, specifically an initial learning rate of $0.15$ as well as $\sigma_w = 0.5, \sigma_b = 1.0$. 

\subsection{Results} \label{sec:appendix:results}

\Cref{table:all_algs} shows averaged logarithmic error metrics and runtimes for a wide variety of configurations. Some conclusions from these results are discussed in \Cref{sec:bmdal:experiments}. \Cref{table:all_algs_silu} shows analogous results for our NN configuration with the SiLU instead of the ReLU activation function. Note that the results for $k_{\mathrm{nngp}}$ still use the NNGP for ReLU, though, since we do not know of an analytic expression of the NNGP for SiLU. Results on individual data sets for selected methods are shown in \Cref{table:table_data_sets} and \Cref{table:table_data_sets_lasterror}.

In this section, we also provide more plots complementing the figures from the main part of the paper. \Cref{fig:batch_sizes_individual_rmse} shows batch size plots on individual data sets for RMSE and \Cref{fig:learning_curves_individual_q99} shows learning curve plots on individual data sets for the 99\% quantile. Moreover, \Cref{fig:correlation_plot_rmse} and \Cref{fig:correlation_plot_maxe} allow comparing two methods across data sets on RMSE and MAXE, respectively.

The estimated standard deviations of the mean estimators in \Cref{fig:existing_algs}, \Cref{fig:learning_curves}, \Cref{fig:learning_curves_individual_rmse}, \Cref{fig:batch_sizes}, \Cref{fig:batch_sizes_individual_rmse} and \Cref{fig:learning_curves_individual_q99} are computed as follows: Consider random variables $X_{ij}$ representing the log metric values on repetition $i$ and data set $j$. Then, it is well-known that for the mean estimator $\hat\mu_j \equalDef \frac{1}{20} \sum_{i=1}^{20} X_{ij}$, an unbiased estimator of its variance is given by
\begin{IEEEeqnarray*}{+rCl+x*}
\hat \sigma^2_j \equalDef \frac{1}{20-1} \sum_{i=1}^{20} (X_{ij} - \hat \mu_j)^2
\end{IEEEeqnarray*} 
Since all mean estimators $\hat\mu_j$ are independent, the variance of the total mean estimator $\hat\mu \equalDef \frac{1}{15} \sum_{j=1}^{15} \hat\mu_j$ can be estimated as
\begin{IEEEeqnarray*}{+rCl+x*}
\hat \sigma^2 \equalDef \frac{1}{15^2} \sum_{j=1}^{15} \hat\sigma^2_j~.
\end{IEEEeqnarray*}
Our plots hence show $\hat\sigma$ as the estimated standard deviation of the mean estimator $\hat\mu$.

\begin{table}[p]
\centering
\ssmall
\setlength{\tabcolsep}{2pt}
\begin{tabular}{ccccccccc}
Data set & \textsc{Random} & \textsc{MaxDiag} & \textsc{MaxDet}-P & \textsc{Bait-F}-P & \textsc{FrankWolfe}-P & \textsc{MaxDist}-P & \textsc{KMeansPP}-P & \textsc{LCMD}-TP (ours)\\
\hline
ct\_slices & 0.141 & 0.123 & 0.085 & 0.076 & 0.088 & 0.085 & 0.081 & \textbf{0.072} \\
diamonds & 0.173 & 0.169 & 0.166 & \textbf{0.161} & 0.162 & 0.166 & 0.162 & \textbf{0.161} \\
fried & 0.230 & 0.231 & 0.228 & \textbf{0.227} & 0.228 & 0.229 & 0.229 & 0.228 \\
kegg\_undir & 0.380 & 0.346 & 0.245 & 0.222 & 0.248 & 0.243 & 0.227 & \textbf{0.220} \\
methane & 0.733 & 0.770 & 0.736 & 0.714 & 0.714 & 0.740 & 0.713 & \textbf{0.708} \\
mlr\_knn\_rng & 0.294 & 0.326 & 0.211 & 0.183 & 0.199 & 0.209 & 0.190 & \textbf{0.176} \\
online\_video & 0.263 & 0.190 & 0.159 & \textbf{0.149} & 0.159 & 0.158 & 0.153 & 0.152 \\
poker & 0.806 & 0.803 & \textbf{0.742} & 0.751 & 0.797 & 0.754 & 0.794 & 0.797 \\
protein & 0.763 & 0.793 & 0.782 & \textbf{0.757} & 0.761 & 0.782 & \textbf{0.757} & 0.759 \\
query & 0.058 & 0.082 & 0.066 & 0.057 & 0.060 & 0.066 & 0.057 & \textbf{0.053} \\
road & \textbf{0.586} & 0.702 & 0.625 & 0.592 & 0.606 & 0.624 & 0.598 & 0.591 \\
sarcos & 0.181 & 0.190 & 0.176 & 0.164 & 0.168 & 0.176 & 0.167 & \textbf{0.163} \\
sgemm & 0.152 & 0.185 & 0.155 & 0.142 & 0.144 & 0.153 & 0.144 & \textbf{0.140} \\
stock & 0.531 & 0.541 & 0.540 & 0.529 & 0.527 & 0.540 & \textbf{0.526} & 0.528 \\
wec\_sydney & 0.027 & 0.034 & 0.030 & \textbf{0.025} & 0.027 & 0.030 & 0.026 & 0.028
\end{tabular}
\caption{This table shows the averaged (non-logarithmic) RMSEs per data set, averaged over all repetitions, and BMAL steps, for each of the selection methods with kernels as in \Cref{table:selected_kernels}.} \label{table:table_data_sets}
\end{table}

\begin{table}[p]
\centering
\ssmall
\setlength{\tabcolsep}{2pt}
\begin{tabular}{ccccccccc}
Data set & \textsc{Random} & \textsc{MaxDiag} & \textsc{MaxDet}-P & \textsc{Bait-F}-P & \textsc{FrankWolfe}-P & \textsc{MaxDist}-P & \textsc{KMeansPP}-P & \textsc{LCMD}-TP (ours)\\
\hline
ct\_slices & 0.093 & 0.069 & 0.053 & 0.044 & 0.050 & 0.052 & 0.046 & \textbf{0.038} \\
diamonds & 0.166 & 0.155 & 0.153 & \textbf{0.152} & \textbf{0.152} & 0.153 & 0.153 & \textbf{0.152} \\
fried & 0.221 & 0.219 & \textbf{0.218} & 0.219 & 0.219 & 0.219 & 0.220 & 0.219 \\
kegg\_undir & 0.291 & 0.225 & 0.180 & 0.165 & 0.173 & 0.179 & 0.166 & \textbf{0.154} \\
methane & 0.692 & 0.731 & 0.704 & 0.675 & 0.670 & 0.709 & 0.669 & \textbf{0.661} \\
mlr\_knn\_rng & 0.208 & 0.168 & 0.126 & 0.112 & 0.119 & 0.131 & 0.108 & \textbf{0.105} \\
online\_video & 0.206 & 0.137 & 0.116 & 0.106 & 0.111 & 0.115 & 0.109 & \textbf{0.105} \\
poker & 0.600 & 0.598 & \textbf{0.533} & 0.537 & 0.578 & 0.547 & 0.558 & 0.589 \\
protein & 0.727 & 0.758 & 0.751 & 0.721 & 0.729 & 0.750 & \textbf{0.717} & 0.721 \\
query & 0.040 & 0.061 & 0.054 & 0.042 & 0.046 & 0.052 & 0.040 & \textbf{0.037} \\
road & \textbf{0.538} & 0.671 & 0.582 & 0.544 & 0.571 & 0.570 & 0.549 & 0.551 \\
sarcos & 0.155 & 0.162 & 0.155 & 0.140 & 0.143 & 0.154 & 0.142 & \textbf{0.138} \\
sgemm & 0.100 & 0.126 & 0.107 & 0.098 & 0.097 & 0.106 & 0.096 & \textbf{0.092} \\
stock & 0.511 & 0.517 & 0.518 & 0.508 & \textbf{0.506} & 0.519 & \textbf{0.506} & \textbf{0.506} \\
wec\_sydney & 0.021 & 0.023 & 0.021 & \textbf{0.020} & 0.021 & 0.021 & \textbf{0.020} & \textbf{0.020}
\end{tabular}
\caption{This table shows the (non-logarithmic) RMSEs after the last BMAL step per data set, averaged over all repetitions, for each of the selection methods with kernels as in \Cref{table:selected_kernels}.} \label{table:table_data_sets_lasterror}
\end{table}

\begin{table}[tbp]
\centering
\fontsize{4}{5}\selectfont
\begin{tabular}{cccccccc}
Selection method & Kernel & MAE & RMSE & 95\% & 99\% & MAXE & avg.\ time [$s$]\\
\hline
\textsc{Random} & --- & -1.934 & -1.401 & -0.766 & -0.163 & 1.107 & 0.001 \\
\hline
\textsc{MaxDiag} & $k_{\mathrm{grad} \to \operatorname{sketch}(512) \to \operatorname{acs-rf}(512)}$ & -1.777 & -1.370 & -0.690 & -0.189 & 0.978 & 0.650\\
\textsc{MaxDiag} & $k_{\mathrm{grad} \to \operatorname{sketch}(512) \to \mathcal{X}_{\operatorname{train}}}$ & -1.777 & -1.369 & -0.690 & -0.186 & 0.986 & 0.551\\
\textsc{MaxDiag} & $k_{\mathrm{grad} \to \operatorname{ens}(3) \to \operatorname{sketch}(512) \to \mathcal{X}_{\operatorname{train}}}$ & -1.768 & -1.366 & -0.685 & -0.188 & 0.970 & 1.392\\
\textsc{MaxDiag} & $k_{\mathrm{grad} \to \mathcal{X}_{\operatorname{train}}}$ & -1.766 & -1.355 & -0.675 & -0.168 & 0.996 & 4.108\\
\textsc{MaxDiag} & $k_{\mathrm{grad} \to \operatorname{sketch}(512) \to \operatorname{acs-grad}}$ & -1.751 & -1.345 & -0.664 & -0.163 & 1.007 & 0.553\\
\textsc{MaxDiag} & $k_{\mathrm{grad} \to \operatorname{sketch}(512) \to \operatorname{acs-rf-hyper}(512)}$ & -1.743 & -1.334 & -0.656 & -0.148 & 1.021 & 0.650\\
\textsc{MaxDiag} & $k_{\mathrm{ll} \to \operatorname{acs-rf}(512)}$ & -1.713 & -1.286 & -0.606 & -0.084 & 1.052 & 0.030\\
\textsc{MaxDiag} & $k_{\mathrm{ll} \to \mathcal{X}_{\operatorname{train}}}$ & -1.722 & -1.285 & -0.614 & -0.077 & 1.080 & 0.142\\
\textsc{MaxDiag} & $k_{\mathrm{nngp} \to \mathcal{X}_{\operatorname{train}}}$ & -1.754 & -1.272 & -0.606 & -0.044 & 1.088 & 2.592\\
\textsc{MaxDiag} & $k_{\mathrm{lin} \to \mathcal{X}_{\operatorname{train}}}$ & -1.585 & -1.103 & -0.426 & 0.143 & 1.222 & 0.007 \\
\hline
\textsc{MaxDet}-TP & $k_{\mathrm{grad} \to \operatorname{scale}(\mathcal{X}_{\operatorname{train}})}$ & -1.930 & -1.522 & -0.855 & -0.356 & 0.856 & 8.883\\
\textsc{MaxDet}-P & $k_{\mathrm{grad} \to \operatorname{sketch}(512) \to \mathcal{X}_{\operatorname{train}}}$ & -1.915 & -1.512 & -0.844 & -0.350 & 0.867 & 0.770\\
\textsc{MaxDet}-P & $k_{\mathrm{ll} \to \operatorname{ens}(3) \to \operatorname{sketch}(512) \to \mathcal{X}_{\operatorname{train}}}$ & -1.931 & -1.504 & -0.849 & -0.337 & 0.873 & 0.673\\
\textsc{MaxDet}-P & $k_{\mathrm{grad} \to \operatorname{ens}(3) \to \operatorname{sketch}(512) \to \mathcal{X}_{\operatorname{train}}}$ & -1.895 & -1.500 & -0.830 & -0.344 & 0.868 & 1.608\\
\textsc{MaxDet}-P & $k_{\mathrm{grad} \to \operatorname{sketch}(512) \to \operatorname{acs-rf}(512)}$ & -1.893 & -1.492 & -0.820 & -0.323 & 0.886 & 0.869\\
\textsc{MaxDet}-P & $k_{\mathrm{grad} \to \operatorname{sketch}(512) \to \operatorname{acs-grad}}$ & -1.872 & -1.475 & -0.802 & -0.311 & 0.878 & 0.893\\
\textsc{MaxDet}-P & $k_{\mathrm{grad} \to \operatorname{sketch}(512) \to \operatorname{acs-rf-hyper}(512)}$ & -1.872 & -1.475 & -0.803 & -0.310 & 0.888 & 0.872\\
\textsc{MaxDet}-P & $k_{\mathrm{ll} \to \mathcal{X}_{\operatorname{train}}}$ & -1.895 & -1.463 & -0.808 & -0.288 & 0.916 & 0.370\\
\textsc{MaxDet}-P & $k_{\mathrm{ll} \to \operatorname{acs-rf}(512)}$ & -1.876 & -1.443 & -0.792 & -0.264 & 0.961 & 0.518\\
\textsc{MaxDet}-TP & $k_{\mathrm{nngp} \to \operatorname{scale}(\mathcal{X}_{\operatorname{train}})}$ & -1.848 & -1.358 & -0.702 & -0.133 & 1.028 & 5.449\\
\textsc{MaxDet}-P & $k_{\mathrm{lin} \to \mathcal{X}_{\operatorname{train}}}$ & -1.682 & -1.187 & -0.524 & 0.055 & 1.191 & 0.170 \\
\hline
\textsc{Bait-F}-P & $k_{\mathrm{grad} \to \operatorname{ens}(3) \to \operatorname{sketch}(512) \to \mathcal{X}_{\operatorname{train}}}$ & -2.011 & -1.587 & -0.927 & \textbf{-0.419} & 0.859 & 2.346\\
\textsc{Bait-F}-P & $k_{\mathrm{grad} \to \operatorname{sketch}(512) \to \mathcal{X}_{\operatorname{train}}}$ & -2.013 & -1.585 & -0.926 & -0.412 & 0.862 & 1.508\\
\textsc{Bait-FB}-P & $k_{\mathrm{grad} \to \operatorname{sketch}(512) \to \mathcal{X}_{\operatorname{train}}}$ & -2.007 & -1.584 & -0.921 & -0.411 & \textbf{0.852} & 3.050\\
\textsc{Bait-F}-P & $k_{\mathrm{ll} \to \operatorname{ens}(3) \to \operatorname{sketch}(512) \to \mathcal{X}_{\operatorname{train}}}$ & \textbf{-2.041} & -1.583 & -0.940 & -0.400 & 0.880 & 1.408\\
\textsc{Bait-F}-P & $k_{\mathrm{ll} \to \mathcal{X}_{\operatorname{train}}}$ & -2.003 & -1.545 & -0.900 & -0.362 & 0.891 & 1.149\\
\textsc{Bait-FB}-P & $k_{\mathrm{ll} \to \mathcal{X}_{\operatorname{train}}}$ & -1.998 & -1.541 & -0.895 & -0.357 & 0.888 & 2.731\\
\textsc{Bait-F}-P & $k_{\mathrm{lin} \to \mathcal{X}_{\operatorname{train}}}$ & -1.721 & -1.220 & -0.562 & 0.022 & 1.162 & 0.232 \\
\hline
\textsc{FrankWolfe}-P & $k_{\mathrm{grad} \to \operatorname{sketch}(512) \to \operatorname{acs-rf-hyper}(512)}$ & -1.977 & -1.542 & -0.892 & -0.362 & 0.918 & 0.823\\
\textsc{FrankWolfe}-P & $k_{\mathrm{grad} \to \operatorname{sketch}(512) \to \operatorname{acs-grad} \to \operatorname{sketch}(512)}$ & -1.999 & -1.520 & -0.879 & -0.317 & 1.015 & 0.914\\
\textsc{FrankWolfe}-P & $k_{\mathrm{ll} \to \operatorname{acs-rf}(512)}$ & -1.992 & -1.519 & -0.883 & -0.321 & 1.022 & 0.421\\
\textsc{FrankWolfe}-P & $k_{\mathrm{grad} \to \operatorname{sketch}(512) \to \operatorname{acs-rf}(512)}$ & -1.995 & -1.499 & -0.864 & -0.287 & 1.055 & 0.825\\
\textsc{FrankWolfe}-P & $k_{\mathrm{ll} \to \operatorname{acs-grad} \to \operatorname{sketch}(512)}$ & -1.943 & -1.446 & -0.807 & -0.226 & 1.085 & 0.517\\
\textsc{FrankWolfe}-P & $k_{\mathrm{ll} \to \operatorname{acs-rf-hyper}(512)}$ & -1.937 & -1.439 & -0.793 & -0.225 & 1.016 & 0.421\\
\textsc{FrankWolfe}-P & $k_{\mathrm{grad} \to \operatorname{sketch}(512) \to \mathcal{X}_{\operatorname{train}}}$ & -1.924 & -1.410 & -0.765 & -0.178 & 1.134 & 0.723\\
\textsc{FrankWolfe}-P & $k_{\mathrm{ll} \to \mathcal{X}_{\operatorname{train}}}$ & -1.896 & -1.391 & -0.752 & -0.158 & 1.139 & 0.326 \\
\hline
\textsc{MaxDist}-TP & $k_{\mathrm{ll} \to \operatorname{ens}(3) \to \operatorname{sketch}(512)}$ & -1.948 & -1.518 & -0.860 & -0.348 & 0.905 & 0.655\\
\textsc{MaxDist}-P & $k_{\mathrm{grad} \to \operatorname{sketch}(512) \to \mathcal{X}_{\operatorname{train}}}$ & -1.916 & -1.514 & -0.845 & -0.351 & 0.866 & 0.713\\
\textsc{MaxDist}-TP & $k_{\mathrm{grad} \to \operatorname{sketch}(512)}$ & -1.899 & -1.506 & -0.838 & -0.347 & 0.868 & 0.653\\
\textsc{MaxDist}-TP & $k_{\mathrm{grad}}$ & -1.894 & -1.503 & -0.834 & -0.342 & 0.866 & 2.347\\
\textsc{MaxDist}-TP & $k_{\mathrm{grad} \to \operatorname{ens}(3) \to \operatorname{sketch}(512)}$ & -1.889 & -1.498 & -0.831 & -0.342 & 0.871 & 0.743\\
\textsc{MaxDist}-TP & $k_{\mathrm{ll}}$ & -1.924 & -1.491 & -0.832 & -0.307 & 0.927 & 0.621\\
\textsc{MaxDist}-P & $k_{\mathrm{grad} \to \operatorname{sketch}(512) \to \operatorname{acs-rf}(512)}$ & -1.893 & -1.491 & -0.819 & -0.322 & 0.891 & 0.810\\
\textsc{MaxDist}-P & $k_{\mathrm{grad} \to \operatorname{sketch}(512) \to \operatorname{acs-grad}}$ & -1.873 & -1.477 & -0.802 & -0.312 & 0.888 & 0.832\\
\textsc{MaxDist}-P & $k_{\mathrm{grad} \to \operatorname{sketch}(512) \to \operatorname{acs-rf-hyper}(512)}$ & -1.867 & -1.472 & -0.798 & -0.306 & 0.895 & 0.811\\
\textsc{MaxDist}-P & $k_{\mathrm{ll} \to \mathcal{X}_{\operatorname{train}}}$ & -1.889 & -1.459 & -0.807 & -0.283 & 0.947 & 0.309\\
\textsc{MaxDist}-P & $k_{\mathrm{ll} \to \operatorname{acs-rf}(512)}$ & -1.863 & -1.430 & -0.777 & -0.247 & 0.980 & 0.410\\
\textsc{MaxDist}-TP & $k_{\mathrm{lin}}$ & -1.888 & -1.398 & -0.749 & -0.172 & 1.034 & 0.242\\
\textsc{MaxDist}-TP & $k_{\mathrm{nngp}}$ & -1.876 & -1.386 & -0.735 & -0.159 & 1.038 & 1.315 \\
\hline
\textsc{KMeansPP}-TP & $k_{\mathrm{grad}}$ & -2.025 & -1.569 & -0.927 & -0.378 & 0.966 & 2.357\\
\textsc{KMeansPP}-P & $k_{\mathrm{grad} \to \operatorname{sketch}(512) \to \operatorname{acs-rf}(512)}$ & -2.006 & -1.569 & -0.912 & -0.385 & 0.929 & 0.836\\
\textsc{KMeansPP}-TP & $k_{\mathrm{grad} \to \operatorname{ens}(3) \to \operatorname{sketch}(512)}$ & -2.025 & -1.569 & -0.926 & -0.377 & 0.967 & 0.754\\
\textsc{KMeansPP}-TP & $k_{\mathrm{grad} \to \operatorname{sketch}(512)}$ & -2.023 & -1.567 & -0.925 & -0.376 & 0.967 & 0.663\\
\textsc{KMeansPP}-P & $k_{\mathrm{grad} \to \operatorname{sketch}(512) \to \operatorname{acs-grad}}$ & -2.008 & -1.558 & -0.905 & -0.366 & 0.979 & 0.859\\
\textsc{KMeansPP}-P & $k_{\mathrm{grad} \to \operatorname{sketch}(512) \to \operatorname{acs-rf-hyper}(512)}$ & -1.994 & -1.554 & -0.899 & -0.370 & 0.957 & 0.836\\
\textsc{KMeansPP}-P & $k_{\mathrm{grad} \to \operatorname{sketch}(512) \to \mathcal{X}_{\operatorname{train}}}$ & -2.020 & -1.549 & -0.905 & -0.348 & 0.997 & 0.738\\
\textsc{KMeansPP}-P & $k_{\mathrm{ll} \to \mathcal{X}_{\operatorname{train}}}$ & -2.007 & -1.530 & -0.895 & -0.329 & 1.008 & 0.335\\
\textsc{KMeansPP}-P & $k_{\mathrm{ll} \to \operatorname{acs-rf}(512)}$ & -1.986 & -1.529 & -0.889 & -0.339 & 0.977 & 0.435\\
\textsc{KMeansPP}-TP & $k_{\mathrm{ll} \to \operatorname{ens}(3) \to \operatorname{sketch}(512)}$ & -2.020 & -1.522 & -0.890 & -0.317 & 1.014 & 0.666\\
\textsc{KMeansPP}-TP & $k_{\mathrm{ll}}$ & -2.015 & -1.521 & -0.887 & -0.316 & 1.014 & 0.632\\
\textsc{KMeansPP}-TP & $k_{\mathrm{nngp}}$ & -1.969 & -1.446 & -0.817 & -0.215 & 1.072 & 1.319\\
\textsc{KMeansPP}-TP & $k_{\mathrm{lin}}$ & -1.968 & -1.441 & -0.816 & -0.212 & 1.077 & 0.252 \\
\hline
\textsc{LCMD}-TP (ours) & $k_{\mathrm{grad} \to \operatorname{ens}(3) \to \operatorname{sketch}(512)}$ & -2.040 & \textbf{-1.594} & \textbf{-0.947} & -0.408 & 0.920 & 1.073\\
\textsc{LCMD}-TP (ours) & $k_{\mathrm{grad}}$ & -2.038 & \textbf{-1.594} & -0.946 & -0.408 & 0.908 & 2.714\\
\textsc{LCMD}-TP (ours) & $k_{\mathrm{grad} \to \operatorname{sketch}(512)}$ & -2.033 & -1.590 & -0.941 & -0.404 & 0.917 & 0.981\\
\textsc{LCMD}-TP (ours) & $k_{\mathrm{ll} \to \operatorname{ens}(3) \to \operatorname{sketch}(512)}$ & -2.026 & -1.555 & -0.912 & -0.358 & 0.952 & 0.984\\
\textsc{LCMD}-P (ours) & $k_{\mathrm{grad} \to \operatorname{sketch}(512) \to \operatorname{acs-rf}(512)}$ & -1.992 & -1.547 & -0.905 & -0.369 & 0.974 & 0.875\\
\textsc{LCMD}-TP (ours) & $k_{\mathrm{ll}}$ & -2.017 & -1.544 & -0.902 & -0.346 & 0.961 & 0.948\\
\textsc{LCMD}-P (ours) & $k_{\mathrm{grad} \to \operatorname{sketch}(512) \to \mathcal{X}_{\operatorname{train}}}$ & -1.940 & -1.534 & -0.869 & -0.371 & 0.864 & 0.774\\
\textsc{LCMD}-P (ours) & $k_{\mathrm{ll} \to \operatorname{acs-rf}(512)}$ & -1.969 & -1.513 & -0.885 & -0.331 & 1.010 & 0.473\\
\textsc{LCMD}-P (ours) & $k_{\mathrm{grad} \to \operatorname{sketch}(512) \to \operatorname{acs-rf-hyper}(512)}$ & -1.898 & -1.501 & -0.827 & -0.336 & 0.878 & 0.876\\
\textsc{LCMD}-P (ours) & $k_{\mathrm{grad} \to \operatorname{sketch}(512) \to \operatorname{acs-grad}}$ & -1.896 & -1.496 & -0.827 & -0.334 & 0.880 & 0.899\\
\textsc{LCMD}-P (ours) & $k_{\mathrm{ll} \to \mathcal{X}_{\operatorname{train}}}$ & -1.920 & -1.485 & -0.840 & -0.311 & 0.935 & 0.372\\
\textsc{LCMD}-TP (ours) & $k_{\mathrm{nngp}}$ & -1.960 & -1.447 & -0.811 & -0.216 & 1.043 & 1.587\\
\textsc{LCMD}-TP (ours) & $k_{\mathrm{lin}}$ & -1.958 & -1.445 & -0.810 & -0.215 & 1.036 & 0.525
\end{tabular}
\captionsetup{font=footnotesize}
\caption{This table shows the performance and runtime of different combinations of selection methods and kernels. The columns labeled \quot{MAE} to \quot{MAXE} contain averaged logarithmic values of the corresponding metrics, averaged over all data sets, repetitions, and BMAL steps. For ensembled kernels, the metrics of the individual ensemble members were averaged to isolate the effect of ensembling on the batch selection. Runtimes were measured at one of the 20 repetitions where only one process was started per GPU, and are averaged over all BMAL steps and data sets. The employed hardware is described in \Cref{sec:appendix:experiments}.} \label{table:all_algs}
\end{table}

\begin{table}[tbp]
\centering
\fontsize{4}{5}\selectfont
\captionsetup{font=footnotesize}
\begin{tabular}{cccccccc}
Selection method & Kernel & MAE & RMSE & 95\% & 99\% & MAXE & avg.\ time [$s$]\\
\hline
\textsc{Random} & --- & -1.923 & -1.406 & -0.774 & -0.178 & 1.119 & 0.001 \\
\hline
\textsc{MaxDiag} & $k_{\mathrm{grad} \to \operatorname{ens}(3) \to \operatorname{sketch}(512) \to \mathcal{X}_{\operatorname{train}}}$ & -1.753 & -1.358 & -0.690 & -0.205 & 0.956 & 1.393\\
\textsc{MaxDiag} & $k_{\mathrm{grad} \to \operatorname{sketch}(512) \to \mathcal{X}_{\operatorname{train}}}$ & -1.751 & -1.351 & -0.682 & -0.192 & 0.961 & 0.551\\
\textsc{MaxDiag} & $k_{\mathrm{grad} \to \operatorname{sketch}(512) \to \operatorname{acs-rf}(512)}$ & -1.749 & -1.350 & -0.680 & -0.189 & 0.962 & 0.651\\
\textsc{MaxDiag} & $k_{\mathrm{grad} \to \mathcal{X}_{\operatorname{train}}}$ & -1.749 & -1.346 & -0.677 & -0.182 & 0.967 & 4.107\\
\textsc{MaxDiag} & $k_{\mathrm{grad} \to \operatorname{sketch}(512) \to \operatorname{acs-grad}}$ & -1.735 & -1.342 & -0.671 & -0.184 & 0.961 & 0.553\\
\textsc{MaxDiag} & $k_{\mathrm{nngp} \to \mathcal{X}_{\operatorname{train}}}$ & -1.768 & -1.301 & -0.634 & -0.095 & 1.054 & 2.584\\
\textsc{MaxDiag} & $k_{\mathrm{ll} \to \mathcal{X}_{\operatorname{train}}}$ & -1.744 & -1.300 & -0.631 & -0.100 & 1.037 & 0.142\\
\textsc{MaxDiag} & $k_{\mathrm{ll} \to \operatorname{acs-rf}(512)}$ & -1.713 & -1.286 & -0.606 & -0.084 & 1.052 & 0.031\\
\textsc{MaxDiag} & $k_{\mathrm{grad} \to \operatorname{sketch}(512) \to \operatorname{acs-rf-hyper}(512)}$ & -1.675 & -1.279 & -0.604 & -0.111 & 1.007 & 0.649\\
\textsc{MaxDiag} & $k_{\mathrm{lin} \to \mathcal{X}_{\operatorname{train}}}$ & -1.622 & -1.158 & -0.483 & 0.068 & 1.156 & 0.007 \\
\hline
\textsc{MaxDet}-TP & $k_{\mathrm{grad} \to \operatorname{scale}(\mathcal{X}_{\operatorname{train}})}$ & -1.955 & -1.546 & -0.895 & -0.400 & 0.834 & 8.914\\
\textsc{MaxDet}-P & $k_{\mathrm{grad} \to \operatorname{ens}(3) \to \operatorname{sketch}(512) \to \mathcal{X}_{\operatorname{train}}}$ & -1.913 & -1.523 & -0.867 & -0.390 & 0.843 & 1.607\\
\textsc{MaxDet}-P & $k_{\mathrm{grad} \to \operatorname{sketch}(512) \to \mathcal{X}_{\operatorname{train}}}$ & -1.915 & -1.520 & -0.864 & -0.381 & 0.846 & 0.770\\
\textsc{MaxDet}-P & $k_{\mathrm{grad} \to \operatorname{sketch}(512) \to \operatorname{acs-rf}(512)}$ & -1.910 & -1.514 & -0.857 & -0.372 & 0.855 & 0.868\\
\textsc{MaxDet}-P & $k_{\mathrm{grad} \to \operatorname{sketch}(512) \to \operatorname{acs-grad}}$ & -1.886 & -1.503 & -0.840 & -0.370 & 0.846 & 0.890\\
\textsc{MaxDet}-P & $k_{\mathrm{ll} \to \operatorname{ens}(3) \to \operatorname{sketch}(512) \to \mathcal{X}_{\operatorname{train}}}$ & -1.921 & -1.485 & -0.823 & -0.304 & 0.894 & 0.675\\
\textsc{MaxDet}-P & $k_{\mathrm{grad} \to \operatorname{sketch}(512) \to \operatorname{acs-rf-hyper}(512)}$ & -1.847 & -1.468 & -0.804 & -0.335 & 0.863 & 0.871\\
\textsc{MaxDet}-P & $k_{\mathrm{ll} \to \mathcal{X}_{\operatorname{train}}}$ & -1.910 & -1.467 & -0.807 & -0.281 & 0.913 & 0.371\\
\textsc{MaxDet}-P & $k_{\mathrm{ll} \to \operatorname{acs-rf}(512)}$ & -1.905 & -1.459 & -0.801 & -0.272 & 0.924 & 0.470\\
\textsc{MaxDet}-TP & $k_{\mathrm{nngp} \to \operatorname{scale}(\mathcal{X}_{\operatorname{train}})}$ & -1.869 & -1.401 & -0.737 & -0.198 & 0.980 & 5.456\\
\textsc{MaxDet}-P & $k_{\mathrm{lin} \to \mathcal{X}_{\operatorname{train}}}$ & -1.714 & -1.239 & -0.574 & -0.020 & 1.118 & 0.170 \\
\hline
\textsc{Bait-F}-P & $k_{\mathrm{grad} \to \operatorname{ens}(3) \to \operatorname{sketch}(512) \to \mathcal{X}_{\operatorname{train}}}$ & -2.025 & -1.594 & -0.948 & \textbf{-0.436} & 0.855 & 2.342\\
\textsc{Bait-F}-P & $k_{\mathrm{grad} \to \operatorname{sketch}(512) \to \mathcal{X}_{\operatorname{train}}}$ & -2.023 & -1.588 & -0.943 & -0.429 & 0.859 & 1.505\\
\textsc{Bait-FB}-P & $k_{\mathrm{grad} \to \operatorname{sketch}(512) \to \mathcal{X}_{\operatorname{train}}}$ & -2.019 & -1.585 & -0.938 & -0.426 & 0.853 & 3.051\\
\textsc{Bait-F}-P & $k_{\mathrm{ll} \to \operatorname{ens}(3) \to \operatorname{sketch}(512) \to \mathcal{X}_{\operatorname{train}}}$ & -2.002 & -1.541 & -0.885 & -0.345 & 0.894 & 1.407\\
\textsc{Bait-F}-P & $k_{\mathrm{ll} \to \mathcal{X}_{\operatorname{train}}}$ & -1.989 & -1.524 & -0.869 & -0.324 & 0.913 & 1.147\\
\textsc{Bait-FB}-P & $k_{\mathrm{ll} \to \mathcal{X}_{\operatorname{train}}}$ & -1.989 & -1.522 & -0.868 & -0.323 & 0.917 & 2.733\\
\textsc{Bait-F}-P & $k_{\mathrm{lin} \to \mathcal{X}_{\operatorname{train}}}$ & -1.742 & -1.264 & -0.601 & -0.045 & 1.110 & 0.233 \\
\hline
\textsc{FrankWolfe}-P & $k_{\mathrm{grad} \to \operatorname{sketch}(512) \to \operatorname{acs-rf-hyper}(512)}$ & -1.961 & -1.529 & -0.882 & -0.363 & 0.922 & 0.822\\
\textsc{FrankWolfe}-P & $k_{\mathrm{grad} \to \operatorname{sketch}(512) \to \operatorname{acs-rf}(512)}$ & -1.977 & -1.502 & -0.864 & -0.299 & 1.015 & 0.824\\
\textsc{FrankWolfe}-P & $k_{\mathrm{grad} \to \operatorname{sketch}(512) \to \operatorname{acs-grad} \to \operatorname{sketch}(512)}$ & -1.960 & -1.481 & -0.840 & -0.279 & 1.049 & 0.915\\
\textsc{FrankWolfe}-P & $k_{\mathrm{ll} \to \operatorname{acs-rf}(512)}$ & -1.959 & -1.467 & -0.830 & -0.255 & 1.031 & 0.422\\
\textsc{FrankWolfe}-P & $k_{\mathrm{ll} \to \operatorname{acs-rf-hyper}(512)}$ & -1.923 & -1.437 & -0.797 & -0.227 & 1.016 & 0.420\\
\textsc{FrankWolfe}-P & $k_{\mathrm{ll} \to \operatorname{acs-grad} \to \operatorname{sketch}(512)}$ & -1.911 & -1.413 & -0.771 & -0.191 & 1.092 & 0.519\\
\textsc{FrankWolfe}-P & $k_{\mathrm{grad} \to \operatorname{sketch}(512) \to \mathcal{X}_{\operatorname{train}}}$ & -1.881 & -1.384 & -0.736 & -0.157 & 1.129 & 0.725\\
\textsc{FrankWolfe}-P & $k_{\mathrm{ll} \to \mathcal{X}_{\operatorname{train}}}$ & -1.859 & -1.358 & -0.711 & -0.128 & 1.143 & 0.324 \\
\hline
\textsc{MaxDist}-TP & $k_{\mathrm{grad} \to \operatorname{ens}(3) \to \operatorname{sketch}(512)}$ & -1.919 & -1.540 & -0.889 & -0.416 & \textbf{0.818} & 0.744\\
\textsc{MaxDist}-TP & $k_{\mathrm{grad}}$ & -1.919 & -1.539 & -0.888 & -0.413 & 0.823 & 2.351\\
\textsc{MaxDist}-TP & $k_{\mathrm{grad} \to \operatorname{sketch}(512)}$ & -1.918 & -1.536 & -0.886 & -0.409 & 0.820 & 0.652\\
\textsc{MaxDist}-TP & $k_{\mathrm{ll} \to \operatorname{ens}(3) \to \operatorname{sketch}(512)}$ & -1.958 & -1.526 & -0.879 & -0.360 & 0.876 & 0.655\\
\textsc{MaxDist}-P & $k_{\mathrm{grad} \to \operatorname{sketch}(512) \to \mathcal{X}_{\operatorname{train}}}$ & -1.913 & -1.518 & -0.861 & -0.378 & 0.849 & 0.712\\
\textsc{MaxDist}-TP & $k_{\mathrm{ll}}$ & -1.951 & -1.515 & -0.870 & -0.346 & 0.888 & 0.621\\
\textsc{MaxDist}-P & $k_{\mathrm{grad} \to \operatorname{sketch}(512) \to \operatorname{acs-rf}(512)}$ & -1.906 & -1.508 & -0.851 & -0.364 & 0.865 & 0.811\\
\textsc{MaxDist}-P & $k_{\mathrm{grad} \to \operatorname{sketch}(512) \to \operatorname{acs-grad}}$ & -1.890 & -1.507 & -0.844 & -0.373 & 0.843 & 0.833\\
\textsc{MaxDist}-P & $k_{\mathrm{ll} \to \mathcal{X}_{\operatorname{train}}}$ & -1.913 & -1.468 & -0.810 & -0.282 & 0.917 & 0.311\\
\textsc{MaxDist}-P & $k_{\mathrm{grad} \to \operatorname{sketch}(512) \to \operatorname{acs-rf-hyper}(512)}$ & -1.846 & -1.462 & -0.802 & -0.327 & 0.868 & 0.810\\
\textsc{MaxDist}-P & $k_{\mathrm{ll} \to \operatorname{acs-rf}(512)}$ & -1.899 & -1.452 & -0.794 & -0.264 & 0.937 & 0.409\\
\textsc{MaxDist}-TP & $k_{\mathrm{lin}}$ & -1.905 & -1.435 & -0.777 & -0.233 & 0.982 & 0.238\\
\textsc{MaxDist}-TP & $k_{\mathrm{nngp}}$ & -1.897 & -1.426 & -0.768 & -0.224 & 0.989 & 1.286 \\
\hline
\textsc{KMeansPP}-TP & $k_{\mathrm{grad} \to \operatorname{ens}(3) \to \operatorname{sketch}(512)}$ & -2.022 & -1.566 & -0.931 & -0.382 & 0.971 & 0.755\\
\textsc{KMeansPP}-TP & $k_{\mathrm{grad}}$ & -2.023 & -1.566 & -0.932 & -0.381 & 0.969 & 2.360\\
\textsc{KMeansPP}-TP & $k_{\mathrm{grad} \to \operatorname{sketch}(512)}$ & -2.022 & -1.566 & -0.932 & -0.381 & 0.969 & 0.662\\
\textsc{KMeansPP}-P & $k_{\mathrm{grad} \to \operatorname{sketch}(512) \to \operatorname{acs-rf}(512)}$ & -2.001 & -1.558 & -0.914 & -0.381 & 0.920 & 0.837\\
\textsc{KMeansPP}-P & $k_{\mathrm{grad} \to \operatorname{sketch}(512) \to \operatorname{acs-grad}}$ & -2.006 & -1.542 & -0.902 & -0.351 & 0.972 & 0.858\\
\textsc{KMeansPP}-P & $k_{\mathrm{grad} \to \operatorname{sketch}(512) \to \operatorname{acs-rf-hyper}(512)}$ & -1.987 & -1.541 & -0.898 & -0.363 & 0.972 & 0.836\\
\textsc{KMeansPP}-P & $k_{\mathrm{grad} \to \operatorname{sketch}(512) \to \mathcal{X}_{\operatorname{train}}}$ & -2.006 & -1.531 & -0.895 & -0.334 & 0.996 & 0.736\\
\textsc{KMeansPP}-TP & $k_{\mathrm{ll} \to \operatorname{ens}(3) \to \operatorname{sketch}(512)}$ & -2.011 & -1.525 & -0.894 & -0.325 & 1.000 & 0.665\\
\textsc{KMeansPP}-TP & $k_{\mathrm{ll}}$ & -2.007 & -1.523 & -0.889 & -0.320 & 1.010 & 0.632\\
\textsc{KMeansPP}-P & $k_{\mathrm{ll} \to \operatorname{acs-rf}(512)}$ & -1.982 & -1.501 & -0.861 & -0.299 & 0.985 & 0.434\\
\textsc{KMeansPP}-P & $k_{\mathrm{ll} \to \mathcal{X}_{\operatorname{train}}}$ & -1.980 & -1.484 & -0.849 & -0.275 & 1.020 & 0.335\\
\textsc{KMeansPP}-TP & $k_{\mathrm{nngp}}$ & -1.968 & -1.465 & -0.829 & -0.246 & 1.051 & 1.298\\
\textsc{KMeansPP}-TP & $k_{\mathrm{lin}}$ & -1.966 & -1.464 & -0.827 & -0.248 & 1.057 & 0.249 \\
\hline
\textsc{LCMD}-TP (ours) & $k_{\mathrm{grad} \to \operatorname{ens}(3) \to \operatorname{sketch}(512)}$ & \textbf{-2.041} & \textbf{-1.600} & \textbf{-0.961} & -0.425 & 0.901 & 1.071\\
\textsc{LCMD}-TP (ours) & $k_{\mathrm{grad}}$ & -2.038 & -1.598 & -0.958 & -0.423 & 0.899 & 2.712\\
\textsc{LCMD}-TP (ours) & $k_{\mathrm{grad} \to \operatorname{sketch}(512)}$ & -2.038 & -1.597 & -0.957 & -0.422 & 0.898 & 0.977\\
\textsc{LCMD}-TP (ours) & $k_{\mathrm{ll} \to \operatorname{ens}(3) \to \operatorname{sketch}(512)}$ & -2.027 & -1.554 & -0.916 & -0.359 & 0.941 & 0.979\\
\textsc{LCMD}-TP (ours) & $k_{\mathrm{ll}}$ & -2.022 & -1.550 & -0.911 & -0.357 & 0.944 & 0.946\\
\textsc{LCMD}-P (ours) & $k_{\mathrm{grad} \to \operatorname{sketch}(512) \to \operatorname{acs-rf}(512)}$ & -1.982 & -1.532 & -0.889 & -0.348 & 0.945 & 0.874\\
\textsc{LCMD}-P (ours) & $k_{\mathrm{grad} \to \operatorname{sketch}(512) \to \mathcal{X}_{\operatorname{train}}}$ & -1.930 & -1.531 & -0.880 & -0.392 & 0.847 & 0.774\\
\textsc{LCMD}-P (ours) & $k_{\mathrm{grad} \to \operatorname{sketch}(512) \to \operatorname{acs-grad}}$ & -1.907 & -1.520 & -0.862 & -0.389 & 0.849 & 0.898\\
\textsc{LCMD}-P (ours) & $k_{\mathrm{grad} \to \operatorname{sketch}(512) \to \operatorname{acs-rf-hyper}(512)}$ & -1.905 & -1.508 & -0.856 & -0.371 & 0.870 & 0.877\\
\textsc{LCMD}-TP (ours) & $k_{\mathrm{lin}}$ & -1.971 & -1.484 & -0.836 & -0.274 & 0.991 & 0.517\\
\textsc{LCMD}-TP (ours) & $k_{\mathrm{nngp}}$ & -1.970 & -1.482 & -0.834 & -0.271 & 0.996 & 1.560\\
\textsc{LCMD}-P (ours) & $k_{\mathrm{ll} \to \mathcal{X}_{\operatorname{train}}}$ & -1.928 & -1.481 & -0.825 & -0.296 & 0.919 & 0.370\\
\textsc{LCMD}-P (ours) & $k_{\mathrm{ll} \to \operatorname{acs-rf}(512)}$ & -1.963 & -1.476 & -0.840 & -0.272 & 0.986 & 0.475
\end{tabular}
\caption{This table shows the performance and runtime of different combinations of selection methods and kernels for the SiLU activation function. The columns labeled \quot{MAE} to \quot{MAXE} contain averaged logarithmic values of the corresponding metrics, averaged over all data sets, repetitions and BMAL steps. For ensembled kernels, the metrics of the individual ensemble members were averaged to isolate the effect of ensembling on the batch selection. Runtimes were measured at one of the 20 repetitions where only one process was started per GPU, and are averaged over all BMAL steps and data sets. The employed hardware is described in \Cref{sec:appendix:experiments}.} \label{table:all_algs_silu}
\end{table}

\begin{figure}[tbp]
\centering
\includegraphics[height=0.75\textheight]{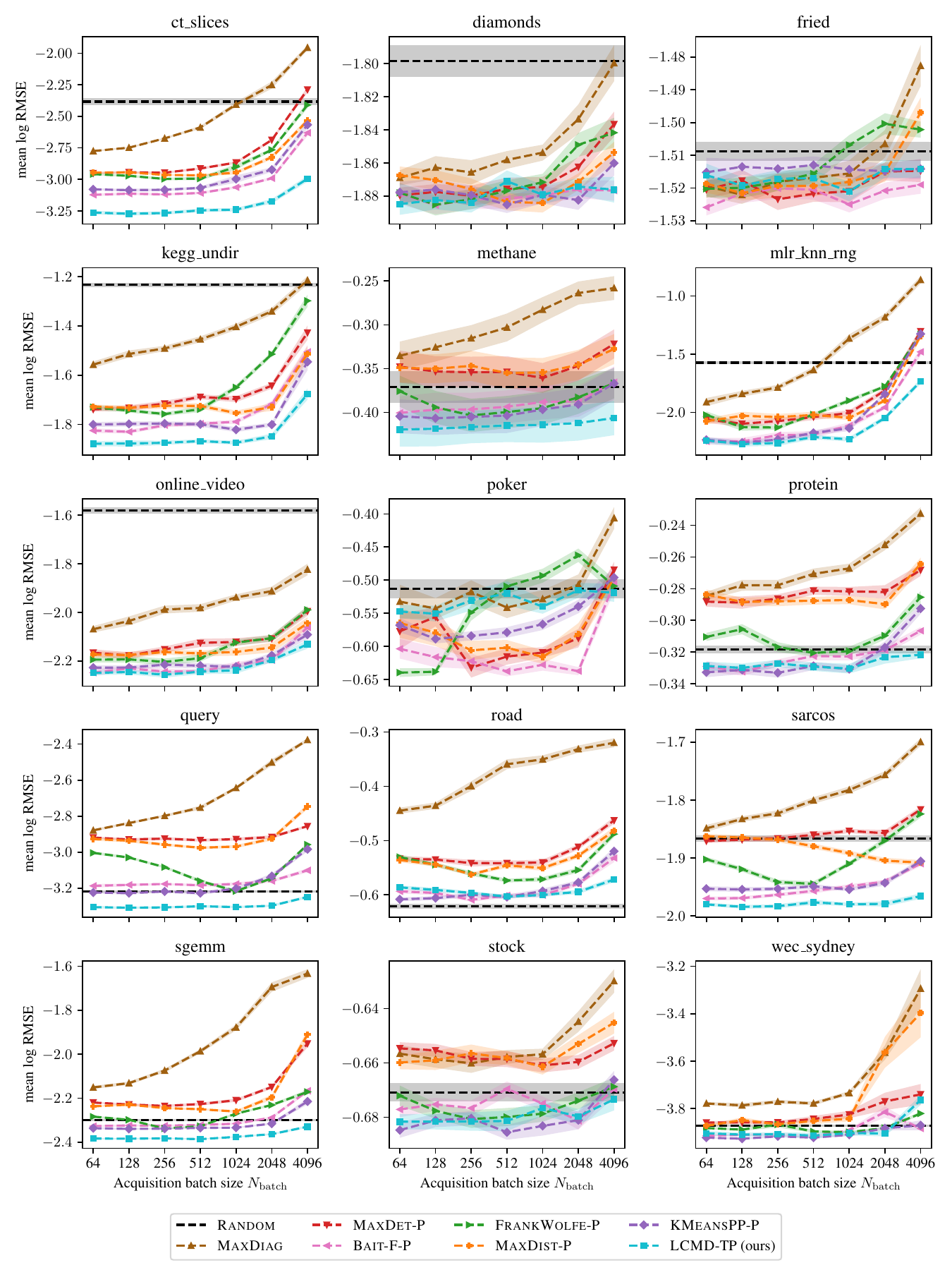}
\caption{%
This figure shows how much the final accuracy of different BMDAL methods deteriorates on individual data sets when fewer BMAL steps with larger batch sizes are used. Specifically, we use different selection methods with the corresponding kernels from \Cref{table:selected_kernels}, starting with $\Ntrain = 256$ and then performing $2^m$ BMAL steps with batch size $\Nbatch = 2^{12-m}$ for $m \in \{0, \hdots, 6\}$, such that the final training set size is $4352$ in each case. The plots show the final logarithmic error metric, averaged over all repetitions. Note that the performance of \Random{} selection does not depend on $\Nbatch$ but only on the final training set size, hence it is shown as a constant line here. The shaded area corresponds to one estimated standard deviation of the mean estimator, cf.\ \Cref{sec:appendix:results}.} \label{fig:batch_sizes_individual_rmse}
\end{figure}

\begin{figure}[tbp]
\centering
\includegraphics[height=0.82\textheight]{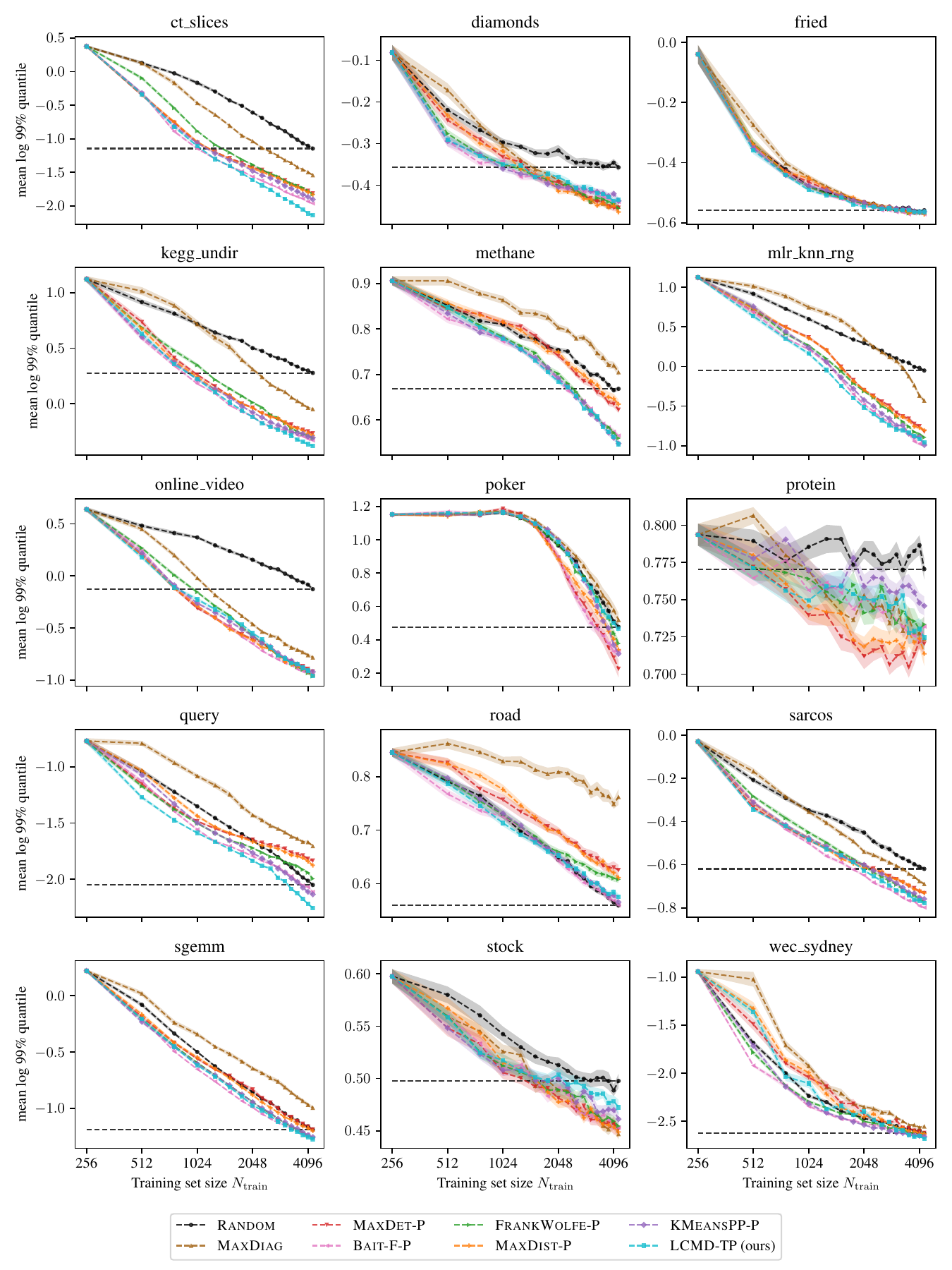}
\caption{%
This figure shows how fast the 99\% quantile decreases during BMAL on the individual benchmark data sets for different selection methods and their corresponding kernels from \Cref{table:selected_kernels}.
Specifically, the plots above show the logarithmic 99\% quantile between each BMAL step for $\Nbatch = 256$, averaged over all repetitions.
The black horizontal dashed line corresponds to the final performance of \Random{} at $\Ntrain = 4352$. The shaded area corresponds to one estimated standard deviation of the mean estimator, cf.\ \Cref{sec:appendix:results}.} \label{fig:learning_curves_individual_q99}
\end{figure}

\begin{figure}[tbp]
\centering
\includegraphics[width=\textwidth]{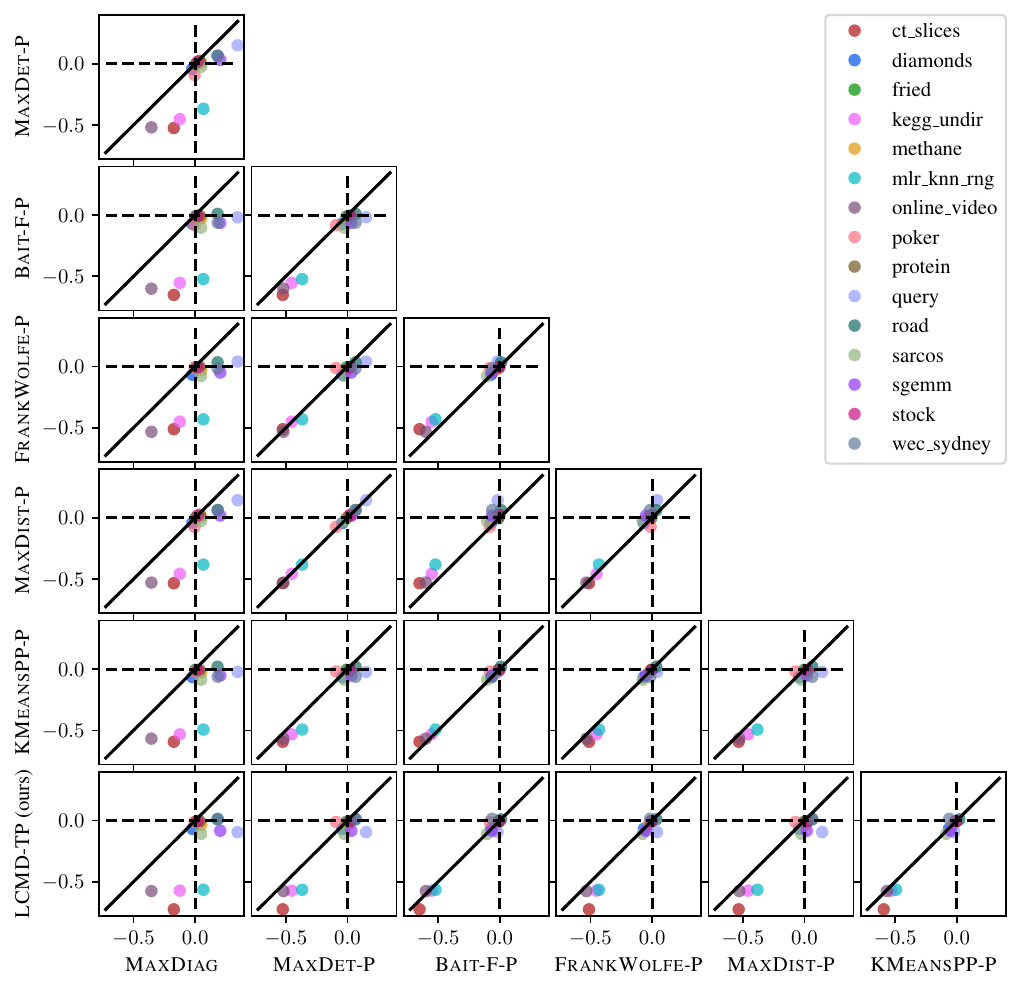}
\caption{Each subplot shows the errors of two selection methods and their corresponding selected kernels from \Cref{table:selected_kernels}. Specifically, the coordinates correspond to the mean log RMSE of the method on the data set minus the mean log RMSE of \textsc{Random} selection on the same data set. Hence, the method on the $x$ axis has a lower mean log RMSE than \Random{} on a data set if the corresponding point is left of the vertical dashed line, and it has a lower mean log RMSE than the method on the $y$ axis if the corresponding point is left of the diagonal line. Similarly, the method on the $y$ axis has a lower mean log RMSE than \Random{} on a data set if the corresponding point is below the horizontal dashed line, and it has a lower mean log RMSE than the method on the $x$ axis if the corresponding point is below the diagonal line.} \label{fig:correlation_plot_rmse}
\end{figure}

\begin{figure}[tbp]
\centering
\includegraphics[width=\textwidth]{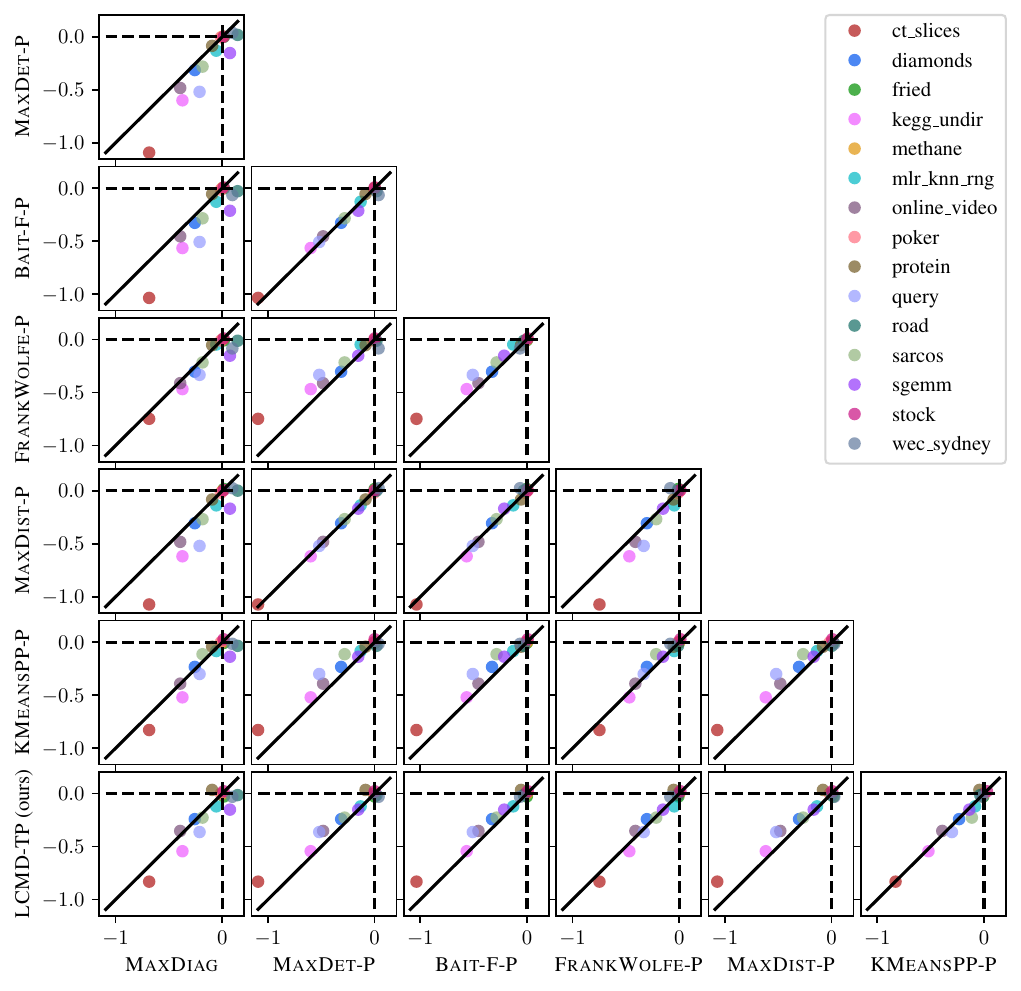}
\caption{Each subplot shows the errors of two selection methods and their corresponding selected kernels from \Cref{table:selected_kernels}. Specifically, the coordinates correspond to the mean log MAXE of the method on the data set minus the mean log MAXE of \textsc{Random} selection on the same data set. Hence, the method on the $x$ axis has a lower mean log MAXE than \textsc{Random} on a data set if the corresponding point is left of the vertical dashed line, and it has a lower mean log MAXE than the method on the $y$ axis if the corresponding point is left of the diagonal line.} \label{fig:correlation_plot_maxe}
\end{figure}

\ifnotinthesis{%
\clearpage
\bibliography{2021_batch_al}}
\end{appendixenv}

\end{document}